\setlist{nolistsep,leftmargin=*}
\titlespacing{\section}{0pt}{*0}{*0}
\titlespacing{\subsection}{0pt}{*0}{*0}
\titlespacing{\subsubsection}{0pt}{*0}{*0}
\newcommand{\newtext}[1]{{\textcolor{black}{#1}}}
\newcommand{\app}{Suppl. Section~}
\newtheorem{theorem}{Theorem}
\newtheorem{definition}{Definition}
\newtheorem{lemma}{Lemma}
\title{}
\begin{document}

\twocolumn[
\icmltitle{Alleviating Privacy Attacks via Causal Learning}
\begin{icmlauthorlist}
  \icmlauthor{Shruti Tople}{msr}
  \icmlauthor{Amit Sharma}{msr}
	\icmlauthor{Aditya V. Nori}{msr}
\end{icmlauthorlist}
\icmlaffiliation{msr}{Microsoft Research}

\icmlcorrespondingauthor{Shruti Tople}{shruti.tople@microsoft.com}
\icmlcorrespondingauthor{Amit Sharma}{amshar@microsoft.com}
\icmlkeywords{Machine Learning, ICML}
\vskip 0.3in

]

\printAffiliationsAndNotice{}

\begin{abstract}
Machine learning models, especially deep neural networks have been shown to be  susceptible to privacy attacks such as \textit{membership inference} where an adversary can detect whether a data point was used for training a black-box model. Such privacy risks are exacerbated when a model's predictions are used on an unseen data distribution. 
To alleviate privacy attacks, we demonstrate  the benefit of predictive models  that are based on the causal relationships between input features and the outcome. 
We first show that models learnt using causal structure generalize better to unseen data, especially on data from different distributions than the train distribution. Based on this generalization property, we establish a theoretical link between causality and privacy: compared to associational models, causal models provide stronger differential privacy guarantees and are more robust to membership inference attacks. Experiments on simulated Bayesian networks and the colored-MNIST dataset show that 
 associational models exhibit upto 80\% attack accuracy under different test distributions and sample sizes whereas causal models exhibit attack accuracy close to a random guess.

\end{abstract}


\section{Introduction}
Machine learning algorithms, especially deep neural networks (DNNs) have found diverse applications in various fields such as healthcare~\citep{esteva2019guide}, gaming~\citep{mnih2013playing}, and finance~\citep{tsantekidis2017using,fischer2018deep}. However, a line of recent research has shown that 
deep learning algorithms are susceptible to privacy attacks that leak information about the training dataset~\citep{fredrikson2015model, rahman2018membership,song2018the,hayes2017logan}. Particularly, one such attack called \emph{membership inference} reveals whether a  data sample was present in the training dataset~\citep{shokri2017membership}. The privacy risks due to membership inference elevate when the DNNs are trained on sensitive data such as in healthcare applications. For example, HIV patients would not want to reveal their participation in the training dataset. 

Membership inference attacks are shown to exploit overfitting of the model on the training dataset~\citep{yeom2018privacy}. Existing defenses propose the use of generalization techniques such as adding learning rate decay, dropout or using adversarial regularization techniques~\citep{nasr2018machine,salem2018ml}. 
All these approaches assume that the test and the training data belong to the same distribution. In practice, a model trained using data from one distribution is often used on  a (slightly) different distribution.  For example, hospitals in one region may train a model and share it with hospitals in different regions.  However, generalizing to a new context is a challenge for any machine learning model. We extend the scope of membership inference attacks to different distributions and show that the risk increases for associational models as the test distribution is changed. 

\textbf{Our Approach.} To alleviate privacy attacks, we propose using models that depend on the causal relationship between input features and the output.  Causal learning has been used to optimize for fairness and explainability properties of the predicted output~\citep{kusner2017counterfactual,nabi2018fair,datta2016algorithmic}. However, the connection of causal learning to enhancing privacy of models is yet unexplored. To the best of our knowledge, we provide the  first analysis of privacy benefits of causal models. 
By definition, causal relationships are invariant across input distributions~\citep{peters2016causal}, and therefore  predictions of \emph{causal models}  should be
independent of the observed data distribution, let alone the observed dataset. Thus, causal models  generalize better even with changes in the data distribution.

In this paper, we show that the generalizability property of causal models directly ensures better privacy guarantees for the input data. Concretely, we prove that with reasonable assumptions, {\bf a causal model always provides stronger (i.e., smaller $\epsilon$ value) differential privacy  guarantees than an associational model trained on the same features and with the same amount of added noise}. 
    Consequently, we show that {membership attacks  are ineffective (almost a random guess) on causal models trained on infinite samples}. 

Empirical attack accuracies on four different tabular datasets and the colored MNIST image dataset~\cite{arjovsky2019invariant} confirm our theoretical claims. On tabular data, we find that $60$K training samples are sufficient to reduce the attack accuracy of a causal model to a random guess. In contrast, membership attack accuracy for neural network-based associational models increases up to $80\%$ as the test distribution  is changed. 
On colored MNIST dataset, we find that attack accuracy for causal model is close to a random guess ($50\%$) compared to $66\%$ for an associational model under a shift in the data distribution.

To summarize, our main contributions include:
\begin{itemize}
\item For the same amount of added noise, models learned using causal structure provide stronger $\epsilon$-differential privacy guarantees than corresponding associational models.
\item Models trained using causal features are {\em provably} more robust to membership inference attacks than typical associational models such as neural networks. 
\item On the colored MNIST dataset and simulated Bayesian Network datasets where the test distribution may not be the same as the training distribution, the membership inference attack accuracy of causal models is close to  a ``random guess'' (i.e., $50\%$)  whereas associational models exhibit $ 65$-$80\%$ attack accuracy. 
\end{itemize}

\section{Generalization Property of Causal Models}
\label{sec:causal-properties}

Causal models  generalize well since their output depends on stable, causal relationships between  input features and the outcome instead of  associations between them~\citep{peters2016causal}. Our goal is to study the effect of this generalization property on  privacy of training data. 

\begin{figure}[t]
\centering
        \includegraphics[scale=0.20]{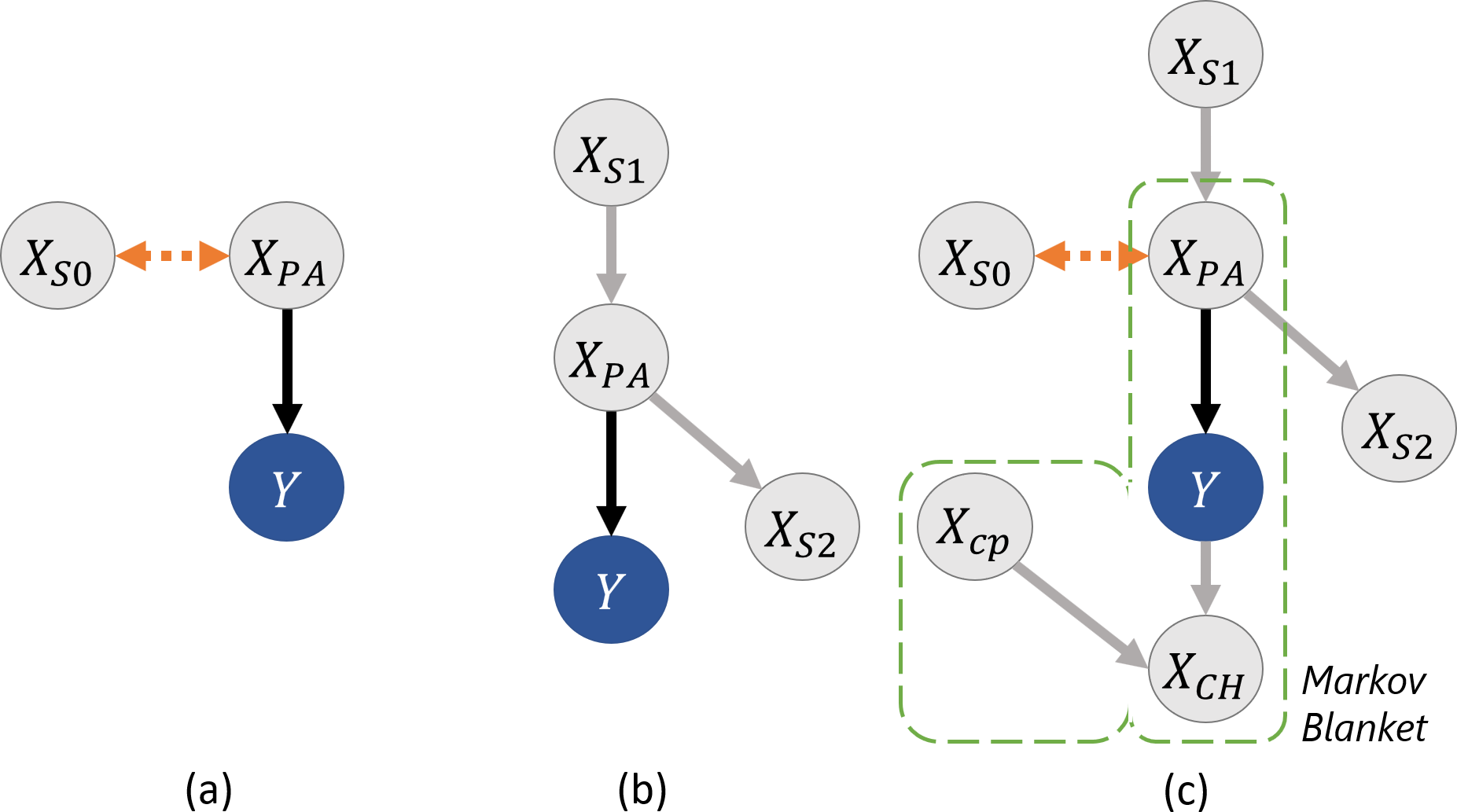}
        \vspace{-0.25cm}
    \caption{ Structural causal model where a  directed edge denotes a causal relationship; dashed bidirectional edges denote correlation. A causal predictive model includes only the parents of $\tt {Y: X_{PA}}$ [(a) and (b)]. Panel (c) shows the Markov Blanket of $\tt Y$.}
        \label{fig:causal-graphs}
\end{figure}

\subsection{Background: Causal Model}
Intuitively, a causal model identifies a subset of features that have a causal relationship with the outcome and learns a function from the subset to the outcome. To construct a causal model, one may use a structural causal graph based on domain knowledge that defines causal features as parents of the outcome under the graph.
Alternatively, one may exploit the strong relevance property from \cite{pellet2008markovb}, use score-based learning algorithms~\citep{scutari2009bnlearn} or recent methods for learning invariant relationships from training datasets from different distributions~\citep{peters2016causal,arjovsky2019invariant,bengio2019meta,mahajan2020}, or learn based on a combination of randomized experiments and observed data. 
Note that this is different from training probabilistic graphical models, wherein an edge conveys an associational relationship. Further details on causal models are in~\cite{pearl2009causality,peters2017elements}.
\begin{figure*}
    \begin{subfigure}{.33\textwidth}
        \centering
        \includegraphics[scale=0.15]{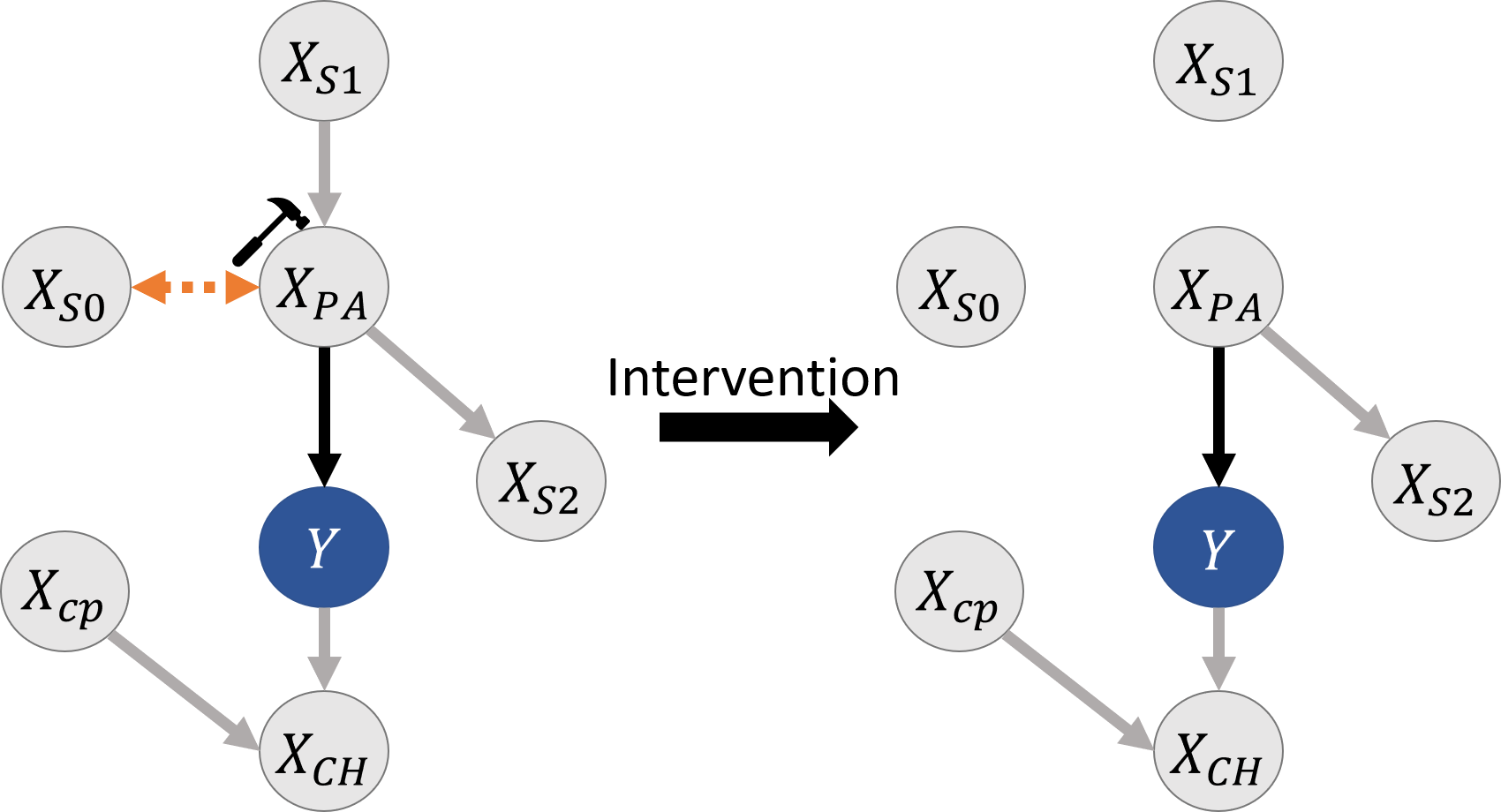}
	    \caption{ }
        \label{fig:intervention_pa}
    \end{subfigure}%
    \hfill
    \begin{subfigure}{.33\textwidth}
        \centering
        \includegraphics[scale=0.15]{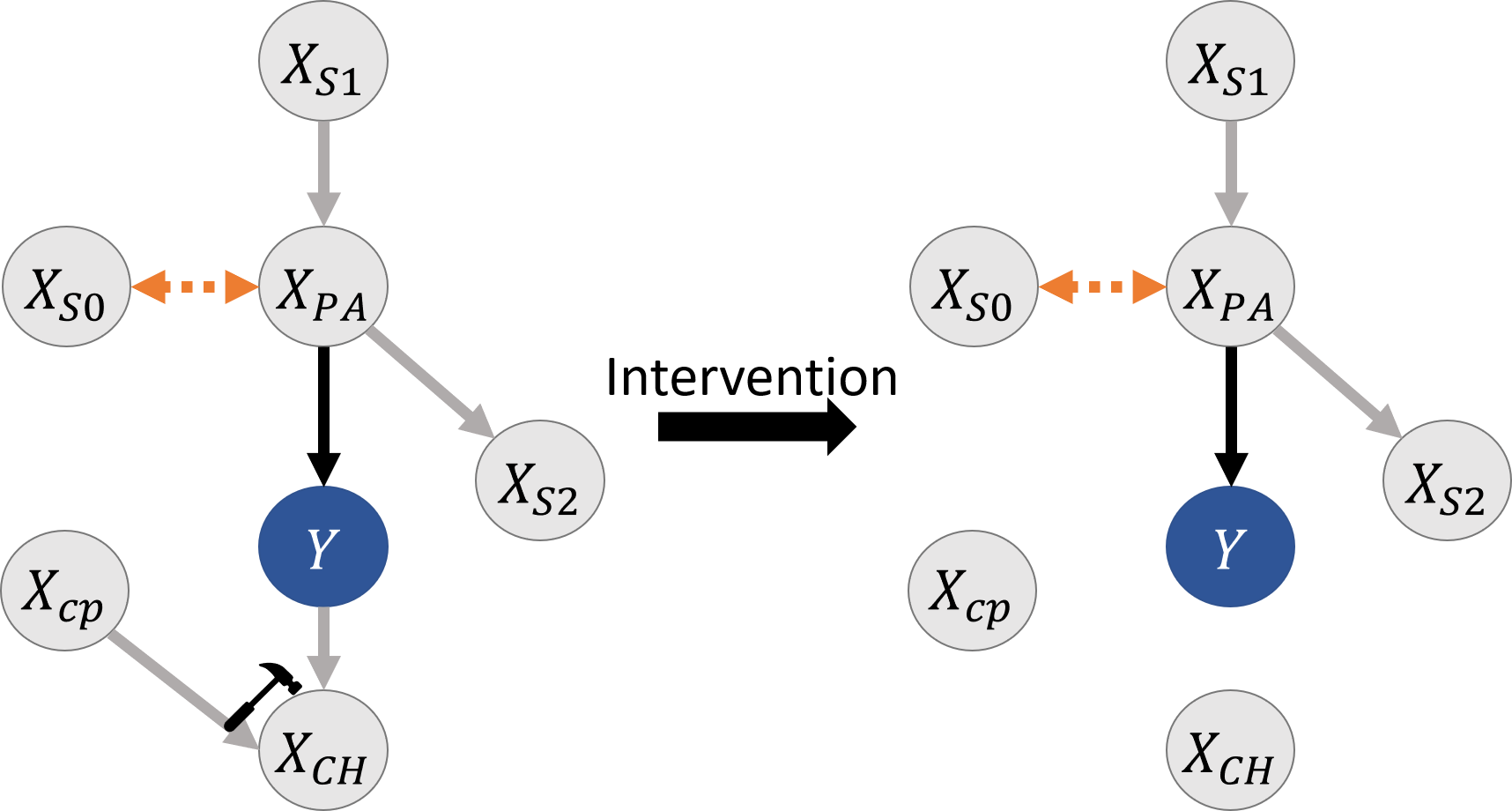}
	    \caption{ }
        \label{fig:intervention_ch}
    \end{subfigure}%
    \hfill
    \begin{subfigure}{.33\textwidth}
        \centering
        \includegraphics[scale=0.15]{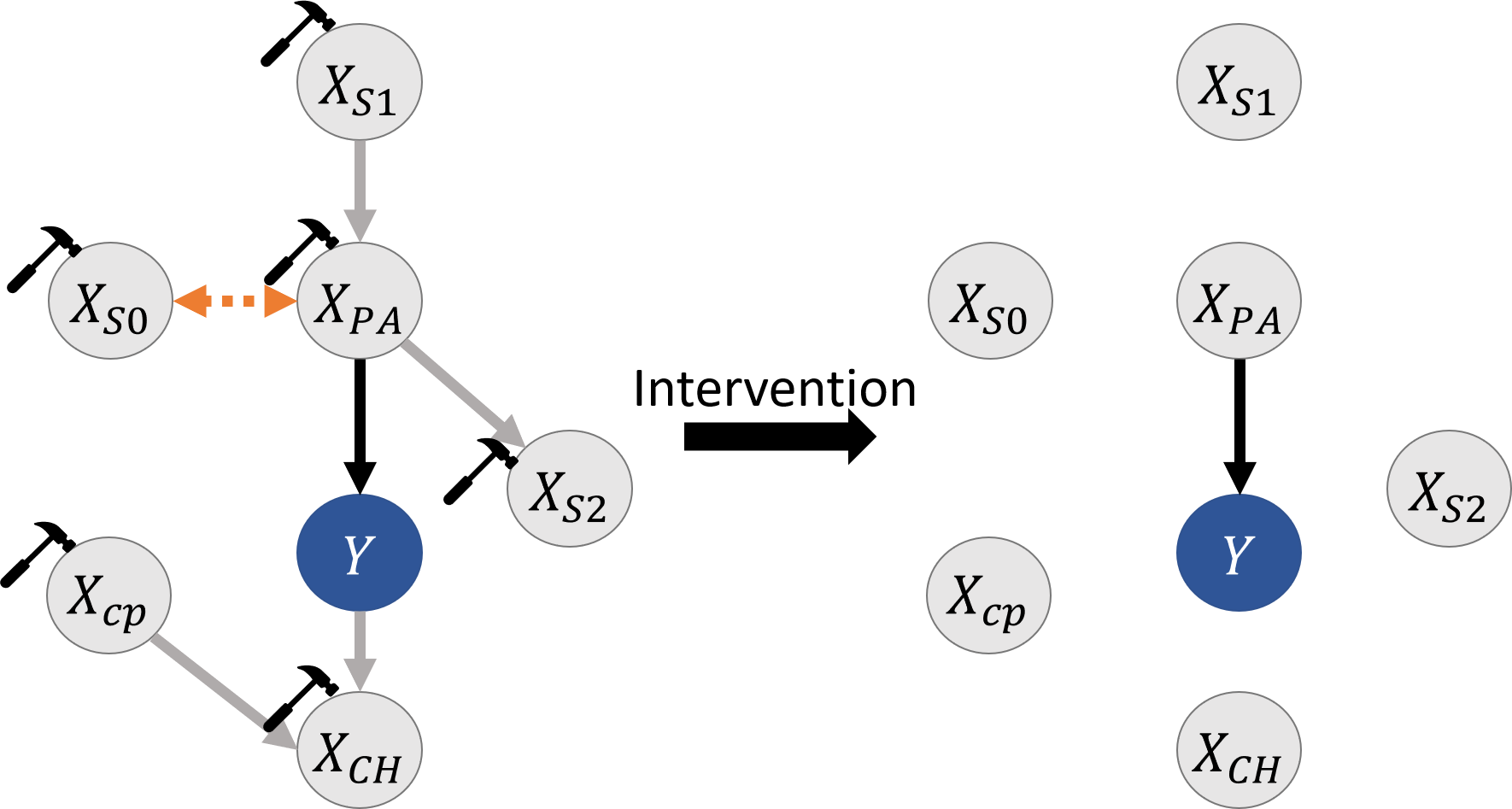}
        \caption{}
        \label{fig:intervention_all}
    \end{subfigure}
    \vspace{-0.25cm}
    \caption{Interventions on (a) parents of $\tt Y$, (b) children of $\tt Y$, and (c) all features. The black hammer denotes an intervention and each right subfigure shows the resultant causal model. Relationship between causal features and $\tt Y$, $\tt {Y=f(X_{PA})}$ remains invariant under all interventions but the relationship between  other features and $\tt Y$ varies based on the intervention. }
\label{fig:intervention}
\vspace{-0.4cm}
\end{figure*}

For ease of exposition, we assume the structural causal graph framework throughout. 
Consider data from a distribution $\tt (X,Y)\sim {P}$ where $\tt X$ is a $k$-dimensional vector. Our goal is to learn a function $\tt h(X)$ that predicts $\tt Y$.  Figure~\ref{fig:causal-graphs} shows causal graphs that denote the different relationships between $\tt X$ and $\tt Y$. Nodes of the graph represent variables and a directed edge  represents a direct causal relationship from a source to target node.   
Denote  $\tt X_{PA} \subseteq X$, the parents of $\tt Y$ in the causal graph. 
Fig.~(\ref{fig:causal-graphs}a) shows the scenario where $\tt X$ contains variables $\tt X_{S0}$ that are correlated to $\tt X_{PA}$ in $\tt {P}$, but not necessarily connected to either $\tt X_{PA}$ or $\tt Y$. These correlations may change in the future, therefore a generalizable model should not include these features. Similarly, Fig.~(\ref{fig:causal-graphs}b) shows parents and children of $\tt X_{PA}$. The d-separation principle states that a node is independent of its ancestors conditioned on  all its parents~\citep{pearl2009causality}. Thus, $\tt Y$ is independent of $\tt X_{S1}$ and $\tt X_{S2}$ conditional on $\tt X_{PA}$. Including them in a model does not add predictive value (and further, avoids prediction error when the relationships between $\tt X_{S1}$, $\tt X_{S2}$ and $\tt X_{PA}$ change).

 The key insight is that building a model for predicting $\tt Y$ using its parents $\tt X_{PA}$ ensures that the model generalizes to other distributions of $\tt X$, and also to changes in other causal relationships between $\tt X$, as long as the causal relationship of $\tt X_{PA}$ to $\tt Y$ is stable. We call such a model  a \emph{causal} model, the features in ($\tt X_{C} =X_{PA}$)  the \emph{causal features}, and assume that all causal features for $\tt Y$ are observed. In contrast, an \emph{associational} model uses all the available features.

Here we would like to distinguish causal features from $\tt Y$'s Markov Blanket. The Markov Blanket~\cite{pellet2008markovb} for $\tt Y$ contains its parents, children and parents of children.  Conditioned on its Markov blanket (Fig.~\ref{fig:causal-graphs}c),  $\tt Y$ is independent of all other variables in the causal graph, and therefore past work~\cite{aliferis2010local} suggests to build a predictive model using the features in $\tt Y$'s Markov Blanket\footnote{In some cases, it may be necessary to use Y's children for prediction, e.g., in predicting disease based on its symptoms. However, such a model will not generalize under intervention--- it makes an implicit assumption that symptoms will never be intervened upon, and that all causes of symptoms are observed.}. However, such a model is not robust to interventions. For instance, if there is an intervention on $\tt Y$'s children in a new domain (Fig.~\ref{fig:intervention}b), it will break the correlation between $\tt Y$ and $\tt X_{CH}$ and lead to incorrect predictions. To summarize, Fig.~(\ref{fig:intervention}c) demonstrates how a causal model based on parents is robust to all interventions on $\tt X$, unlike an associational model built using the Markov Blanket or other features.  

\subsection{Generalization to New Distributions}
We state the generalization property of causal models and show how it results in a stronger differential privacy guarantee. 
We first define \emph{In-distribution} and \emph{Out-of-distribution} generalization error.
Throughout, $\tt L(.,.)$ refers to the loss on a single input and $\tt \mathcal{L}_P(.,.)= \mathbb{E}_P L(.,.)$ refers to the expected value of the loss over a distribution $\tt P(X,Y)$. We refer to 
$\tt h:X \rightarrow Y$ as the hypothesis function or simply the model.  Then, $\tt L(h, h')$ is a loss function quantifying the difference between any two models  $\tt h$ and $\tt h'$. 

\begin{restatable}{definition}{ide} \label{def:ide} 
{\bf In-Distribution Generalization Error ($\tt IDE$)}.   
    Consider a dataset $\tt {S} \sim \tt P(X, Y)$. Then for a model $\tt h:X \rightarrow Y$ trained on $\tt {S}$, the in-distribution generalization error is given by:
    \begin{equation}
       \tt  IDE_{P} (h, y) = \mathcal{L}_{P}(h, y) - \mathcal{L}_{{S} \sim P}(h, y)
    \end{equation}
\end{restatable}

\begin{restatable}{definition}{ode}\label{def:ode}
 {\bf Out-of-Distribution Generalization Error ($\tt ODE$).}
    Consider a dataset $\tt {S}$ sampled from a distribution $\tt P(X, Y)$. Then for a model $\tt h:X \rightarrow Y$ trained on $\tt {S}$, the out-of-distribution generalization error with respect to another distribution  $\tt P^*(X, Y)$ is given by:
        \vspace{-0.2cm}
    \begin{equation}
        \tt ODE_{P,P^*} (h,y) = \mathcal{L}_{P^*}(h, y) - \mathcal{L}_{{S} \sim P}(h, y)
    \end{equation} 
\end{restatable}

\begin{restatable}{definition}{disc}\label{def:discrepancy}
{\bf Discrepancy Distance ($\tt disc_L$) (Def. 4 in \cite{mansour2009domain})}.
    Let $\tt \mathcal H$ be a set of  models,  $\tt h: X \rightarrow Y$. Let $\tt L:Y \times Y \rightarrow \mathbb{R_+}$ define a loss function over $\tt Y$ for any such model $\tt h$. Then the discrepancy distance $\tt disc_L$  over any two distributions $\tt P(X,Y)$ and $\tt P^*(X,Y)$ is given by:
    \begin{equation}
        \tt disc_{L,\mathcal{H}}(P, P^*)  = \max_{h,h' \in \mathcal H} |\mathcal{L}_P(h, h') - \mathcal{L}_{P^*}(h, h')|
    \end{equation}
\end{restatable}
Intuitively,  the term $\tt disc_L(P, P^*)$ denotes the distance between the two distributions. Higher the distance, higher is the chance of an error when transferring  model $\tt h$ from one distribution to another. Next we state the theorem on the generalization property of causal models.

\begin{restatable}{theorem}{generalitytheorem} \label{thm:causal-gen-bounds}
    Consider a structural causal graph $\tt G$ that connects $\tt X$ to $\tt Y$, and  causal features $\tt X_C \subset X$ where $\tt X_C$ represent the parents of $\tt Y$ under $\tt G$. Let $\tt P(X, Y)$ and $\tt P^*(X, Y)$ be two distributions with arbitrary $\tt P(X)$ and $\tt P^*(X)$, having overlap, ${\tt P(X=x)>0}$  whenever $\tt P^*(X=x)>0$.  
    In addition,  the causal relationship between $\tt X_C$ and $\tt Y$ is preserved, which implies that $\tt P(Y|X_C)=P^*(Y|X_C)$.  
    Let $L$ be a symmetric loss function that obeys the triangle inequality (such as L1, L2 or 0-1 loss), and let $\tt f:X_C\rightarrow Y$ be the optimal predictor among all hypotheses using $\tt X_C$ features under $L$, i.e., $\tt f = \arg \min_h L_{x_c}(y,h(x_c))$ for all $\tt x_c$,  and thus $f$ depends only on $\tt \Pr(Y|X_C)$ (e.g., $\tt f:=\mathbb{E}[Y|X_C]$ for L2 loss).
 Further, assume that $\tt \mathcal H_C$ represents the set of {causal} models ${\tt h_c:X_C \rightarrow Y}$ that may use all causal features and $\tt \mathcal H_A$ represent the set of {associational} models ${\tt h_a: X \rightarrow Y}$ that may use all available features, such that ${\tt f \in \mathcal{H}_C}$ and ${\tt \mathcal{H}_C \subseteq \mathcal{H}_A}$.  
\begin{enumerate}
    \item When generation of $\tt Y$ is deterministic, $\tt y=f(X_c)$ (e.g., when $\tt Y|X_C$ is almost surely constant), the $\tt ODE$ loss for a causal model  $\tt h_c \in \mathcal H_C$ is bounded by:
\begin{align}
        \tt ODE_{P,P^*}(h_c, y)   \tt = \mathcal{L}_{P^*}(h_c, y) -  \mathcal{L}_{{S}\sim P}(h_c, y) \nonumber \\
                                  \tt \leq disc_{L, \mathcal{H}_C}(P, P^*) + IDE_P(h_c, y) 
\end{align}
       Further, for any $\tt P$ and $\tt P^*$, the upper bound of $\tt ODE$ from a dataset $\tt{S}\sim \tt P(X,Y)$ to $\tt P^*$(called $\tt ODE\texttt{-}Bound$)  for a causal model $\tt h_c \in \mathcal H_C$ is less than or equal to the upper bound  $\tt ODE\texttt{-}Bound$ of an associational model $h_a \in \mathcal H_A$,   with probability at least $(1-\delta)^2$.
 \begin{equation*}
      \tt  ODE\texttt{-}Bound_{P,P^*}(h_c, y; \delta) \leq ODE\texttt{-}Bound_{P,P^*}(h_a, y; \delta)
 \end{equation*}
    \item When generation of $\tt Y$ is probabilistic, the $\tt ODE$ error for a causal model  $\tt h_c \in \mathcal H_C$ includes additional terms for the loss between  $\tt Y$ and optimal causal models $\tt h_{c,P}^{OPT}=h_{c,P^*}^{OPT}$ on $\tt P$ and $\tt P^*$ respectively.
\begin{align}
    \tt ODE_{P,P^*}(h_c, y )  
                                       & \tt \leq disc_{L, \mathcal{H}_C}(P, P^*) + IDE_P(h_c, y) + \nonumber \\
                                       & \tt \mathcal{L}_{P^*}(h_{c,P^*}^{OPT}, y) +  \mathcal{L}_{P}(h_{c,P}^{OPT},y) 
\end{align}
However, while the loss of an associational model can be lower on $\tt P$, there always exists a $\tt P^*$ such that the worst case $\tt ODE\texttt{-}Bound$ for an associational model is higher than the same for a causal model. 
\footnotesize
\begin{equation*}
    \small \tt  \max_{P^*} ODE\texttt{-}Bound_{P,P^*}(h_c, y; \delta) \leq  \max_{P^*} ODE\texttt{-}Bound_{P,P^*}(h_a, y; \delta)
\end{equation*}
\normalsize
\end{enumerate}
\end{restatable}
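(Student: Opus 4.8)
The plan is to first derive the displayed $ODE$ inequality for causal models by a short chain of triangle-inequality and discrepancy-distance steps, then run the identical chain for associational models, and finally compare the two resulting $ODE\text{-}Bound$ expressions under the worst-case choice of $P^*$.

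For the causal bound, I would fix $h_c \in \mathcal H_C$ and set $f := h_{c,P}^{OPT} = h_{c,P^*}^{OPT}$; these coincide because $f$ depends only on $\Pr(Y\mid X_C)$ and the hypothesis gives $P(Y\mid X_C)=P^*(Y\mid X_C)$. Then I chain three steps: $\mathcal L_{P^*}(h_c,y)\le \mathcal L_{P^*}(h_c,f)+\mathcal L_{P^*}(f,y)$ (triangle inequality for $L$, applied pointwise and integrated against $P^*$); $\mathcal L_{P^*}(h_c,f)\le \mathcal L_{P}(h_c,f)+disc_{L,\mathcal H_C}(P,P^*)$ (both $h_c,f\in\mathcal H_C$, so this is immediate from the definition of $disc$); and $\mathcal L_{P}(h_c,f)\le \mathcal L_{P}(h_c,y)+\mathcal L_{P}(f,y)$ (triangle inequality again). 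Adding these and subtracting $\mathcal L_{S\sim P}(h_c,y)$, the combination $\mathcal L_{P}(h_c,y)-\mathcal L_{S\sim P}(h_c,y)$ is exactly $IDE_P(h_c,y)$, and one recovers the stated inequality verbatim. Replacing $IDE_P(h_c,y)$ by its standard uniform-convergence estimate for $\mathcal H_C$ (valid with probability $\ge 1-\delta$ over $S$) turns the right-hand side into $ODE\text{-}Bound_{P,P^*}(h_c,y;\delta)$, a deterministic function of $\delta$ and $P^*$; the deterministic case (part~1) is the specialization where $\mathcal L_{P^*}(f,y)=\mathcal L_P(f,y)=0$.

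For associational models I would run the identical chain with $h_a\in\mathcal H_A$ and $\mathcal H_A$ in place of $\mathcal H_C$. The one structural change is that $h_{a,P}^{OPT}$ and $h_{a,P^*}^{OPT}$ need not coincide, so the final triangle-inequality step leaves the term $\mathcal L_P(h_{a,P^*}^{OPT},y)$ — the loss on $P$ of the model that is optimal on $P^*$ — instead of a fixed Bayes-type term, giving $ODE\text{-}Bound_{P,P^*}(h_a,y;\delta) \le disc_{L,\mathcal H_A}(P,P^*) + (\text{capacity term for }\mathcal H_A) + \mathcal L_{P^*}(h_{a,P^*}^{OPT},y) + \mathcal L_P(h_{a,P^*}^{OPT},y)$. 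To finish I take $\max_{P^*}$ of both bounds. Since $\mathcal H_C\subseteq\mathcal H_A$, for every admissible $P^*$ we have $disc_{L,\mathcal H_C}(P,P^*)\le disc_{L,\mathcal H_A}(P,P^*)$ and the capacity term for $\mathcal H_C$ is at most that for $\mathcal H_A$, so the first two pieces of the causal bound are dominated pointwise in $P^*$. The remaining two causal pieces are $\mathcal L_{P^*}(f,y)=\mathbb E_{P^*(X_C)}\!\big[\,\mathbb E_{P(Y\mid X_C)}L(f(X_C),Y)\,\big]\le \sup_{x_c}\mathbb E_{P(Y\mid X_C=x_c)}L(f(x_c),Y)$ and the constant $\mathcal L_P(f,y)$, so $Q_C:=\max_{P^*}ODE\text{-}Bound_{P,P^*}(h_c,y;\delta)$ is finite and pinned down by the noise in $P(Y\mid X_C)$ and the capacity of $\mathcal H_C$. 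It then remains to exhibit one distribution $P^*_a$ (respecting $P(x)>0$ whenever $P^*_a(x)>0$) whose associational bound already reaches $Q_C$: choose $P^*_a$ to retain the $X_C$-marginal that witnesses $Q_C$ — so $disc_{L,\mathcal H_A}(P,P^*_a)$ together with the capacity term already recover the ``causal'' portion of $Q_C$ — while additionally reversing the spurious association between a feature such as $X_{S0}$ and $Y$ that is present in $P$. Under $P^*_a$ the $P^*_a$-optimal associational model exploits $X_{S0}$ in a way that is maximally wrong under $P$, so $\mathcal L_P(h_{a,P^*_a}^{OPT},y)$ is as large as the loss range and dominates the residual noise terms of $Q_C$; hence $\max_{P^*}ODE\text{-}Bound_{P,P^*}(h_a,y;\delta)\ge ODE\text{-}Bound_{P,P^*_a}(h_a,y;\delta)\ge Q_C$, which is the claim.

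The main obstacle is exactly this last comparison. At a fixed $P^*$ the inequality need not hold term by term: because $\mathcal H_A$ is larger, $\mathcal L_{P^*}(h_{a,P^*}^{OPT},y)\le \mathcal L_{P^*}(f,y)$, so the associational model can have strictly smaller ``irreducible'' loss on $P$ and $P^*$, and one genuinely needs the freedom to pick an adversarial $P^*_a$ plus a quantitative statement that the induced blow-up in $\mathcal L_P(h_{a,P^*_a}^{OPT},y)$ (equivalently, in $disc_{L,\mathcal H_A}$) outweighs the causal bound's noise terms. Making this quantitative requires a mild boundedness/normalization assumption on $L$ and the implicit assumption that some $h\in\mathcal H_A$ truly depends on a feature whose relation to $Y$ is not fixed across the admissible family of $P^*$ — if $\mathcal H_A$ makes no effective use of such features, then $\mathcal H_A$ and $\mathcal H_C$ agree on the relevant marginals and the inequality holds with equality. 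A secondary, purely bookkeeping point is propagating the two $1-\delta$ uniform-convergence guarantees (for $\mathcal H_C$ and $\mathcal H_A$) through the comparison, which is also the origin of the $(1-\delta)^2$ in part~1.
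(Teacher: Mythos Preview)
Your proposal is correct and follows essentially the same route as the paper: the same triangle-inequality chain through a fixed anchor hypothesis, the same use of $h_{c,P}^{OPT}=h_{c,P^*}^{OPT}$, the same $\mathcal H_C\subseteq\mathcal H_A$ comparisons for $disc_L$ and the capacity/VC term, and the same adversarial construction of $P^*$ by reversing a spurious correlation. The only cosmetic difference is that the paper anchors the associational chain at $h_{a,P}^{OPT}$ (so the ``bad'' term is $\mathcal L_{P^*}(h_{a,P}^{OPT},y)$) whereas you anchor at $h_{a,P^*}^{OPT}$ (making $\mathcal L_{P}(h_{a,P^*}^{OPT},y)$ the bad term); the two choices yield symmetric bounds and the same worst-case comparison.
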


{\em Proof Sketch.}
    As an example, consider a colored MNIST data distribution $\tt P$ such that the label $\tt  Y$ is assigned based on the shape of a digit. Here the shape features represent the causal features ($\tt X_C$). If the shape is closest to shapes for $\{0,1,2,3,4\}$ then $\tt Y=0$, else $\tt Y=1$. Additionally, all images classified as $1$ are colored with the same color (say red). Then, under a suitably expressive class of models,  the loss-minimizing associational model may use only the color feature to obtain zero error, while the loss-minimizing causal model still uses the shape (causal) features.  On any new $\tt  P^*$ that does not follow the same correlation of digits  with color, we expect that the  associational model will have higher error than the  causal model.

    Formally, since $\tt P(Y|X_C)=P^*(Y|X_C)$ and $\tt f \in \mathcal{H}_C$, the optimal causal model that minimizes loss over $\tt P$ is the same as the loss-minimizing model over $\tt P^*$. That is, $\tt h_{c,P}^{OPT} =h_{c,P^*}^{OPT}$. However for associational models, the optimal models may not be the same  $\tt h_{a,P}^{OPT} \neq h_{a,P^*}^{OPT}$ and thus there is an additional loss term when generalizing to data from $\tt P^*$. The rest of the proof follows from triangle inequalities on the loss function and the standard bounds for $\tt IDE$ ( in \app~\ref{app:gen_thm}).

    For individual instances, we present a similar result on the worst-case generalization error (proof in  \app~\ref{app:gen_cor}).
\begin{restatable}{theorem}{generalitycorollary} 
\label{cor:single-input}
Consider a causal model $\tt h_{c,S}^{min}: X_C \rightarrow Y$ and an associational model $\tt h_{a,S}^{min}: X \rightarrow Y$ trained on a dataset $\tt {S} \sim P(X, Y)$ with loss $L$. Let $\tt (x,y) \in S$ and $\tt (x',y') \notin S$ be two input instances such that they share the same true labelling function on the causal features, $\tt y \sim P(Y|X_C=x)$ and $\tt y' \sim P(Y|X_C=x')$.  
   Then, the worst-case generalization error for a causal model on such $\tt x'$  is less than or equal to that for an associational model. 
   \footnotesize
    \begin{equation*}
        \tt   \max_{x \in S, x'}  L_{x'}(h_{c,S}^{min}, y)  - L_{x}(h_{c,S}^{min}, y) \leq  \max_{x \in S, x'} L_{x'}(h_{a,S}^{min}, y)  - L_{x}(h_{a,S}^{min}, y)  
    \end{equation*}
    \normalsize
\end{restatable}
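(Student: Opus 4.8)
\emph{Proof proposal.} The plan is to read Theorem~\ref{cor:single-input} as the pointwise (single-instance) analogue of Theorem~\ref{thm:causal-gen-bounds}, carried out under the same standing assumptions (overlap/common support, invariance $\tt P(Y|X_C)=P^*(Y|X_C)$, the optimal causal predictor $\tt f\in\mathcal{H}_C$, $\tt\mathcal{H}_C\subseteq\mathcal{H}_A$, and a symmetric loss obeying the triangle inequality), and to reuse the same two ingredients: triangle inequalities for $L$ anchored at $\tt f$, and the fact that any function of $\tt X_C$ behaves the same way on a fresh instance $\tt x'$ as on the training distribution because $\tt P(Y|X_C)$ is invariant. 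First I would note that the label constraint on $\tt x$ and $\tt x'$ is separable — each label is drawn from the common conditional $\tt P(Y|X_C)$ — so the displayed maximum factors as
\begin{equation*}
\tt \max_{x\in S,\,x'}\big[L_{x'}(h,y)-L_{x}(h,y)\big] \;=\; \max_{x'} L_{x'}(h,y)\;-\;\min_{x\in S} L_{x}(h,y),
\end{equation*}
so it suffices to upper bound the worst-case held-out loss and lower bound the best-case training loss, separately for $\tt h=h_{c,S}^{min}$ and $\tt h=h_{a,S}^{min}$.

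For the causal model, anchoring at $\tt f$ and applying the triangle inequality (and its reverse form) gives $\tt L_{x'}(h_{c,S}^{min},y)\le L_{x'}(h_{c,S}^{min},f)+L_{x'}(f,y)$ and $\tt L_{x}(h_{c,S}^{min},y)\ge L_{x}(h_{c,S}^{min},f)-L_{x}(f,y)$, so the gap is controlled by the function-space discrepancy $\tt L_{x'}(h_{c,S}^{min},f)-L_{x}(h_{c,S}^{min},f)$ plus two irreducible terms $\tt L_{x'}(f,y)$ and $\tt L_{x}(f,y)$. The key observation is that, since $\tt h_{c,S}^{min}$ and $\tt f$ are functions of $\tt X_C$ alone and $\tt P(Y|X_C)$ is invariant, (a) the irreducible terms are just the Bayes error of the causal-feature predictor and do not grow when moving from an $\tt x\in S$ to a held-out $\tt x'$ (they vanish when $\tt y=f(X_C)$), and (b) the function-space term, maximized over admissible $\tt x'$ whose causal coordinates lie in the common support, is bounded by the adversarial $\tt disc_{L,\mathcal{H}_C}(P,P^*)$ — exactly the quantity appearing in Theorem~\ref{thm:causal-gen-bounds}. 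Running the identical decomposition for $\tt h_{a,S}^{min}$ anchored at its own distribution-specific optimum yields the analogous bound but with $\tt disc_{L,\mathcal{H}_A}$ and, in the probabilistic case, an extra mismatch term because $\tt h_{a,P}^{OPT}\neq h_{a,P^*}^{OPT}$.

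The comparison then rests on two monotonicities. On the training side, since $\tt\mathcal{H}_C\subseteq\mathcal{H}_A$, empirical risk minimisation over the larger class fits $\tt S$ at least as well, so $\tt\min_{x\in S}L_{x}(h_{a,S}^{min},y)\le\min_{x\in S}L_{x}(h_{c,S}^{min},y)$ (both are $0$ in the realizable/deterministic regime). On the held-out side, I would exhibit an explicit adversarial $\tt x'$ that shares the causal labelling function but whose non-causal coordinates break the spurious correlations present in $\tt S$: the associational model, which reads those coordinates, incurs loss on this $\tt x'$ at least as large as the worst case for the causal model, whereas the causal model is by construction insensitive to the perturbation. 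Equivalently, one invokes the ``$\max_{P^*}$'' half of Theorem~\ref{thm:causal-gen-bounds}: the worst-case $\tt x'$ realizes the worst-case $\tt P^*$, for which $\tt disc_{L,\mathcal{H}_C}\le disc_{L,\mathcal{H}_A}$ together with the vanishing of the causal mismatch term gives the claimed inequality.

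I expect the main obstacle to be the held-out-side comparison (step (b)): unlike Theorem~\ref{thm:causal-gen-bounds}, the two models here are trained on the \emph{same} sample $\tt S$, so one cannot simply compare hypothesis classes — it must be argued that enlarging $\tt\mathcal{H}_C$ to $\tt\mathcal{H}_A$ cannot shrink the attainable worst-case instance loss while it can shrink the best-case training loss, and this is precisely where the spurious-feature construction (abstractly, non-invariance of $\tt P(Y|X)$ versus invariance of $\tt P(Y|X_C)$) does the real work. A secondary technical point is making the set of ``admissible $\tt x'$'' precise so that it encodes the same overlap/common-support hypothesis as Theorem~\ref{thm:causal-gen-bounds} (ensuring the relevant causal-feature values are reachable), and checking that the pointwise triangle-inequality manipulations go through for each of the stated loss families (L1, L2, 0-1).
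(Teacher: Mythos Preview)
Your high-level skeleton matches the paper exactly: factor the maximum as $\max_{x'}L_{x'}(h,y)-\min_{x\in S}L_x(h,y)$, handle the training side via $\mathcal{H}_C\subseteq\mathcal{H}_A$, and handle the held-out side by an adversarial construction that perturbs non-causal coordinates while fixing the causal ones. The paper does precisely this.

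Where you diverge is the middle layer you build around triangle inequalities and $disc_{L,\mathcal H}$. The paper does \emph{not} route Theorem~\ref{cor:single-input} through discrepancy at all; it argues the held-out inequality directly. It writes the trained associational model as $h_{a,S}^{min}(x')=h_{c,S}^{min}(x'_c)+h_a(x')$, fixes the causal-worst-case point $x'_1$, and then (for L1 loss) shows that either $h_a(x'_1)$ already has the adversarial sign relative to the noise $\epsilon_1$, or one can keep $x'_c$ fixed and change the non-causal coordinates to a point $x'_2$ where it does --- in both cases the associational loss dominates the causal worst case. Your ``spurious-feature construction'' is exactly this; the decomposition $h_{a,S}^{min}=h_{c,S}^{min}+h_a$ is the concrete device you should name. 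By contrast, your discrepancy route inherits the weakness of Theorem~\ref{thm:causal-gen-bounds}: it compares \emph{upper bounds} (ODE\texttt{-}Bound), not the actual pointwise quantities, so invoking it does not by itself deliver the displayed inequality. You seem to sense this (flagging the construction as ``the real work''); you should drop the discrepancy detour rather than present it as an equivalent alternative.

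Two smaller points. First, on the training side, $\mathcal{H}_C\subseteq\mathcal{H}_A$ gives a lower \emph{average} empirical loss for $h_{a,S}^{min}$, but not automatically a lower \emph{pointwise minimum}; the paper patches this with a ``suitably expressive $\mathcal{H}_A$'' caveat so that the associational ERM can dominate the causal one at every training point, and you should add that assumption explicitly. Second, the paper carries out the held-out construction only for L1 and asserts extension; your flagged check for L2 and 0-1 is appropriate and is not supplied in the paper either.
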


\section{Main Result: Privacy  with Causality}
\label{privacy-attacks}
We now present our main result on the privacy guarantees and attack robustness of causal models. 

\subsection{Differential Privacy Guarantees}
Differential privacy~\citep{dwork2014algorithmic} provides one of the strongest  notions of privacy  to hide the participation of an individual sample in the dataset. To state informally, it ensures that the presence or absence of a single data point in the input dataset does 
not change the output by much. 

\begin{definition}[Differential Privacy]
    A mechanism $\mathcal{\tt M}$ with domain $\mathcal{I}$ and range $\mathcal{O}$ satisfies $\epsilon$-differential privacy if for any two datasets $d,d' \in \mathcal{I}$ that differ only in one input and for a set $\mathcal{S}\subseteq \mathcal{O}$, the following holds:
   $ \Pr(\mathcal{M}(d)\in \mathcal{S}) \leq e^\epsilon  \Pr(\mathcal{M}(d')\in \mathcal{S}) $
\end{definition}

The standard approach to designing a differentially private mechanism is by calculating the \emph{sensitivity} of an algorithm and adding noise proportional to the sensitivity. Sensitivity captures the change in the output of a function due to changing a single data point in the input. Higher the sensitivity, larger is the amount of noise required to make any function differentially private with reasonable $\epsilon$ guarantees. Below we provide a formal definition of sensitivity, derive a corollary based on  the generalization property from Theorem~\ref{cor:single-input}, and then show that sensitivity of a causal learning function is lower than or equal to an associational learning function (proofs are in \app~\ref{app:sensitivity}).

\begin{definition}
    [Sensitivity (From Def. 3.1 in \cite{dwork2014algorithmic}]
    Let $\tt \mathcal F$ be a function that maps a dataset to a vector in $\mathbb{R}^d$. Let $\tt {S}$, $\tt {S'}$ be two datasets such that  $\tt {S'}$ differs from $\tt {S}$ in one data point.
Then the $l_1$-sensitivity of a function $\tt \mathcal F $ is defined as: 
    $\tt    \Delta \mathcal F = \max_{{S}, {S'}}||\mathcal F (S)  - \mathcal F (S')||_1$
\end{definition}

\begin{restatable}{corollary}{generalizeneighboringdatasets}
    \label{cor:gen-neighbor-datasets}
    Let $\tt S$ be a dataset of $\tt n$ $\tt (x,y)$ values, such that $\tt y^{(i)} \sim P(Y|X_C=x^{(i)})  \forall (x^{(i)}, y^{(i)}) \in S$, where $\tt P(Y|X_C)$ is the invariant conditional distribution on the causal features $\tt X_C$. Consider a neighboring dataset $\tt S'$ such that $\tt S' = S \backslash (x,y) + (x',y')$ where $\tt (x,y) \in S$,  $\tt (x',y') \notin S$, and $\tt (x',y')$ shares the same conditional distribution $\tt y' \sim P(Y|X_C=x_c')$. Then the maximum generalization error from $\tt S$ to $\tt S'$ for a causal model trained on $\tt S$ is lower than or equal to that of an associational model. 
    \footnotesize
    \begin{equation*}
        \tt   \max_{S, S'}  \mathcal L_{S'}(h_{c,S}^{min}, y)  - \mathcal L_{S}(h_{c,S}^{min}, y) \leq  \max_{S,S'} \mathcal L_{S'}(h_{a,S}^{min}, y)  - \mathcal L_{S}(h_{a,S}^{min}, y)  
    \end{equation*}
    \normalsize
\end{restatable}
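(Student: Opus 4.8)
The plan is to reduce the dataset-level statement to the instance-level statement already established in Theorem~\ref{cor:single-input}. First I would observe that because $\tt S$ and $\tt S'$ differ in exactly one data point --- we remove $\tt (x,y)$ and add $\tt (x',y')$ --- the difference in empirical losses $\tt \mathcal L_{S'}(h,y) - \mathcal L_{S}(h,y)$ telescopes down to a single-point comparison. Concretely, writing the empirical loss as an average over the $\tt n$ points, $\tt \mathcal L_{S'}(h,y) - \mathcal L_{S}(h,y) = \tfrac{1}{n}\big(L_{x'}(h,y') - L_{x}(h,y)\big)$, since all the other $\tt n-1$ terms are common to both datasets and cancel. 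So the maximum over neighboring pairs $\tt (S,S')$ of the left-hand side equals $\tfrac{1}{n}$ times the maximum over $\tt x \in S$ and admissible $\tt x'$ of $\tt L_{x'}(h_{c,S}^{min}, y') - L_{x}(h_{c,S}^{min}, y)$, which is exactly the quantity bounded in Theorem~\ref{cor:single-input}.

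Next I would check that the hypotheses line up. Theorem~\ref{cor:single-input} requires that $\tt (x,y) \in S$, $\tt (x',y') \notin S$, and that both labels are drawn from the invariant conditional $\tt P(Y|X_C)$ --- precisely the conditions imposed on $\tt S'$ here, with the additional remark that every point of $\tt S$ itself satisfies $\tt y^{(i)} \sim P(Y|X_C=x^{(i)})$, so the maximization is over a legitimate family of neighbors. The models $\tt h_{c,S}^{min}$ and $\tt h_{a,S}^{min}$ are the same loss-minimizing models on $\tt S$ in both statements, so no re-derivation of the optimizers is needed. Applying Theorem~\ref{cor:single-input} then gives $\tt \max L_{x'}(h_{c,S}^{min},y) - L_{x}(h_{c,S}^{min},y) \leq \max L_{x'}(h_{a,S}^{min},y) - L_{x}(h_{a,S}^{min},y)$, and multiplying both sides by $\tfrac{1}{n} > 0$ preserves the inequality and yields exactly the claimed bound.

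The main obstacle I anticipate is not the algebra but making the telescoping argument airtight about \emph{which} model is being used on \emph{which} dataset: the corollary fixes the model as the one trained on $\tt S$ (namely $\tt h_{c,S}^{min}$) and only changes the dataset against which its loss is measured, so the cancellation of the shared $\tt n-1$ terms is clean and there is no need to worry about the optimizer moving with the data. A secondary subtlety is the quantifier on $\tt x'$: in Theorem~\ref{cor:single-input} the inner max ranges jointly over $\tt x \in S$ and over all valid out-of-sample $\tt x'$, and I must ensure the neighboring-dataset max here ranges over the same set (any single-point swap consistent with the causal conditional), which it does by construction. Given that, the corollary is essentially a restatement of Theorem~\ref{cor:single-input} normalized by the sample size $\tt n$, and it immediately feeds into the sensitivity bound $\tt \Delta \mathcal F_c \leq \Delta \mathcal F_a$ for the differential privacy claim.
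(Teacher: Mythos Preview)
Your proposal is correct and follows essentially the same approach as the paper: both arguments exploit that $\tt S$ and $\tt S'$ share $\tt n-1$ points so that $\tt \mathcal L_{S'}(h,y)-\mathcal L_S(h,y)=\tfrac{1}{n}\big(L_{x'}(h,y)-L_x(h,y)\big)$, invoke Theorem~\ref{cor:single-input} on the single-point difference, and scale by $\tt 1/n$. The paper writes the telescoping out by adding and subtracting $(n-1)\mathcal L_{S_{n-1}}$ explicitly and then handles the outer $\max_S$ as a separate final step (``since the above holds for any $\tt S\sim P$, it also holds for the worst-case $\tt S$''), but the content is the same as what you sketch.
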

\begin{restatable}{lemma}{sensitivitylemma} 
\label{lem:theta-sensitivity}
Let $\tt S$ and $\tt S'$ be two datasets defined as in Corollary~\ref{cor:gen-neighbor-datasets}. 
    Let a model $\tt h$ be specified by a set of parameters $\theta \in \Omega \subseteq \mathbb{R}^n$. Let $\tt h_S^{min}(x;\theta_S)$ be a model learnt using $\tt S$ as training data and $\tt h_{S'}^{min}(x; \theta_{S'})$ be the model learnt using $\tt S'$ as training data, using a loss function $\tt L$ that is $\lambda$-strongly convex over $\Omega$, $\rho$-Lipschitz, symmetric and obeys the triangle inequality. 
    Then, under the conditions of Theorem~\ref{thm:causal-gen-bounds} (optimal predictor $f\in \mathcal{H}_C$) and for a sufficiently large $\tt n$, the sensitivity of a causal learning function $\mathcal F_c$ that outputs learnt empirical model $\tt h_{c,S}^{min} \leftarrow \mathcal F_c(S)$ and $\tt h_{c,S'}^{min} \leftarrow \mathcal F_c(S')$ is lower than or equal to the sensitivity of an associational learning function $\mathcal F_a$ that outputs
      $\tt h_{a,S}^{min} \leftarrow \mathcal F_a(S)$ and $\tt h_{a,S'}^{min} \leftarrow \mathcal F_a(S')$,
    \begin{equation*}
      \tt  \Delta \mathcal F_c = \max_{{S}, {S'}}||  h_{c, S}^{min} -  h_{c, S'}^{min}||_1 
            \leq \max_{{S},{S'}}|| h_{a, S}^{min} - h_{a, S'}^{min}||_1 = \Delta \mathcal F_a 
    \end{equation*}
    where the maximum is over all such datasets $\tt S$ and $\tt S'$.
\end{restatable}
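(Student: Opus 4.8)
\emph{Proof plan.}\quad The plan is to compose two reductions. First, $\lambda$-strong convexity of the loss converts a change in the \emph{learned model} into a bound involving the \emph{excess empirical risk} of the $S$-trained model when it is evaluated on the neighbouring dataset $S'$. Second, Corollary~\ref{cor:gen-neighbor-datasets} converts that excess risk into the $S\!\to\!S'$ generalization error, which is exactly the quantity already shown to be no larger for causal models than for associational ones. Taking a maximum over neighbouring pairs $(S,S')$ then yields the claimed ordering of sensitivities.

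\textbf{Step 1: strong convexity gives stability of the minimizer.} Since $L$ is $\lambda$-strongly convex on $\Omega$, the empirical risks $\mathcal{L}_{S}$ and $\mathcal{L}_{S'}$ are $\lambda$-strongly convex too, so their minimizers $\theta_{S},\theta_{S'}$ are unique and $\mathcal{F}_c,\mathcal{F}_a$ are well defined. Writing $h_{S}^{min}$ for either $h_{c,S}^{min}$ or $h_{a,S}^{min}$, the strong-convexity lower bound for $\mathcal{L}_{S'}$ at its minimizer, evaluated at $\theta_{S}$, gives
\begin{equation*}
\|\theta_{S}-\theta_{S'}\|_2^2 \;\le\; \tfrac{2}{\lambda}\bigl(\mathcal{L}_{S'}(h_{S}^{min})-\mathcal{L}_{S'}(h_{S'}^{min})\bigr).
\end{equation*}
(This uses the first-order condition, so one needs the minimizers in the interior of $\Omega$, or invokes the variational inequality on the boundary.) I would then pass from the $l_2$ parameter distance to the $l_1$ model distance in the statement by norm equivalence in $\mathbb{R}^n$, and — wherever loss values rather than parameters must be compared — by $\rho$-Lipschitzness of $L$; every constant this introduces depends only on $n,\rho,\lambda$ and is common to $\mathcal{F}_c$ and $\mathcal{F}_a$.

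\textbf{Step 2: excess risk is controlled by the generalization error.} Decompose $\mathcal{L}_{S'}(h_{S}^{min})-\mathcal{L}_{S'}(h_{S'}^{min})$ into three terms: (i) the $S\!\to\!S'$ generalization error $\mathcal{L}_{S'}(h_{S}^{min})-\mathcal{L}_{S}(h_{S}^{min})$; (ii) $\mathcal{L}_{S}(h_{S}^{min})-\mathcal{L}_{S}(h_{S'}^{min})\le 0$, since $h_{S}^{min}$ is the ERM on $S$; and (iii) $\mathcal{L}_{S}(h_{S'}^{min})-\mathcal{L}_{S'}(h_{S'}^{min})$, which is $O(\rho/n)$ because $S$ and $S'$ differ in a single point and $L$ is $\rho$-Lipschitz and bounded on $\Omega$. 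Maximizing over neighbouring pairs $(S,S')$ and applying Corollary~\ref{cor:gen-neighbor-datasets} bounds the dominant term (i) for the causal learner by the corresponding term for the associational learner; combined with Step 1 this gives $\Delta\mathcal{F}_c\le\Delta\mathcal{F}_a$ up to an $O(1/n)$ slack that vanishes for large $n$ — precisely where the hypothesis on $n$ is used. The assumption $f\in\mathcal{H}_C$ enters as well: it guarantees that the causal ERM actually attains the invariant predictor, so its $S\!\to\!S'$ generalization error is the small one promised by Corollary~\ref{cor:gen-neighbor-datasets}.

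\textbf{Main obstacle.} The delicate point is the direction of the comparison in Step 2: the chain above naturally produces \emph{upper bounds} on $\Delta\mathcal{F}_c$ and $\Delta\mathcal{F}_a$ through their respective generalization errors, and one must argue these translate into the stated inequality between the sensitivities themselves (strong convexity alone gives an upper bound on $\|\theta_S-\theta_{S'}\|$, not a matching lower bound, so a one-sided estimate is not enough). I would resolve this exactly as Theorem~\ref{thm:causal-gen-bounds} does — phrasing the conclusion for the worst-case bound, so that the Step~1 estimate is read as essentially tight in the worst case over $S'$, and the intuition that a more flexible model class $\mathcal{H}_A\supseteq\mathcal{H}_C$ is less stable is made rigorous only through the generalization-error comparison — together with checking that the $O(1/n)$ cross-term (iii) is of the same order for both model classes and hence dominated once $n$ is large. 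Minor loose ends are the interior/variational-inequality point in Step 1 and confirming that the $l_1$-versus-$l_2$ and loss-versus-parameter conversion constants coincide for $\mathcal{F}_c$ and $\mathcal{F}_a$; they do, since both learners are parameterized over the same $\Omega\subseteq\mathbb{R}^n$ with the same $\rho$-Lipschitz loss.
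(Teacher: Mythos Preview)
Your overall skeleton---strong convexity to reduce parameter distance to an excess-risk term, then Corollary~\ref{cor:gen-neighbor-datasets} to compare the $S\!\to\!S'$ generalization errors of the causal and associational learners---matches the paper's approach. The gap is precisely the one you flag in your ``Main obstacle'' paragraph, and your proposed resolution does not close it.

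Concretely, after Steps~1--2 you have
\[
\tfrac{\lambda}{2}(\Delta\mathcal{F}_c)^2 \;\le\; \max_{S,S'}\bigl[\mathcal{L}_{S'}(h_{c,S}^{min})-\mathcal{L}_S(h_{c,S}^{min})\bigr]
\;\le\; \max_{S,S'}\bigl[\mathcal{L}_{S'}(h_{a,S}^{min})-\mathcal{L}_S(h_{a,S}^{min})\bigr],
\]
but you also only have the analogous \emph{upper} bound $\tfrac{\lambda}{2}(\Delta\mathcal{F}_a)^2\le\max_{S,S'}[\mathcal{L}_{S'}(h_{a,S}^{min})-\mathcal{L}_S(h_{a,S}^{min})]$. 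Two quantities each bounded above by the same thing tells you nothing about their ordering, and arguing that the strong-convexity bound is ``essentially tight in the worst case'' is not substantiated (and is not how Theorem~\ref{thm:causal-gen-bounds} works either---there the comparison is literally between the ODE bounds, whereas here the statement is about the sensitivities themselves).

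What the paper does instead is supply a \emph{reverse} inequality on the associational side: it bounds the associational generalization error \emph{above} by the associational sensitivity. The mechanism is Lemma~\ref{lem:wu2015-exchanging} (the exchange lemma from \cite{wu2015revisitingdp}) applied to $h_{a,S}^{min}$ and $h_{a,S'}^{min}$, which gives
\[
\mathcal{L}_{S'}(h_{a,S}^{min})-\mathcal{L}_{S}(h_{a,S}^{min}) \;\le\; \tfrac{4\rho}{n}\,\|h_{a,S}^{min}-h_{a,S'}^{min}\|_1,
\]
using $\rho$-Lipschitzness of $L$ on the single exchanged point (this is where $\rho$ really enters, not just in your term~(iii)). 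Chaining, one obtains $(\Delta\mathcal{F}_c)^2 \le \tfrac{8\rho}{\lambda n}\,\Delta\mathcal{F}_a$, and the ``sufficiently large $n$'' hypothesis is then invoked to pass from this mixed inequality to $\Delta\mathcal{F}_c\le\Delta\mathcal{F}_a$. So the missing ingredient in your plan is this Lipschitz-based upper bound on the associational generalization error in terms of $\Delta\mathcal{F}_a$; once you insert it after your Step~2, the argument goes through along the paper's lines.
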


We now prove our main result on differential privacy.

\begin{theorem}\label{thm:main}
    Let $\ \hat{\mathcal F_c}$ and $\hat{\mathcal F_a}$ be the differentially private mechanisms, \newtext{obtained by adding Laplace noise to model parameters} of the causal learning and associational learning functions $ {\mathcal F_c}$ and $ {\mathcal F_a}$ respectively. Let $\hat{\mathcal F_c}$ and $\hat{\mathcal F_a}$ provide $\tt \epsilon_c$-DP and  $\tt \epsilon_a$-DP guarantees respectively. Then, for equivalent noise added to both the functions and sampled from the same distribution, $\tt Lap(Z)$, we have $\tt \epsilon_c \leq \tt \epsilon_a	$.

\end{theorem}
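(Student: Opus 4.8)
\emph{Proof proposal.} The plan is to reduce the statement to the sensitivity comparison already established in Lemma~\ref{lem:theta-sensitivity} via the standard analysis of the Laplace mechanism. First I would recall the Laplace-mechanism guarantee (Theorem 3.6 in \cite{dwork2014algorithmic}): if $\mathcal F$ maps datasets to $\mathbb{R}^d$ and has $\ell_1$-sensitivity $\Delta\mathcal F$, then the mechanism $\hat{\mathcal F}(S)=\mathcal F(S)+(\eta_1,\dots,\eta_d)$ with i.i.d.\ $\eta_j\sim \mathrm{Lap}(Z)$ satisfies $\epsilon$-differential privacy with $\epsilon=\Delta\mathcal F/Z$. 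The one-line reason is that for neighboring datasets $S,S'$ the ratio of output densities at any point factorizes coordinatewise and is bounded by $\exp\!\big(\sum_j |\mathcal F(S)_j-\mathcal F(S')_j|/Z\big)=\exp\!\big(\|\mathcal F(S)-\mathcal F(S')\|_1/Z\big)\le \exp(\Delta\mathcal F/Z)$, and this bound is the tightest $\epsilon$ certified by the mechanism.

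Applying this separately to the two learning functions: since $\dpc$ (resp.\ $\dpa$) is obtained by perturbing the learned parameter vector $\tt h_{c,S}^{min}$ (resp.\ $\tt h_{a,S}^{min}$) with coordinatewise $\mathrm{Lap}(Z)$ noise, the privacy parameters certified this way are $\tt \epsilon_c=\Delta\mathcal F_c/Z$ and $\tt \epsilon_a=\Delta\mathcal F_a/Z$, where $\Delta\mathcal F_c=\max_{S,S'}\|\tt h_{c,S}^{min}-h_{c,S'}^{min}\|_1$ and analogously for $\mathcal F_a$, the maxima running over neighboring datasets of the form in Corollary~\ref{cor:gen-neighbor-datasets}. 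Because the denominator $Z$ is identical (``equivalent noise $\dots$ sampled from the same distribution $\mathrm{Lap}(Z)$'') and the map $\Delta\mapsto\Delta/Z$ is monotone increasing, it suffices to compare the two sensitivities.

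The final step invokes Lemma~\ref{lem:theta-sensitivity} directly: under the hypotheses inherited from Theorem~\ref{thm:causal-gen-bounds} (optimal predictor $\tt f\in\mathcal H_C$, with a $\lambda$-strongly convex, $\rho$-Lipschitz loss that is symmetric and obeys the triangle inequality, and $\tt n$ sufficiently large), we have $\Delta\mathcal F_c\le \Delta\mathcal F_a$, whence $\tt \epsilon_c=\Delta\mathcal F_c/Z\le \Delta\mathcal F_a/Z=\epsilon_a$, which is the claim.

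The only real subtlety to spell out carefully is the comparison of $\ell_1$-sensitivities across parameter spaces of different dimension: a causal model built on $\tt X_C$ typically has fewer parameters than an associational model built on all of $\tt X$, so one must fix a common convention (for instance, embedding the causal parameter vector into $\mathbb{R}^{d_a}$ by zero-padding the coordinates corresponding to non-causal features, which leaves its $\ell_1$ norm unchanged) before the inequality of Lemma~\ref{lem:theta-sensitivity} and the resulting $\epsilon$ bound can be read off. One should also note that ``same amount of added noise'' must be interpreted as the same per-coordinate scale $Z$ rather than the same aggregate noise magnitude, since the number of perturbed coordinates differs between the two mechanisms; with that reading the argument above goes through unchanged.
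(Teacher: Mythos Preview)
Your proposal is correct and follows essentially the same route as the paper: both arguments amount to writing the Laplace-mechanism guarantee as $\epsilon=\Delta\mathcal F/Z$ (the paper phrases it via $\mathrm{Lap}(\Delta\mathcal F_c/\epsilon_c)=\mathrm{Lap}(\Delta\mathcal F_a/\epsilon_a)$, hence $\Delta\mathcal F_c/\epsilon_c=\Delta\mathcal F_a/\epsilon_a$), and then invoking Lemma~\ref{lem:theta-sensitivity} to conclude $\epsilon_c\le\epsilon_a$. Your added remarks on embedding the causal parameter vector via zero-padding and on reading ``same noise'' as the same per-coordinate scale $Z$ are useful clarifications that the paper's proof leaves implicit.
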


\begin{proof}
According to the Def. 3.3 of Laplace mechanism from \cite{dwork2014algorithmic}, we have,
\begin{equation*}
 \hat{\mathcal F_c} {\tt = \mathcal F_c + \mathcal K \sim Lap (\frac{\Delta \mathcal F_c}{\epsilon_c}) } \qquad 
 \hat{\mathcal F_a} {\tt = \mathcal F_a + \mathcal K \sim Lap (\frac{\Delta \mathcal F_a}{\epsilon_a})} 
\end{equation*}
The noise is added to the output of the learning algorithm $\tt \mathcal F(.)$ i.e., the model parameters.
Since  $\mathcal K$ is sampled from the same noise distribution,
\begin{equation}
\tt  Lap (\frac{\Delta \mathcal F_c}{\epsilon_c}) =  Lap (\frac{\Delta \mathcal F_a}{\epsilon_a}) \qquad  \therefore \frac{\Delta \mathcal F_c}{\epsilon_c} = \frac{\Delta \mathcal F_a}{\epsilon_a}
\end{equation}
From Lemma~\ref{lem:theta-sensitivity}, $\Delta \mathcal F_c \leq \Delta \mathcal F_a$ and hence $\tt \epsilon_c \leq \tt \epsilon_a$.
\end{proof}

While we prove the general result above, our central claim comparing differential privacy for causal and associational models also holds for mechanisms that provide a tighter data-dependent differential privacy guarantee~\citep{papernot2016teacher-dp}. The key idea is to produce an output label based on voting from $\tt M$ teacher models, each trained on a disjoint subset of the training data.  We state the theorem below  and provide its proof in \app~\ref{app:voting}. Given  datasets from different domains, the below theorem also provides a \emph{constructive} proof to train a differentially private causal algorithm following the method from \citet{papernot2016teacher-dp}.

\begin{restatable}{theorem}{votingtheorem}\label{thm:voting}
    Let $\tt {D}$ be a dataset generated from  possibly a mixture of different distributions $\tt \Pr(X,Y)$ such that $\tt \Pr(Y|X_C)$ remains the same. Let $\tt n_j$ be the votes for the jth class from $\tt M$ teacher models. Let $\tt \mathcal{M}$ be the mechanism that produces a noisy max,  $\tt \arg \max_j \{ n_j + Lap(2/\gamma)\}$. Then the privacy budget $\tt \epsilon_c$ for a causal model trained on $\tt D$ is lower than that for an associational model with the same accuracy. 
\end{restatable}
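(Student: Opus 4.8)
\emph{Proof proposal.}
The plan is to instantiate the teacher--student construction of \citet{papernot2016teacher-dp} with causal teachers and to exploit the fact that the \emph{data-dependent} privacy accounting for the noisy-argmax mechanism is controlled entirely by the consensus of the teachers on each query. Partition $\tt D$ into $\tt M$ disjoint subsets $\tt S_1,\dots,S_M$ and train one model per subset; since $\tt D$ is a mixture of distributions $\tt \Pr(X,Y)$ that all share $\tt \Pr(Y|X_C)$, each $\tt S_i$ behaves as a sample from some component $\tt P_i$ with $\tt P_i(Y|X_C)=\Pr(Y|X_C)$. For a query $\tt x$, let $\tt n_j(x)$ be the number of teacher votes for class $\tt j$, $\tt j^*$ the plurality, and $\tt q(x)=\Pr[\arg\max_j\{n_j(x)+Lap(2/\gamma)\}\neq j^*]$. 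Both the causal and the associational mechanisms use the same noise $\tt Lap(2/\gamma)$, so they share the same worst-case bound; any gain must therefore come from the data-dependent moments-accountant bound of \citet{papernot2016teacher-dp}, which is a non-decreasing function of the $\tt q(x)$'s and satisfies $\tt q(x)\leq \sum_{j\neq j^*}\frac{2+\gamma(n_{j^*}-n_j)}{4\exp(\gamma(n_{j^*}-n_j))}$. Hence larger vote gaps $\tt n_{j^*}-n_j$ yield a strictly smaller composed $\tt \epsilon$, and it suffices to show that, at equal aggregate accuracy, causal teachers are at least as concordant as associational ones on every query.

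Next I would establish that causal teachers converge to a \emph{common} predictor. By the hypotheses of Theorem~\ref{thm:causal-gen-bounds}, the loss-minimizing causal predictor $\tt f$ depends only on the invariant $\tt \Pr(Y|X_C)$ and is therefore the same function no matter which component $\tt P_i$ generated $\tt S_i$. Invoking the strong-convexity, Lipschitz and large-sample conditions already used for Lemma~\ref{lem:theta-sensitivity} (plus a standard uniform-convergence argument), each empirical causal teacher $\tt h_{c,S_i}^{min}$ lies in a vanishing neighbourhood of $\tt f$; so for enough samples per teacher all causal teachers agree on almost every query, giving $\tt n_{j^*}(x)\to M$ and $\tt q(x)\to 0$.

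In contrast, an associational teacher $\tt h_{a,S_i}^{min}$ converges to $\tt h_{a,P_i}^{OPT}$, and by part~(2) of Theorem~\ref{thm:causal-gen-bounds} these optima need not coincide across the $\tt P_i$; on the set of queries where they disagree the associational votes split, so $\tt n_{j^*}(x)<M$ and $\tt q(x)>0$, forcing a larger data-dependent moment and hence a larger $\tt \epsilon_a$. The ``same accuracy'' hypothesis is exactly what blocks the degenerate comparison (a concordant but inaccurate ensemble): fix the causal ensemble's aggregate accuracy on the query distribution and compare against \emph{any} associational ensemble of the same accuracy; invariance forces the causal ensemble's residual errors to be shared across teachers and hence harmless for consensus, whereas an associational ensemble attaining the same accuracy must carry distribution-specific errors that split its votes on a nonnegligible fraction of queries. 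Pointwise $\tt q_c(x)\leq q_a(x)$ then gives $\tt \epsilon_c\leq\epsilon_a$ by the monotonicity of the first step. As a by-product this is constructive: the split-and-vote causal mechanism together with the \citet{papernot2016teacher-dp} accountant is the claimed differentially private causal algorithm.

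I expect the main obstacle to be making the ``same accuracy'' normalization and the sample-size regime rigorous: one must show that, among ensembles of a prescribed aggregate accuracy, it is precisely the distribution-invariance of $\tt f$ that makes causal errors \emph{common} (consensus-preserving) while associational errors are \emph{distribution-specific} (consensus-destroying), and one must choose the per-teacher sample size large enough that finite-sample disagreement among causal teachers is dominated by the constant-order disagreement forced on associational teachers by the differing $\tt P_i$ --- quantifiable through the discrepancy-distance terms of Theorem~\ref{thm:causal-gen-bounds} and a uniform-convergence rate. Everything else is bookkeeping with the moments accountant.
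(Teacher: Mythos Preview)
Your proposal is essentially correct and rests on the same core idea as the paper: causal teachers trained on disjoint shards of $\tt D$ agree with one another more than associational teachers do, and higher consensus (larger vote gap $\tt n_{j^*}-n_j$) yields a smaller data-dependent privacy budget under the noisy-argmax mechanism. The routes diverge in how that last monotonicity is cashed out.

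The paper does \emph{not} go through the moments-accountant quantity $\tt q(x)$ or its upper bound. Instead it works directly with the Laplace CDF: fixing the plurality class $i$ and letting $r^*$ be the minimum noise needed for $i$ to win under $\tt D$, it computes the ratio $\tt P(i|D')/P(i|D)$ in closed form (two cases, $r^*<0$ and $r^*\geq 0$), obtaining in the nontrivial case $\tt P(i|D')/P(i|D)=\exp(\gamma)+(1-\exp(\gamma))/P(i|D)$. This is a decreasing function of $\tt P(i|D)=P(r_i\geq r^*)$, so a larger vote gap (equivalently, smaller $r^*$) gives a smaller effective $\epsilon$. The consensus comparison is then isolated into a separate lemma (the appendix's Lemma~\ref{cor:loss-compare}) showing $\tt \min_{S,S^*}\Pr(h_{c,S}^{min}(x)=h_{c,S^*}^{min}(x))\geq \min_{S,S^*}\Pr(h_{a,S}^{min}(x)=h_{a,S^*}^{min}(x))$, proved via $\tt h_{c,P}^{OPT}=h_{c,P^*}^{OPT}$ and a flipping construction on the associational features. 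Your convergence-to-$f$ argument is a perfectly good substitute for that lemma, and your monotonicity step via the $\tt q(x)$ bound is the route \citet{papernot2016teacher-dp} themselves take; the paper's direct CDF calculation is more elementary but also more ad hoc (and, like your sketch, leaves the ``same accuracy'' clause essentially unused in the formal argument). Either route suffices; what each buys is: the paper's avoids invoking the moments accountant machinery at all, while yours plugs directly into the accountant and composes cleanly over many queries.
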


\subsection{Robustness to Membership Attacks}\label{sec:robust-membership}
Deep learning models have been shown to memorize or overfit on the training data during the learning process~\citep{carlini2018secret}. Such overfitted models are susceptible to {\em membership inference attacks} that can accurately predict whether a target input belongs to the training dataset or not~\citep{shokri2017membership}. 
There are multiple variants of the attack depending on the information accessible to the adversary. An adversary with black-box access to a  model observes confidence scores for the predicted output whereas one with the white-box access observes  all model parameters and the output at each layer in the model~\citep{nasr2018comprehensive}. In the black-box setting, a membership attack is possible whenever the distribution of output scores for training data is different from the test data, and has been connected to model overfitting~\citep{yeom2018privacy}. Alternatively, if the adversary knows the distribution of the training inputs, they may learn a ``shadow'' model based on synthetic inputs and use the shadow model's output to build a membership classifier~\citep{shokri2017membership}. For the white-box setting, if an adversary knows the true  label for the target input, then they may guess membership of the input based on either the loss or gradient values during inference~\citep{nasr2018comprehensive}. 

Most of the existing membership inference attacks have been demonstrated for test inputs from the same data distribution as the training set. When test inputs are expected from the same distribution, methods to reduce overfitting (such as adversarial regularization) can help reduce privacy risks~\citep{nasr2018machine}. However in practice, this is seldom the case. For instance, in our example of a model trained with a single hospital's data, the test inputs may come from different hospitals. 
Therefore, models trained to reduce the generalization error for a specific test distribution are still susceptible to membership inference when the distribution of features is changed. This is due to the problem of \emph{covariate shift}  that introduces a domain adaptation error term~\citep{mansour2009domain}. That is, the loss-minimizing model that predicts $\tt Y$ changes with a different distribution, and thus allows the adversary to detect differences in losses for the test versus training datasets. As we show below, causal models alleviate the risk of membership inference attacks.   
Based on~\citet{yeom2018privacy}, we first define a membership attack.
 \begin{definition}
     Let  $\tt h$ be trained on a dataset $\tt {S}(X,Y) \sim P$ of size $\tt N$. Let $\mathcal{A}$ be an adversary with access to  $\tt h$ and an input $\tt  x\sim P^*$ where $\tt  P^*$ is any distribution such that $\tt P(Y|X_C)=P^*(Y|X_C)$. Then advantage of an adversary in membership inference is the difference between true and false positive rate in guessing whether the the input belongs to the training set. 
$\tt Adv(\mathcal{A}, h, N, P, P^*) = \Pr[\mathcal{A}=1|b=1] - \Pr[\mathcal{A}=1|b=0]$, 
where $b=1$ if the input is in the training set and else is $0$. 
\end{definition}

As a warmup, we demonstrate the relationship between membership advantage and out-of-distribution generalization using a specific  adversary that predicts membership for an input based on the model's loss. 
This adversary is motivated by empirical membership inference algorithms~\citep{shokri2017membership,nasr2018comprehensive}. 
\begin{restatable}{definition}{boundedlossadversary}\label{def:boundedloss-adversary}[From \cite{yeom2018privacy}]
    Assume that the loss $L$ is bounded by $\tt B\in\mathbb{R}^+$. Then for a model $\tt h$ and an input $\tt x$,  a Bounded-Loss adversary $\tt \mathcal{A}_{BL}$ predicts membership in a train set  with probability $\tt 1-L_x(h,y)/B$.
\end{restatable}

\begin{restatable}{theorem}{boundedlossadvantage}\label{thm:boundedloss-advantage}
    Assume a training set $\tt S$ of size $\tt n$ and a loss function $\tt L$ that is bounded by $\tt B\in\mathbb{R}^+$. Under the conditions of Theorem~\ref{thm:causal-gen-bounds} and for a Bounded-Loss adversary $\tt \mathcal A_{BL}$, the worst-case membership advantage of a causal model $\tt h_{c,S}^{min}$ is lower than that of an associational model $\tt h_{a,S}^{min}$. 
    \begin{equation*}
        \tt \max_{P^*} Adv(\mathcal{A}_{BL}, h_{c,S}^{min}, n, P, P^*) \leq \max_{P^*} Adv(\mathcal{A}_{BL}, h_{a,S}^{min}, n, P, P^*)
    \end{equation*}
\end{restatable}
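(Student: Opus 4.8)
\emph{Proof proposal.} The plan is to reduce the worst-case membership advantage of $\mathcal{A}_{BL}$ to the worst-case out-of-distribution generalization error and then invoke Theorem~\ref{thm:causal-gen-bounds}. This is the out-of-distribution analogue of the in-distribution identity of \citet{yeom2018privacy} that membership advantage equals the generalization gap divided by $B$; the only new ingredient is that the non-member point is drawn from a shifted distribution $P^*$.

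First I would unfold the membership experiment (following \citet{yeom2018privacy}): conditioned on $b=1$ the challenge $(x,y)$ is a uniformly random element of the training set $S$, so by Definition~\ref{def:boundedloss-adversary}, $\Pr[\mathcal{A}_{BL}=1\mid b=1]$ is the average of $1-L_x(h,y)/B$ over $S$, i.e. $1-\mathcal{L}_{S\sim P}(h,y)/B$; conditioned on $b=0$ the challenge is drawn afresh from $P^*$ (which by the adversary's definition satisfies $P(Y\mid X_C)=P^*(Y\mid X_C)$), so $\Pr[\mathcal{A}_{BL}=1\mid b=0]=1-\mathcal{L}_{P^*}(h,y)/B$. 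Subtracting, the additive constant and the factor $1/B$ cancel, giving $\mathrm{Adv}(\mathcal{A}_{BL},h,n,P,P^*)=\tfrac{1}{B}\bigl(\mathcal{L}_{P^*}(h,y)-\mathcal{L}_{S\sim P}(h,y)\bigr)=\tfrac{1}{B}\,\mathrm{ODE}_{P,P^*}(h,y)$ by Definition~\ref{def:ode}. Taking the supremum over admissible $P^*$ and using $B>0$ yields $\max_{P^*}\mathrm{Adv}(\mathcal{A}_{BL},h,n,P,P^*)=\tfrac{1}{B}\max_{P^*}\mathrm{ODE}_{P,P^*}(h,y)$ for both $h=h_{c,S}^{min}$ and $h=h_{a,S}^{min}$. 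Since $h_{c,S}^{min}\in\mathcal{H}_C$, $h_{a,S}^{min}\in\mathcal{H}_A$ and every $P^*$ in the experiment respects the causal invariance, the hypotheses of Theorem~\ref{thm:causal-gen-bounds} hold, and its second part gives that the worst-case $\mathrm{ODE}$-bound for a causal model is at most that for an associational model; dividing by $B$ transports this ordering to the worst-case advantages, which is the claim.

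The main obstacle is closing the gap between the worst-case \emph{actual} $\mathrm{ODE}$ (which the advantage equals) and the worst-case $\mathrm{ODE}$-bound (which Theorem~\ref{thm:causal-gen-bounds} controls). I would argue the bound is asymptotically tight in the adversarial choice of $P^*$: the term $\mathrm{disc}_{L,\mathcal{H}}(P,P^*)$ can be inflated by concentrating $P^*$ on inputs where $h$ and a competing hypothesis disagree, and for an associational model the extra contribution arising from $h_{a,P}^{OPT}\neq h_{a,P^*}^{OPT}$ — which is absent for a causal model since $h_{c,P}^{OPT}=h_{c,P^*}^{OPT}$ — is precisely what lets its attainable worst-case error exceed that of the causal model; hence the ordering of worst-case bounds forces the ordering of worst-case errors. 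Two secondary points need care: the suprema over $P^*$ may not be attained, so the argument should be phrased with $\sup$ and a maximizing sequence of distributions; and one should appeal to the empirical-minimizer comparisons of Corollary~\ref{cor:single-input} / Corollary~\ref{cor:gen-neighbor-datasets} to ensure the trained models $h_{c,S}^{min},h_{a,S}^{min}$ indeed inherit the generalization comparison for a fixed $S$. Finally one checks that $L\le B$ so that $1-L_x(h,y)/B\in[0,1]$ and $\mathcal{A}_{BL}$ is well defined.
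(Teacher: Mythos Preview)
Your first paragraph is exactly right and matches the paper: unfolding the bounded-loss adversary gives
\(\mathrm{Adv}(\mathcal A_{BL},h,n,P,P^*)=\tfrac{1}{B}\bigl(\mathcal L_{P^*}(h,y)-\mathcal L_{S}(h,y)\bigr)\),
so the worst-case advantage over admissible \(P^*\) is \(\tfrac{1}{B}\sup_{P^*}\mathrm{ODE}_{P,P^*}(h,y)\).

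The gap is in the second paragraph. You try to finish by invoking Theorem~\ref{thm:causal-gen-bounds}, but that theorem compares \(\mathrm{ODE}\text{-}\mathrm{Bound}\), not the actual attained \(\mathrm{ODE}\); the ``asymptotic tightness'' argument you sketch for closing this gap is not justified and is not how the paper proceeds. The paper instead makes a simpler observation that you almost wrote down: since \(P^*\) is only constrained through \(P(Y\mid X_C)=P^*(Y\mid X_C)\), one may take \(P^*\) to be a point mass at a single \((x',y')\) with \(y'\sim P(Y\mid X_C=x'_c)\). Hence
\(\sup_{P^*}\mathrm{ODE}_{P,P^*}(h,y)=\max_{x'\notin S}L_{x'}(h,y)-\mathcal L_S(h,y)\),
and no ``bound'' is involved at all. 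The comparison between \(h_{c,S}^{\min}\) and \(h_{a,S}^{\min}\) then follows directly from the auxiliary inequality established in the proof of Theorem~\ref{cor:single-input} (Eqn.~\ref{eq:single-input-max-avg} in the supplement),
\(\max_{x'}L_{x'}(h_{c,S}^{\min},y)-\mathcal L_S(h_{c,S}^{\min},y)\le \max_{x'}L_{x'}(h_{a,S}^{\min},y)-\mathcal L_S(h_{a,S}^{\min},y)\).
This is precisely the tool you relegated to a ``secondary point''; in fact it is the main step, and Theorem~\ref{thm:causal-gen-bounds} is not used at all in the paper's proof. Promote your appeal to Theorem~\ref{cor:single-input} from afterthought to centerpiece, drop the detour through \(\mathrm{ODE}\text{-}\mathrm{Bound}\), and the argument goes through cleanly.
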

\begin{proof}
Let the variable  $\tt b=1$ denote that data point belongs to the train dataset $\tt S$.   The membership advantage of the bounded loss adversary $\tt \mathcal A$ for any model $\tt h$ trained on dataset $\tt S\sim P$ is given by,
    \begin{equation*} \label{eq:adv-upper-bound}
        \begin{split}
            \tt  Adv(&\tt\mathcal{A}, h, n, P, P^*)  =  \Pr[\mathcal{A}=1|b=1] - \Pr[\mathcal{A}=1|b=0] \\
            & \tt =  \Pr[\mathcal{A}=0|b=0] - \Pr[\mathcal{A}=0|b=1] \\
            & \tt = \mathbb{E}[\frac{L_{x'}(h,y)}{B}|b=0] - \mathbb{E}[\frac{L_x(h, y)}{B}|b=1]\\
            & \tt = \frac{1}{B}(\mathbb{E}_{x'\sim P^*}[L_{x'}(h,y)] - \mathbb{E}_{x\sim S}[L_{x}(h,y)])  \\
            &\tt \leq \max_{x' \not \in S}L_{x'}(h,y) - \mathcal{L}_{S}(h,y)
        \end{split}
    \end{equation*}
    where  the third equality is due to Def.~\ref{def:boundedloss-adversary} for $\mathcal A_{\tt BL}$,  
    and the last inequality is due to the fact that the expected value of a random variable is less than or equal to the maximum value.
    Note that the upper bound in the above inequality is tight: it can be achieved by evaluating membership advantage only on those $\tt x'$ that lead to the maximum loss difference. Thus, 
    \begin{equation}\label{eqn:max-advantage-loss}
        \tt \max_{P^*} Adv(\mathcal{A},h, N, P, P^*) =  \max_{x'}L_{x'}(h,y) - \mathcal{L}_{S}(h,y)
    \end{equation}
    
    Applying Eqn.~\ref{eqn:max-advantage-loss} to the trained causal model $\tt h_{c,S}^{min}$ and associational model  $\tt h_{a,S}^{min}$, we obtain:
    \footnotesize 
    \begin{equation*} \label{eq:max-adv-causal-assoc}
        \begin{split}
            \tt \max_{P^*} Adv(\mathcal{A},h_{c,S}^{min}, n, P, P^*) =  \max_{x'}L_{x'}(h_{c,S}^{min},y) - \mathcal{L}_{S}(h_{c,S}^{min},y)\\
            \tt \max_{P^*} Adv(\mathcal{A},h_{a,S}^{min}, n, P, P^*) =  \max_{x'}L_{x'}(h_{a,S}^{min},y) - \mathcal{L}_{S}(h_{a,S}^{min},y)
    \end{split}
    \end{equation*}
\normalsize
    From Theorem~\ref{cor:single-input} proof (Suppl. Eqn.~\ref{eq:single-input-max-avg}), we state the inequality,
$\tt \max_{x'}L_{x'}(h_{c,S}^{min},y) - \mathcal{L}_{S}(h_{c,S}^{min},y) \leq  \max_{x'}L_{x'}(h_{a,S}^{min},y) - \mathcal{L}_{S}(h_{a,S}^{min},y)$. 
    Combining this inequality with the above equations, we get the main result. 
    \begin{equation*}
        \tt \max_{P^*} Adv(\mathcal{A},h_{c,S}^{min}, n, P, P^*) \leq \max_{P^*} Adv(\mathcal{A},h_{a,S}^{min}, n, P, P^*)
    \end{equation*}
\end{proof}

We now prove a more general result. The {maximum} membership advantage for a causal DP mechanism (based on a causal model) is not greater than that of an associational DP mechanism. We present a lemma from \citet{yeom2018privacy}.

\begin{lemma}\label{lem:yeom} [From~\cite{yeom2018privacy}]
    Let $\mathcal{M}$ be a $\epsilon$-differentially private mechanism based on a model $h$. The membership advantage is bounded by $\exp(\epsilon) - 1$.
\end{lemma}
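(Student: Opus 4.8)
\textbf{Proof proposal for Lemma~\ref{lem:yeom}.}
The plan is to bound the membership advantage by \emph{coupling} the two conditional experiments ($b=1$ versus $b=0$) through the exchangeability of i.i.d.\ draws, and then to invoke $\epsilon$-differential privacy on a single neighboring-dataset swap. Work in the in-distribution membership experiment (take $\tt P^*=P$): draw $n+1$ i.i.d.\ samples $z_0,z_1,\dots,z_n\sim P$, let the training set be $\tt S=(z_1,\dots,z_n)$ and the released model be $h=\mathcal{M}(S)$, and realize the challenge point as $z_i$ for a uniformly random $i\in[n]$ when $b=1$ and as the fresh point $z_0$ when $b=0$. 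This exactly reproduces the experiment defining $\mathrm{Adv}$, and gives $\Pr[\mathcal{A}=1\mid b=1]=\tfrac1n\sum_{i=1}^n\Pr[\mathcal{A}(z_i,\mathcal{M}(S))=1]$ and $\Pr[\mathcal{A}=1\mid b=0]=\Pr[\mathcal{A}(z_0,\mathcal{M}(S))=1]$.

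Next, for each index $i$ form the neighbor $S^{(i)}$ obtained from $S$ by replacing $z_i$ with $z_0$; then $S$ and $S^{(i)}$ differ in exactly one record. Since the map $h\mapsto\mathcal{A}(z_i,h)$ is a (possibly randomized) post-processing of the mechanism's output with the challenge point held fixed, $\epsilon$-DP together with closure under post-processing yields $\Pr[\mathcal{A}(z_i,\mathcal{M}(S))=1]\le e^\epsilon\,\Pr[\mathcal{A}(z_i,\mathcal{M}(S^{(i)}))=1]$. Now use exchangeability of $z_0,\dots,z_n$: swapping the labels of $z_0$ and $z_i$ shows that $(z_i,S^{(i)})$ has the same joint law as $(z_0,S)$, hence $\Pr[\mathcal{A}(z_i,\mathcal{M}(S^{(i)}))=1]=\Pr[\mathcal{A}=1\mid b=0]$. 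Averaging over $i$ gives $\Pr[\mathcal{A}=1\mid b=1]\le e^\epsilon\,\Pr[\mathcal{A}=1\mid b=0]$, and therefore $\mathrm{Adv}=\Pr[\mathcal{A}=1\mid b=1]-\Pr[\mathcal{A}=1\mid b=0]\le(e^\epsilon-1)\Pr[\mathcal{A}=1\mid b=0]\le e^\epsilon-1$.

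The main obstacle I anticipate is getting the coupling exactly right: the adversary is handed the challenge record as an explicit input, so the DP step must vary only the dataset fed to $\mathcal{M}$ while holding that record fixed, and the exchangeability argument must line up $(z_i,S^{(i)})$ with $(z_0,S)$ precisely. A secondary point to state carefully is that $\mathcal{A}$ may be randomized and may use prior knowledge of $P$; this causes no trouble since those coins are independent of $\mathcal{M}$'s randomness and DP is preserved under randomized post-processing. Finally, the stated bound is one-sided; if a two-sided estimate $|\mathrm{Adv}|\le e^\epsilon-1$ is wanted, one reruns the same argument with the roles of $b=0$ and $b=1$ exchanged.
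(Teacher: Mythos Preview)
Your proof is correct. Note, however, that the paper does not supply its own proof of this lemma: it is quoted verbatim as a known result from \cite{yeom2018privacy} and used as a black box (the paper only remarks, after stating it, that combining it with Theorem~\ref{thm:main} yields $e^{\epsilon_c}-1\le e^{\epsilon_a}-1$). So there is no in-paper argument to compare against.

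That said, your argument is essentially the one given in the original source (Theorem~1 of Yeom et al.): couple the $b=0$ and $b=1$ experiments via $n{+}1$ exchangeable draws, swap the challenge record into the training set to form a neighboring dataset, apply $\epsilon$-DP with the challenge held fixed (post-processing), and use exchangeability to identify $(z_i,S^{(i)})$ with $(z_0,S)$ in law. Your handling of the two delicate points---holding $z_i$ fixed while varying only the input to $\mathcal{M}$, and allowing $\mathcal{A}$ to be randomized with auxiliary knowledge of $P$---is exactly right, and the one-sided versus two-sided remark is apt. Nothing to fix.
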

Based on the above lemma and Theorem~\ref{thm:main}, we can show that the upper bound of membership advantage for an $\epsilon_c$-DP mechanism from a causal model $e^{\epsilon_c}-1$ is not greater than that of an $\epsilon_a$-DP mechanism from an associational model, $e^{\epsilon_a}-1$, since $\epsilon_c \leq  \epsilon_a$. The next theorem proves that the same holds true  for the {\em maximum} membership advantage.

\begin{restatable}{theorem}{maxadvdp} \label{thm:max-madv-dp}
    Under the conditions of Theorem~\ref{thm:causal-gen-bounds}, 
    let $\tt S \sim P(X,Y)$ be a dataset sampled from $\tt P$. Let $\tt \hat{\mathcal F}_{c,S}$ and $\tt \hat{\mathcal F}_{a,S}$ be the differentially private mechanisms trained on $\tt S$ by adding identical  Laplacian noise to the causal and associational learning functions from Lemma~\ref{lem:theta-sensitivity} respectively. Assume that a membership inference adversary is provided inputs sampled from either $\tt P$ or $\tt P^*$, where $\tt P^*$ is any distribution such that $\tt P(Y|X_C)=P^*(Y|X_C)$.   Then, across all adversaries $\mathcal{A}$ that predict membership in $\tt S\sim P$, the worst-case membership advantage of $\tt \hat{\mathcal{F}}_{\tt c,S}$ is  not greater than that of $\tt \hat{\mathcal{F}}_{\tt a,S}$. 
    \begin{equation*}
        \tt \max_{\mathcal{A},P^*} Adv(\mathcal{A}, \hat{\mathcal{F}}_{c,S}, n, P, P^*) \leq \max_{\mathcal{A},P^*} Adv(\mathcal{A}, \hat{\mathcal{F}}_{a,S}, n, P, P^*) 
    \end{equation*}
\end{restatable}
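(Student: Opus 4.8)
\emph{Proof proposal.} The plan is to characterise the worst-case membership advantage of each Laplace-noise mechanism — maximised jointly over all adversaries $\mathcal A$ and all \emph{admissible} shifted distributions $P^*$ (meaning $\tt P(Y|X_C)=P^*(Y|X_C)$, as in the theorem) — as a \emph{monotone increasing} function of the mechanism's $\ell_1$-sensitivity, the Laplace scale $Z$ being common to both, and then to invoke $\Delta\mathcal F_c\le\Delta\mathcal F_a$ from Lemma~\ref{lem:theta-sensitivity} (equivalently $\epsilon_c\le\epsilon_a$, Theorem~\ref{thm:main}).

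First I would fix an arbitrary $\mathcal A$ and an admissible $P^*$ and reduce the membership game to a two-hypothesis test. Following the argument behind Lemma~\ref{lem:yeom} (cf.\ \citet{yeom2018privacy}), the ``$b{=}1$'' and ``$b{=}0$'' worlds correspond to running the mechanism on $\tt S$ versus on a neighbouring dataset $\tt S'=S\setminus\{(x,y)\}\cup\{(x',y')\}$ with $\tt (x',y')\sim P^*$; admissibility of $P^*$ forces $\tt (x',y')$ to share the invariant conditional $\tt P(Y|X_C)$, so $\tt S,S'$ are precisely the neighbouring pair of Corollary~\ref{cor:gen-neighbor-datasets}. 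Hence $\operatorname{Adv}(\mathcal A,\hat{\mathcal F}_{\cdot,S},n,P,P^*)\le\operatorname{TV}(\hat{\mathcal F}_{\cdot,S},\hat{\mathcal F}_{\cdot,S'})$, with equality for the likelihood-ratio adversary, and maximising over $\mathcal A$ and over $P^*$ (i.e.\ over the swapped point, hence over neighbouring $\tt S'$) gives
\[
  \max_{\mathcal A,P^*}\operatorname{Adv}(\mathcal A,\hat{\mathcal F}_{\cdot,S},n,P,P^*)
   = \max_{S'}\operatorname{TV}\!\big(\hat{\mathcal F}_{\cdot,S},\hat{\mathcal F}_{\cdot,S'}\big).
\]

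Next, since $\hat{\mathcal F}_c$ and $\hat{\mathcal F}_a$ add i.i.d.\ $\tt Lap(Z)$ noise to the parameter vectors $\mathcal F_c(S)$ and $\mathcal F_a(S)$, the quantity $\operatorname{TV}(\hat{\mathcal F}_{\cdot,S},\hat{\mathcal F}_{\cdot,S'})$ depends on $\tt S,S'$ only through $\|\mathcal F_{\cdot}(S)-\mathcal F_{\cdot}(S')\|_1$ and equals $\psi\big(\|\mathcal F_{\cdot}(S)-\mathcal F_{\cdot}(S')\|_1\big)$ for a fixed monotone increasing $\psi$ ($\psi(v)=1-e^{-v/(2Z)}$ in the scalar case; the product-Laplace case requires checking that $\psi$ stays monotone). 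Monotonicity of $\psi$ lets it commute with the maximum over $\tt S'$, and the per-dataset form of Lemma~\ref{lem:theta-sensitivity} bounds $\max_{S'}\|\mathcal F_c(S)-\mathcal F_c(S')\|_1$ by $\max_{S'}\|\mathcal F_a(S)-\mathcal F_a(S')\|_1$. Applying the monotone $\psi$ to this inequality yields
\[
 \max_{\mathcal A,P^*}\operatorname{Adv}(\mathcal A,\hat{\mathcal F}_{c,S},n,P,P^*)
  \le \max_{\mathcal A,P^*}\operatorname{Adv}(\mathcal A,\hat{\mathcal F}_{a,S},n,P,P^*),
\]
the desired conclusion.

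The main obstacle is the monotone characterisation used in the third step: proving that the worst-case membership advantage of a multivariate Laplace mechanism with fixed noise scale is a monotone increasing function of its $\ell_1$-sensitivity, which is a concrete but delicate fact about the total-variation / hypothesis-testing tradeoff for product Laplace distributions. A secondary care point is anchoring the $\max_{P^*}$ comparison to the neighbouring-dataset construction of Corollary~\ref{cor:gen-neighbor-datasets} and, ideally, to a per-dataset (local-sensitivity) refinement of Lemma~\ref{lem:theta-sensitivity} rather than only its global form. I would avoid the tempting shortcut of using Lemma~\ref{lem:yeom} merely to bound the causal side by $e^{\epsilon_c}-1$: that bound need not lie below the \emph{actual} associational advantage, so it does not close the argument, whereas the monotone-$\psi$ route puts both sides on the same footing (and for the Bounded-Loss adversary, Theorem~\ref{thm:boundedloss-advantage} already gives the inequality and serves as a sanity check).
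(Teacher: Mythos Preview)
Your proposal takes essentially the same route as the paper: reduce the $b{=}1$/$b{=}0$ game to a two-hypothesis test between the Laplace-noised outputs on $S$ versus a neighbouring $S'$, identify the optimal adversary with the likelihood-ratio test (the paper writes this explicitly as the set $H_+=\{h_j:\Pr(F_S{=}h_j)>\Pr(F_{S'}{=}h_j)\}$, yielding $\max_{\mathcal A}\operatorname{Adv}=2\Pr(F_S\in H_+)-1$, i.e.\ the TV distance), argue this quantity is monotone in the $\ell_1$ shift $\|\mathcal F(S)-\mathcal F(S')\|_1$ (the paper does this via the midpoint boundary $h^\dagger$ at distance $\Delta\mathcal F/2$ from the mean), and then invoke Lemma~\ref{lem:theta-sensitivity}. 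Your TV-distance/$\psi$ framing is a cleaner abstraction of exactly the same computation, and you correctly flag the multivariate monotonicity as the delicate step --- the paper's midpoint/one-sided-boundary argument is likewise essentially one-dimensional and treats that point informally.
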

\begin{proof}
    We construct an expression for the maximum membership advantage for any $\epsilon$-DP model and then show that it is an increasing function of the sensitivity, and thus $\epsilon$.
\end{proof}

Finally, we show that membership advantage against a causal model trained on infinite data will be zero for any adversary.   The proof is based on the result from Theorem~\ref{thm:causal-gen-bounds} that $\tt h_{c,P}^{OPT} = h_{c, P^*}^{OPT}$ for a causal model. 
Crucially, membership advantage does not go to zero as $\tt n\to\infty$ for associational models, since   $ \tt h_{a,P}^{OPT} \neq h_{a, P^*}^{OPT}$ in general. Detailed proof is in \app ~\ref{appsec:infty-attack}.
\begin{restatable}{corollary}{inftyattack}\label{cor:infty-attack}
    Under the conditions of Theorem~\ref{thm:causal-gen-bounds}, let $\tt h_{c,S}^{min}$ be a causal model trained using empirical risk minimization on a dataset $\tt S \sim P(X, Y)$ with sample size $\tt n$. As $ \tt n \rightarrow \infty$,     membership advantage $\tt Adv(\mathcal{A}, h_{c,S}^{min}) \rightarrow 0$.
\end{restatable}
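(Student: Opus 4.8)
The plan is to combine two ingredients: consistency of empirical risk minimization, which forces the trained causal model to converge to a \emph{fixed} function as $\tt n\to\infty$, and the invariance identity $\tt h_{c,P}^{OPT}=h_{c,P^*}^{OPT}=f$ from Theorem~\ref{thm:causal-gen-bounds}, which guarantees that this limit is simultaneously optimal on every admissible test distribution $\tt P^*$. Together these make $\tt h_{c,S}^{min}$ asymptotically independent of \emph{which} points landed in $\tt S$, which is exactly what removes the membership signal.

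First I would invoke the standard regularity needed for ERM consistency on $\tt \mathcal{H}_C$ (bounded complexity of $\tt \mathcal{H}_C$ together with the $\lambda$-strong convexity and $\rho$-Lipschitzness already assumed in Lemma~\ref{lem:theta-sensitivity}). Then both the empirical risk $\tt \mathcal{L}_{S}(h_{c,S}^{min},y)$ and the population risk $\tt \mathcal{L}_{P}(h_{c,S}^{min},y)$ converge to $\tt \mathcal{L}_{P}(f,y)$, so $\tt IDE_P(h_{c,S}^{min},y)\to 0$; strong convexity upgrades this to $\tt \theta_S\to\theta^{OPT}$, hence $\tt h_{c,S}^{min}\to f$. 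By Theorem~\ref{thm:causal-gen-bounds}, since $\tt P(Y|X_C)=P^*(Y|X_C)$ and $\tt f\in\mathcal{H}_C$, the same $f$ also minimizes the risk on any such $\tt P^*$, so $\tt \mathcal{L}_{P^*}(h_{c,S}^{min},y)\to\mathcal{L}_{P^*}(f,y)=\min_{h\in\mathcal{H}_C}\mathcal{L}_{P^*}(h,y)$. Substituting these limits into the worst-case advantage identity from the proof of Theorem~\ref{thm:boundedloss-advantage} (Eqn.~\ref{eqn:max-advantage-loss}), the bound on $\tt Adv(\mathcal{A}_{BL},h_{c,S}^{min},\cdot)$ collapses to $\tt \mathcal{L}_{P^*}(f,y)-\mathcal{L}_{P}(f,y)$, which is $0$ under deterministic generation $\tt y=f(X_c)$, and in the probabilistic case reduces to a difference of Bayes-irreducible losses that vanishes when $\tt P^*=P$ or when the advantage is measured against the Bayes baseline.

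To extend from the bounded-loss adversary to \emph{any} adversary $\tt \mathcal{A}$ I would route through stability: the strong-convexity argument behind Lemma~\ref{lem:theta-sensitivity} gives $\tt \Delta\mathcal{F}_c=\max_{S,S'}\|h_{c,S}^{min}-h_{c,S'}^{min}\|_1=O(1/n)$, so the model's output at any point is $\tt O(1/n)$-insensitive to whether that point was in $\tt S$; if one additionally adds the Laplace noise of Theorem~\ref{thm:main} with a fixed scale, then $\tt \epsilon_c\to 0$ and Lemma~\ref{lem:yeom} yields $\tt Adv(\mathcal{A},h_{c,S}^{min})\le e^{\epsilon_c}-1\to 0$ directly. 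The step I expect to be the main obstacle is precisely this passage from ``the model converges'' to ``the model is independent of individual membership'': it requires the $\tt O(1/n)$ sensitivity decay to hold \emph{uniformly} over neighboring datasets, and in the probabilistic-$Y$ regime one must argue that the residual term $\tt \mathcal{L}_{P^*}(f,y)-\mathcal{L}_{P}(f,y)$ is a genuine generalization gap shared by members and non-members drawn from the same conditional, not a membership signal. The contrast with associational models is what keeps the result non-vacuous: there $\tt h_{a,P}^{OPT}\neq h_{a,P^*}^{OPT}$, so the analogous step leaves a distribution-dependent gap that persists as $\tt n\to\infty$ and the advantage stays bounded away from zero.
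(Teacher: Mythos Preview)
Your core insight matches the paper's: ERM consistency plus the invariance $\tt h_{c,P}^{OPT}=h_{c,P^*}^{OPT}$ from Theorem~\ref{thm:causal-gen-bounds} forces the trained causal model to converge to a \emph{fixed} function that does not depend on which sample $\tt S$ was drawn. But the paper then finishes in one line where you take two detours. Once $\tt h_{c,S}^{min}\to h_{c,P}^{OPT}$ (a deterministic constant independent of the membership bit $\tt b$), the adversary's output $\tt \mathcal{A}(h_{c,S}^{min})$ is a function of something independent of $\tt b$, so $\tt \mathbb{E}[\mathcal{A}\mid b=1]=\mathbb{E}[\mathcal{A}\mid b=0]$ and $\tt Adv=0$ for \emph{every} adversary simultaneously. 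No case split on the adversary type, no loss decomposition, no noise.

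Your Part~2 route through Laplace noise and Lemma~\ref{lem:yeom} actually proves a different statement: the corollary is about the plain ERM model $\tt h_{c,S}^{min}$, not the DP mechanism $\tt \hat{\mathcal F}_c$. Adding fixed-scale noise so that $\tt \epsilon_c\to 0$ changes the object whose advantage you are bounding. (If you instead argue that vanishing sensitivity makes the \emph{noiseless} map $0$-DP in the limit, you have just restated the paper's independence argument in DP vocabulary.) Your Part~1 also leaves a loose end: in the probabilistic-$Y$ case the residual $\tt \mathcal{L}_{P^*}(f,y)-\mathcal{L}_{P}(f,y)$ is a difference of Bayes risks under two different marginals on $\tt X_C$ and is not generally zero, so the bounded-loss route does not close without further assumptions. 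The paper's independence argument avoids this entirely because it never compares losses across $\tt P$ and $\tt P^*$; it only uses that the limiting model is the \emph{same} object regardless of where $\tt S$ came from.
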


\subsection{Robustness to Attribute Inference Attacks}\label{model-inversion}
We prove similar results on the benefits of causal models for attribute inference attacks where
a model may reveal the value of sensitive features of a test input, given partial knowledge of its features. For instance, given a model's output and certain features about a person,   an adversary may infer other attributes of the person (e.g., their demographics or genetic information). As another example, it can be possible to infer a person's face based on hill climb on the output score of a face detection model~\citep{fredrikson2015model}. Model inversion is not always due to a fault in learning: a model may learn a true, generalizable relationship between features and the outcome, but still be vulnerable to a model inversion attack. This is because given $k-1$ features and the true outcome label, it is possible to guess the $k$th feature by brute-force search on output scores generated by the model. 

However, inversion based on learning correlations between features and the outcome, e.g., using demographics to predict disease, can be alleviated by causal models, since a non-causal feature will not be included in the model. 

\begin{definition} [From~\cite{yeom2018privacy}]
    Let $\tt h$ be a model trained on a dataset $\tt {S}$(X,Y). Let $\tt \mathcal A$ be an adversary with access to h, and a partial test input $\tt x_A \subset x$. The attribute advantage of the adversary is the difference between true and false positive rates in guessing the value of a sensitive feature $\tt x_s \notin x_A$. For a binary $\tt x_s$,
    \begin{equation*}
      \tt  Adv(\mathcal{A}, h) = Pr(\mathcal{A}=1|x_s=1) - Pr(\mathcal{A}=1|x_s=0)
    \end{equation*}
\end{definition}

\begin{restatable}{theorem}{attribute}
    Given a dataset $\tt {S}(X,Y)$ of size $n$ and a structural causal model that connects $\tt X$ to $\tt Y$, a causal model  $\tt h_c$ makes it impossible to infer non-causal features. 
\end{restatable}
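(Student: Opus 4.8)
The plan is to lean entirely on the structural fact that a causal model $\tt h_c$ has domain $\tt X_C = X_{PA}$: its output is a function of the causal coordinates alone and is literally constant in every non-causal coordinate of the input. First I would make this explicit. Writing a test input as $\tt x = (x_C, x_{\bar C})$ with $\tt x_{\bar C}$ the non-causal features, we have $\tt h_c(x) = h_c(x_C)$ for all values of $\tt x_{\bar C}$. Hence the brute-force / hill-climbing attribute attack discussed above — recover a feature by sweeping its value and reading off changes in the model's output scores — returns an identical score for every candidate value of a non-causal sensitive feature $\tt x_s$, so it conveys no signal whatsoever about $\tt x_s$.

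Second, I would formalize this through the attribute-advantage definition. Fix a sensitive non-causal feature $\tt x_s \notin x_A$ and an adversary $\tt \mathcal A$ with access to $\tt h_c$ and a partial input $\tt x_A \subset x$. Any model-dependent information available to $\tt \mathcal A$ is a function of outputs $\tt \{h_c(x_C)\}$ it can query, and each such output depends only on $\tt x_C$. Invoking d-separation in the structural causal graph $\tt G$ — a non-causal feature is independent of $\tt Y$ given $\tt Y$'s parents $\tt X_C$, exactly as $\tt X_{S0}, X_{S1}, X_{S2}$ are screened off from $\tt Y$ in Fig.~\ref{fig:causal-graphs} — the model's prediction is independent of $\tt x_s$ conditional on $\tt X_C$, and since $\tt h_c$ is a deterministic function of $\tt X_C$ only, it carries no information about $\tt x_s$ beyond what $\tt x_A$ already contains. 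Therefore $\tt \mathcal A$'s output has the same distribution whether $\tt x_s=1$ or $\tt x_s=0$, i.e. $\tt Pr(\mathcal A=1|x_s=1) = Pr(\mathcal A=1|x_s=0)$, so $\tt Adv(\mathcal A, h_c)=0$. I would also note, to delimit scope, that if $\tt x_s \in X_C$ the model genuinely depends on it and brute-force inversion may succeed, but — echoing the discussion preceding the theorem — this reflects a true, distribution-invariant relationship rather than a leakage artifact, and so lies outside the claim.

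The main obstacle I anticipate is the case where the adversary does not already know the full causal vector $\tt X_C$: then a query output $\tt h_c(x_C)$ can narrow down the unobserved causal coordinates, and if those are correlated in $\tt P$ with the non-causal target, the adversary's guess could improve. Handling this cleanly requires being precise that (i) any such inference is mediated entirely by inferring values of \emph{causal} features, which the theorem does not promise to protect, and any residual correlation between $\tt x_s$ and $\tt X_C$ is a property of the input distribution available to the adversary independently of the model; and (ii) once we condition on $\tt X_C$ the model output is exactly independent of $\tt x_s$. A secondary, routine point is to state the result for the empirical model $\tt h_{c,S}^{min}$ rather than the population-optimal $\tt f$: since $\tt h_{c,S}^{min}$ still has domain $\tt X_C$, the invariance $\tt h_{c,S}^{min}(x) = h_{c,S}^{min}(x_C)$ holds regardless of how it was fit, so the argument transfers verbatim, and the deterministic-versus-probabilistic $\tt Y$ distinction affects only how the conditional independence $\tt Y \perp x_s \mid X_C$ is invoked, not the conclusion.
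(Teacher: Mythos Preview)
Your proposal is correct and takes essentially the same approach as the paper: both argue that since $\tt h_c$ is a function of $\tt X_C$ alone, its output is independent of any non-causal sensitive feature $\tt x_s$, so $\tt \Pr(\mathcal{A}(h_c)=1\mid x_s=1)=\Pr(\mathcal{A}(h_c)=1\mid x_s=0)$ and $\tt Adv(\mathcal{A},h_c)=0$. The paper's proof is a three-line version of your second paragraph; your additional discussion of d-separation, the brute-force attack, and the caveats about correlations via $\tt X_C$ is sound but goes beyond what the paper actually proves.
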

The proof is in \app~\ref{model-inversion}.

\section{Implementation and Evaluation}
\label{eval}

\begin{table}[t]

    \centering
	\begin{adjustbox}{width=\columnwidth,center}
		\begin{tabular}{@{} c|c c c c @{} }
		\toprule
		\textbf{Dataset}                                                   & \textbf{Child} & \textbf{Alarm}                                            & \textbf{\begin{tabular}[c]{@{}c@{}} (Sachs)\end{tabular}} & \textbf{Water} \\ \midrule
		\textbf{\begin{tabular}[c]{@{}c@{}}Output\end{tabular}} & XrayReport     & \begin{tabular}[c]{@{}c@{}}BP\end{tabular} & Akt                                                                             & CKNI\_12\_45   \\ \midrule
		\textbf{No. of classes}                                            & 5              & 3                                                         & 3                                                                               & 3              \\ 
		\textbf{Nodes}                                                     & 20             & 37                                                        & 11                                                                              & 32             \\ 
		\textbf{Arcs}                                                      & 25             & 46                                                        & 17                                                                              & 66             \\ 
		\textbf{Parameters}                                                & 230            & 509                                                       & 178                                                                             & 10083          \\ \bottomrule
		\end{tabular}
		\end{adjustbox}
		\vspace{-0.2cm}
      \caption{{Details of the benchmark datasets.}}
\label{tbl:dataset}
\end{table}

We perform our evaluation on two types of datasets: 1) Four datasets generated from known Bayesian Networks and 2) Colored images of digits from the MNIST dataset.

\textbf{Bayesian Networks.}
To avoid errors in learning causal structure from data, we perform evaluation on datasets for which the causal structure and the true conditional probabilities of the variables are known from prior research. We select $4$ Bayesian network datasets--- Child, Sachs, Alarm and Water that range from 178-10k parameters (Table~\ref{tbl:dataset})\footnote{www.bnlearn.com/bnrepository}.
Nodes represent the number of input features and arcs denote the causal connections between these features in the network. Each causal connection is specified using a conditional probability table $\tt P(X_i|Parents(X_i))$; we consider these probability values as the parameters in our models. 
To create a prediction task, we select a variable in each of these networks as the output $Y$. The number of classes in Table~\ref{tbl:dataset} denote  the possible values for an output variable. For example, the variable BP (blood pressure) in the alarm dataset takes 3 values i.e, \texttt{LOW, NORMAL, HIGH}. The causal model uses only parents of $Y$ whereas the associational model (DNN) uses all  nodes except $Y$ as features. 

\textbf{Colored MNIST Dataset.}
\newtext{We also evaluate on a complex dataset where it is difficult to construct a causal graph of the input features. For this, we consider the dataset of colored MNIST images used in a recent work by~\cite{arjovsky2019invariant}. The original MNIST dataset consists of grayscale images of handwritten digits (0-9)\footnote{http://yann.lecun.com/exdb/mnist/}. The colored MNIST dataset consists of inputs where digits 0-4 are red in color with label as 0 while 5-9 are green in color and have label 1. The training dataset consists of two environments where only 10\% and 20\% of inputs {\em do not} follow the correlation of color to digits. This creates a spurious correlation of color with the output. In this dataset, {\em shape} of the digit is the actual causal feature whereas {\em color} acts as the associational or non-causal feature. The test dataset is generated such that 90\% of the inputs {\em do not} follow the color pattern. We use the code made available by~\cite{arjovsky2019invariant} to generate the dataset and perform our evaluation\footnote{https://github.com/facebookresearch/InvariantRiskMinimization}.  We refer the readers to the paper for further details.}
\begin{figure*}[t]
    \begin{subfigure}{.33\textwidth}
        \centering
        \includegraphics[scale=0.30]{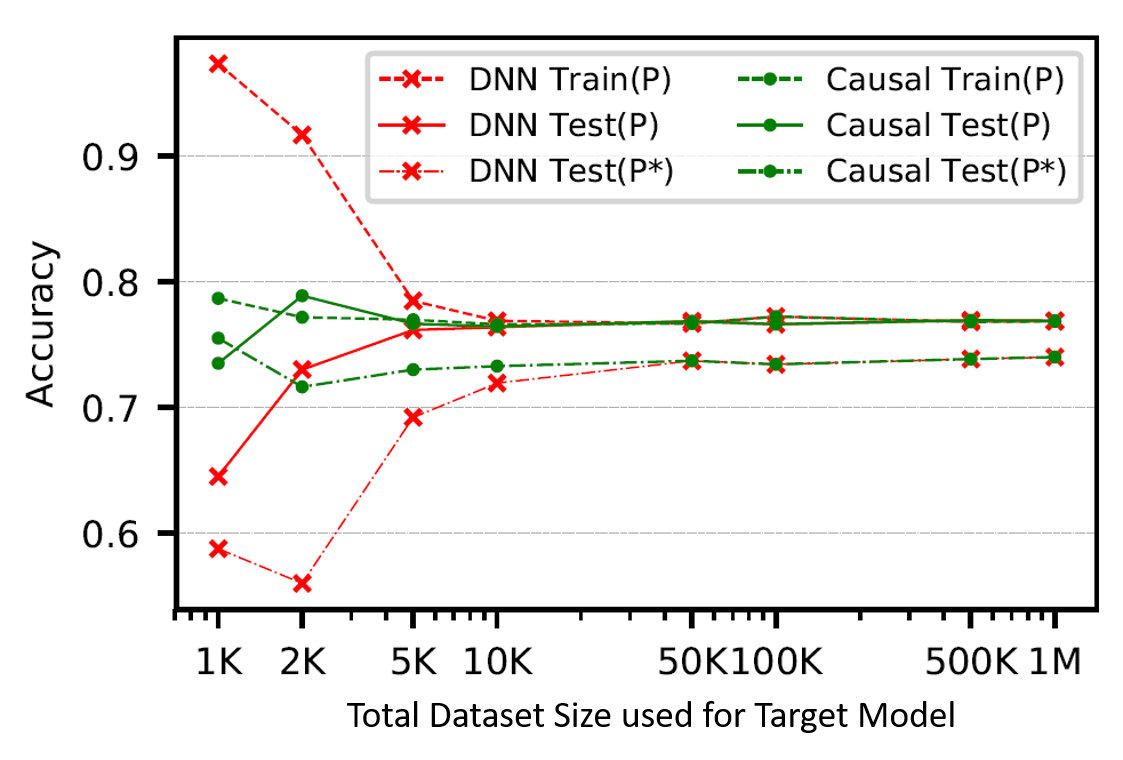}

	\caption{ }
        \label{fig:child_orig}
    \end{subfigure}%
    \hfill
    \begin{subfigure}{0.33\textwidth}
        \centering
        \includegraphics[scale=0.31]{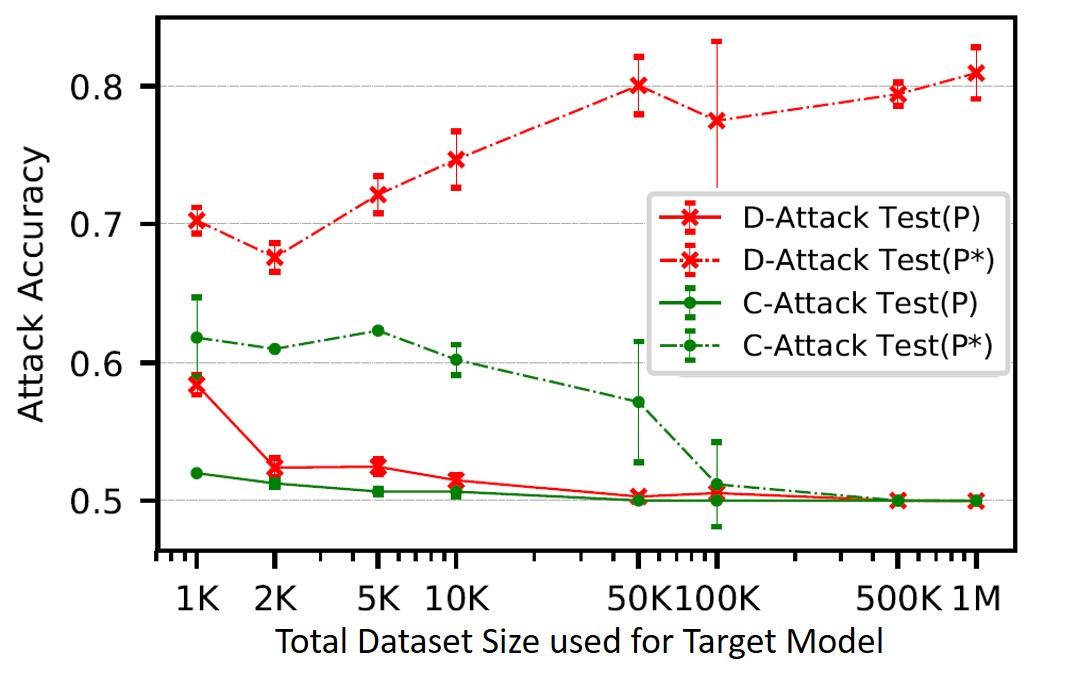}

        \caption{ }
        \label{fig:child_attack}
    \end{subfigure}
    \hfill
     \begin{subfigure}{0.33\textwidth}
        \centering
        \includegraphics[scale=0.21]{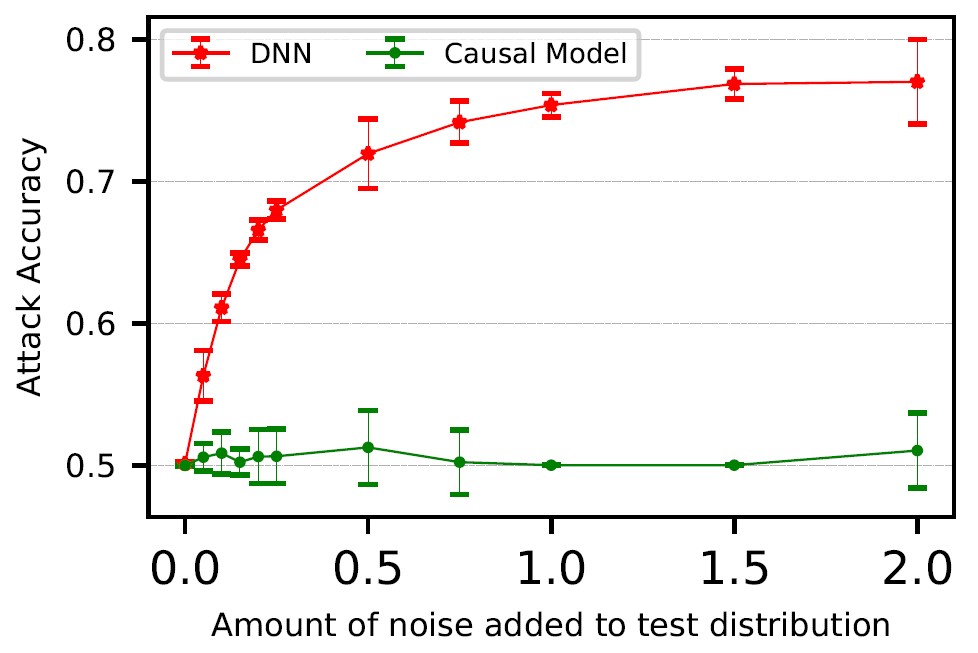}
       \caption{}
        \label{fig:all_dist}
    \end{subfigure}
\vspace{-0.3cm}
\caption{ Results for Child dataset with XrayReport as the output. (~\subref{fig:child_orig}) is the target model accuracy. (~\subref{fig:child_attack}) is the  attack accuracy for different dataset sizes on which the target model is trained and (~\subref{fig:all_dist}) is the attack accuracy for test distribution with varying amount of noise for total dataset size of 100K samples.}
\label{fig:acc_comparison}
\vspace{-0.5cm}
\end{figure*}

\begin{figure*}[t]
  \begin{subfigure}[b]{0.32\textwidth}
        \centering

        \includegraphics[scale=0.3]{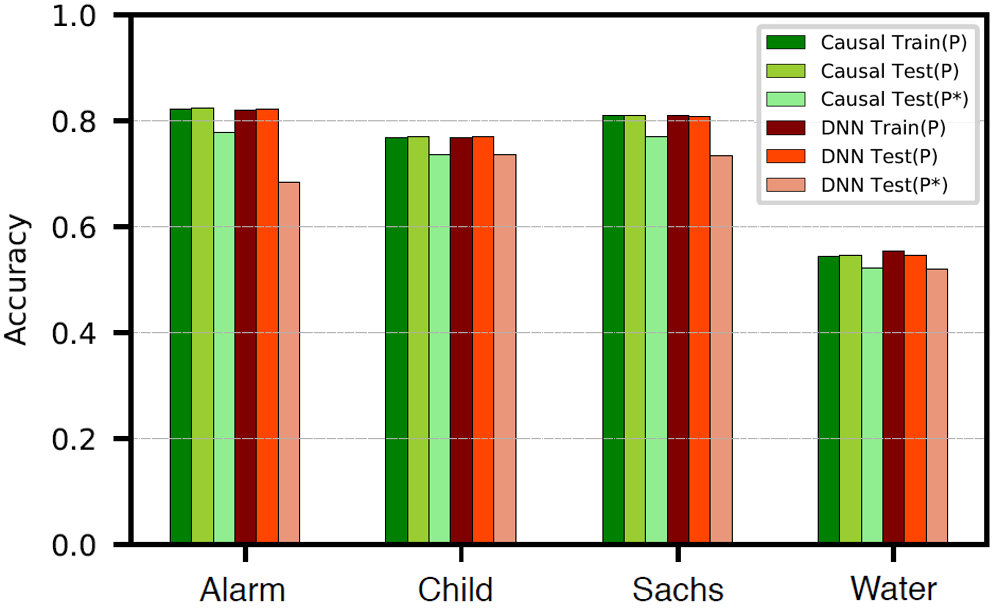}

   \caption{}
\label{fig:orig_all}
  \end{subfigure}
  \hfill
  \begin{subfigure}[b]{0.32\textwidth}
        \centering
        \includegraphics[scale=0.28]{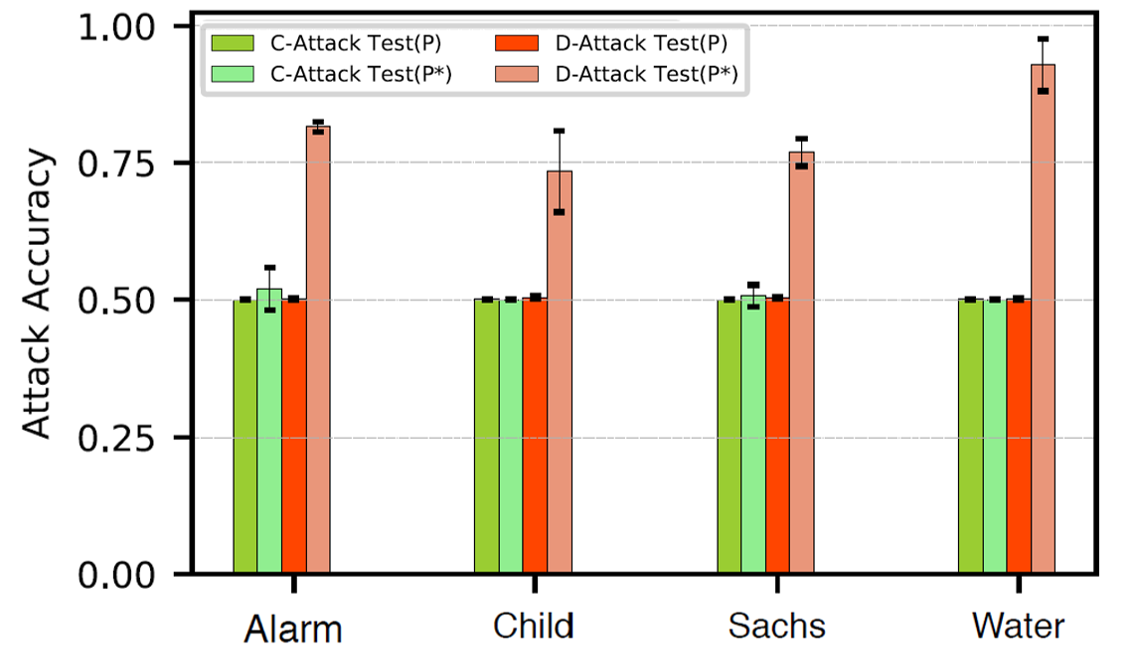}
   \caption{}
\label{fig:all_attack}
\end{subfigure}
\hfill
  \begin{subfigure}[b]{0.32\textwidth}
\centering
        \includegraphics[scale=0.21]{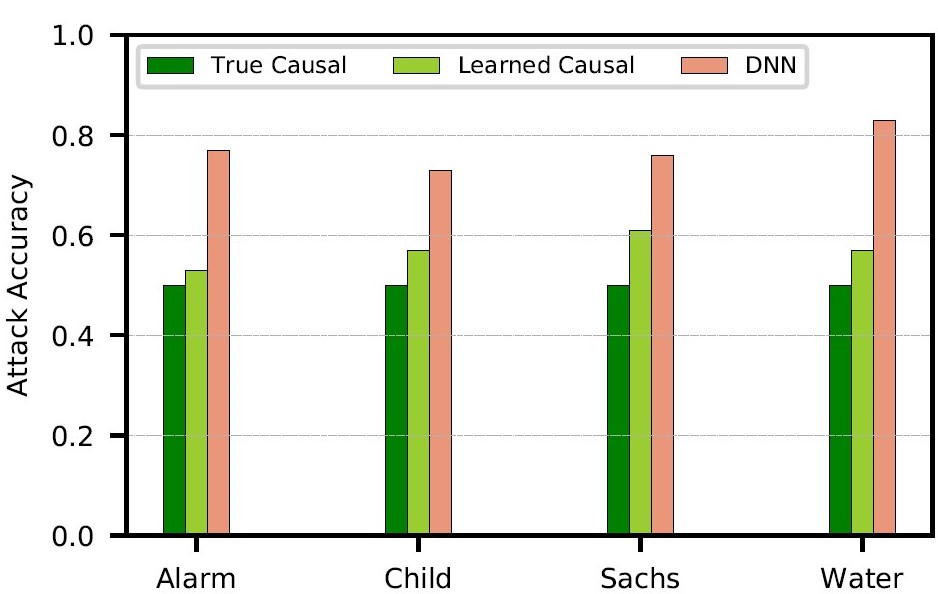}
   \caption{}
\label{fig:learned_graph}
  \end{subfigure}
\vspace{-0.2cm}
\caption{Results for all the bayesian models trained on dataset of size of 60K. (\subref{fig:orig_all}) is the accuracy of the target model,   (\subref{fig:all_attack}) is the attack accuracy for the target model, (\subref{fig:learned_graph}) is the attack accuracy using Test(P*) dataset on true causal, learned causal  and DNN models.}
\label{fig:all_results}
\vspace{-0.5cm}
  \end{figure*}
\subsection{ Results for Bayesian Networks Dataset}

\paragraph{Evaluation Methodology.}
We sample data using the causal structure and probabilities from the Bayesian network, and use a 60:40\% split for train-test datasets. We learn a causal model and a deep neural network (DNN) on each training dataset. 
We implement the attacker model to perform membership inference attack using the output confidences of both these models, based on past work~\citep{salem2018ml}. The input features for the attacker model comprises of the output confidences from the target model, and the output is membership prediction (member / non-member) in the training dataset of the target model. In both the train and the test data for the attacker model, the number of members and non-members are equal. The creation of the attacker dataset is described in Figure~\ref{fig:data_dist} in the \app~\ref{app:exp}. 
Note that the attack accuracies reported are an upper bound since we assume that the adversary has access to the subset of training data for the ML model.

To train the causal model, we use the bnlearn library in R language that supports maximum likelihood estimation of the parameters in $Y$'s conditional probability table. 
For prediction, we use the \texttt{parents} method to predict the class of any specific variable.
To train the DNN model and the attacker model, we build custom estimators in Python using  Tensorflow v1.2. The DNN model is a multilayer perceptron (MLP) with 3 hidden layers of 128, 512 and 128 nodes respectively.  The learning rate is set to 0.0001 and the model is trained for 10000 steps. The attacker model has 2 hidden layers with 5 nodes each, a learning rate of 0.001, and is trained for 5000 steps. Both models use  Adam optimizer,  ReLU for the activation function, and cross entropy as the loss function. We chose these parameters to  ensure model convergence.
We evaluate the DNN and the causal model sample sizes ranging from 1K to 1M dataset sizes. 
We refer Test(P) as the test dataset which is drawn from the same distribution as the training data and Test(P*)  is generated from a completely different distribution except for the relationship of the output class to its parents. To generate Test(P*), we alter the true probabilities $\tt \Pr(X)$ uniformly at random (later, we consider adding noise to the original value).
Our goal with generating Test (P*) is to  capture extreme shifts in distribution for 
input features. 
The causal and DNN model are the \emph{target} on which the  attack is perpetrated. 

\textbf{Accuracy comparison of DNN and Causal models.}
Figure~\ref{fig:child_orig} shows the target model accuracy comparison for the DNN and the causal model trained on the Child dataset with XrayReport as the output variable. We report the accuracy of the target models only for a single run since in practice the attacker would have access to the outputs of only a single model.  
We observe that the DNN model has a large difference between the  train and the test accuracy (both Test(P) and Test(P*)) for smaller dataset sizes (1K and 2K). This indicates that the model overfits on the training data for these dataset sizes. However, after $10$K samples, the model converges such that the train and Test(P) dataset have the same accuracy. The accuracy for the Test(P*) distribution stabilizes for a total dataset size of $10$K samples. 
In contrast, for the causal model, the train and Test(P) accuracy are similar for the causal model even on smaller dataset sizes. However, after convergence at around $10$K samples, the gap between the accuracy of train and Test(P*) dataset is  the same for both the DNN and the causal model.  
Figure~\ref{fig:orig_all} shows similar results for the accuracy  on all the datasets.
 
\textbf{Attack Accuracy of DNN and Causal models.}
A naive attacker classifier would predict all the samples to be members and therefore achieve 0.5 prediction accuracy. Thus, we consider 0.5 as the baseline attack accuracy which is equal to a random guess.
Figure~\ref{fig:child_attack} shows the attack accuracy comparison for Test(P) (same distribution) and Test(P*) (different distribution) datasets. Attack accuracy of  the Test(P) dataset for the causal model is slightly above a random guess for smaller dataset sizes, and then converges to 0.5. In comparison, attack accuracy for the DNN on Test(P) dataset is over 0.6 for smaller samples sizes and reaches 0.5 after 10K datapoints.  
This confirms  past work that an overfitted DNN is susceptible to membership inference attacks even for test data generated from the same distribution as the training data~\citep{yeom2018privacy}. 
On Test(P*), the attack accuracy is always higher for the DNN than the causal model, indicating our main result that associational models ``overfit'' to the training distribution, in addition to the training dataset. Membership inference accuracy for DNNs is as high as 0.8 for total dataset size of 50K while that of causal models is below 0.6.  Further, attack accuracy for DNN increases with sample size whereas  attack accuracy for the causal model reduces to 0.5 for total dataset size over 100k even when the gap between the train and test accuracies is the same as DNNs (Figure~\ref{fig:child_orig}). These results show that causal models generalize better than DNNs across input distributions.   
Figure~\ref{fig:all_attack}  shows a similar result for all four datasets. The attack accuracy for DNNs and the causal model is close to 0.5 for the Test 1 dataset  while for the Test(P*) dataset the attack accuracy is significantly higher for DNNs than causal model. 
This empirically confirms our claim that in general, causal models are robust to membership inference attacks across test distributions as compared to associational models.

\textbf{Attack Accuracy for Different Test Distributions.}
To understand the change in attack accuracy as  $\Pr(X)$ changes, we generate test data from different distributions by adding varying amount of noise to the true probabilities. We range the noise value between 0 to 2 and add it to the individual probabilities which are then normalized to sum up to 1. Figure~(\ref{fig:all_dist}) shows the  attack accuracy for the causal model and the DNN on the child dataset for a total sample size of 100K samples. We observe that the attack accuracy increases with increase in the noise values for the DNN. Even for a small amount of noise, attack accuracies increase sharply. In contrast, attack accuracies stay close to 0.5 for the causal model, demonstrating the robustness to membership attacks.

\textbf{Results with learnt causal model.} Finally, we perform experiments to understand the effect of privacy guarantees on causal structures learned from data that might vary from the true causal structure. 
For these datasets, a simple hill-climbing algorithm returned the true causal parents. Hence we evaluated attack accuracy for models with hand-crafted errors in  learning the structure, i.e., misestimation of causal parents, see Figure~(\ref{fig:learned_graph}). Specifically, we include two non-causal features as parents of the output variable along with the true causal features. The attack risk increases as a learnt model deviates from the true causal structure, however it still exhibits lower attack accuracy than the corresponding associational model. 
Table~\ref{tbl:extra_data} shows the attack and prediction accuracy for  \texttt{Sachs} dataset  when trained with increase in error in the causal model (with 1 and 2 non-causal features), and the results for the corresponding DNN model.
 
\begin{table}[t]
\centering
\footnotesize
\begin{tabular}{@{}c|c|c|c|c@{}}
\toprule
   & True Model                                                 & \multicolumn{2}{c|}{Learned Causal (2 causal +)}                                                                                                                       & DNN \\ \midrule
  \begin{tabular}[c]{@{}c@{}}Acc. \\ (\%)\end{tabular}         & \begin{tabular}[c]{@{}c@{}}2 causal \\ parents\end{tabular} & \begin{tabular}[c]{@{}c@{}}1 non-causal\\  parent\end{tabular} & \begin{tabular}[c]{@{}c@{}}2 non-causal\\ parents\end{tabular} &               \\ \midrule
  Attack  & \textbf{50}                                                          & \textbf{52}                                                                          & \textbf{61}                                                                         & \textbf{76}            \\ \midrule
Pred. & 79                                                          & 75                                                                          & 68.8                                                                        & 73            \\ \bottomrule
\end{tabular}
\caption{{Attack and Prediction accuracy comparison across models for \texttt{Sachs} dataset and \texttt{Akt} output variable.}}
\label{tbl:extra_data}
\end{table}

\subsection{Results for Colored MNIST Dataset}
\newtext{In recent work \citet{arjovsky2019invariant} proposed a way to train a causal model by minimizing the risk across different environments or distributions of the dataset. Using this approach, we train an invariant risk minimizer (IRM)  and an emprical risk minimizer (ERM) on the colored MNIST data. Since IRM constructs the same model using invariant feature representation for the two training domains, it is aimed to learn the causal features (shape) that are also invariant across domains~\cite{peters2016causal}. Thus  IRM can be considered as a causal model while ERM is an associational model. Table~\ref{tbl:colored_mnist} gives the model accuracy and the attack accuracy for IRM and ERM models. The attacker model has 2 hidden layers with 3 nodes each, a learning rate of 0.001, and is trained for 5000 steps. We observe that the causal model has attack accuracy close to a random guess while the associational model has 66\% attack accuracy. Although the training accuracy of IRM is lower than ERM, we expect this to be an acceptable trade-off for the stronger privacy and better generalizability guarantees of causal models.} 

\begin{table}[t]
\centering
	\begin{adjustbox}{width=\columnwidth,center}
		\begin{tabular}{@{} c|c c c@{}}
		\toprule
		{Model}                                                   & \begin{tabular}[c]{@{}c@{}}  Train \\Acc. (\%)\end{tabular}  & \begin{tabular}[c]{@{}c@{}} Test \\ Acc. (\%)\end{tabular}                                            &  \begin{tabular}[c]{@{}c@{}}  Attack \\ Acc. (\%)\end{tabular} \\ \midrule
		\begin{tabular}[c]{@{}c@{}} IRM  (causal)\end{tabular} 						& 70   		  &  69								 & 53                                                                                \\ 
       \begin{tabular}[c]{@{}c@{}} ERM (Associational)\end{tabular}                                           & 87             & 16                                                       & 66                                                                                           \\ \bottomrule
		\end{tabular}
		\end{adjustbox}
		\vspace{-0.2cm}
		      \caption{{Results on Colored MNIST Dataset. }}
\label{tbl:colored_mnist}
		\vspace{-0.1cm}
\end{table}

\section{Related Work}
\label{related}

\textbf{Privacy attacks and defenses on ML models.}
~\citet{shokri2017membership} demonstrate the first membership inference attacks on black box neural network models with access only to the confidence values. Similar attacks have been shown on several other models such as GANs~\citep{hayes2017logan}, text prediction generative models~\citep{carlini2018secret,song2018the} and federated learning models~\citep{nasr2018machine}. However, prior research does not focus on the severity of these attacks with change in the distribution of the test dataset. We discussed in  Section~\ref{sec:robust-membership} that existing defenses based on regularization~\citep{nasr2018machine} are not practical when models are evaluated on test inputs from different distributions. Another line of defense is to add differentially private noise while training the model. However, the $\epsilon$ values necessary to mitigate membership inference attacks in deep neural networks require addition of large amount of noise that degrades the accuracy of the output model~\citep{rahman2018membership}. Thus, there is a trade-off between privacy and utility when using differential privacy for neural networks.  In contrast, we show that  causal models require lower amount of noise to achieve the same $\epsilon$ differential privacy guarantees and hence retain accuracy closer to the original model. Further, as training sample sizes become sufficiently large (Section~\ref{eval})  causal models are robust to membership inference attacks across distributions.

\textbf{Causal learning and privacy.}
There is substantial literature on learning causal models from data; for a review see \citep{peters2017elements,pearl2009causality}. 
\citet{kusner2015private} proposed a method to privately reveal parameters from a causal learning algorithm, using the framework of differential privacy. Instead of a specific causal algorithm, our focus is on the privacy benefits of causal models for general predictive tasks. While recent work uses causal models to study properties of ML models such as providing explanations~\citep{datta2016algorithmic} or fairness~\citep{kusner2017counterfactual}, 
 the relation of causal learning to model privacy  is yet unexplored.

\section{Conclusion and Future Work}
Our results show that causal learning is a promising approach to train models that are robust to privacy attacks such as membership inference and model inversion. As future work, we aim to investigate 
  privacy guarantees  when the causal features and the relationship between them is not known apriori and   with causal insufficiency and selection bias in the observed data.

\section*{Acknowledgements}
We thank the reviewers for their useful comments on the paper. We thank Olga Ohrimenko, Boris Koepf, Amit Deshpande and Emre Kiciman for helpful discussion and feedback on this work.

\bibliography{privacy-causal}
\bibliographystyle{icml2020}
\newpage
\clearpage
\appendix
\twocolumn[
\icmltitle{Supplementary Material: Alleviating Privacy Attacks via Causal Learning}
\vskip 0.3in

]
\section{Generalization Properties of Causal Models}
\subsection{Generalization over Different Distributions}
\label{app:gen_thm}

We provide formal proofs for generalization properties of causal model over different distributions and over a single datapoint.

\generalitytheorem*

\begin{proof}
    The proof has three parts: General ODE Bound for a model, equivalence of loss-minimizing causal hypotheses (models) on $\tt P$ and $\tt P^*$,  and finally the two claims from the Theorem. 
    
    \paragraph{I. GENERAL ODE BOUND} ~\\ 
    Consider a model $\tt h:X \rightarrow Y$ belonging to a set of models $\tt \mathcal{H}$,  that was trained on $\tt {S} \sim P(X, Y)$.  From Def.~\ref{def:ode} we  write, 
    \begin{equation}
    \label{eq:one}
    \begin{split}
    \tt ODE_{P,P^*} (h, y) &\tt =   \mathcal{L}_{P^*}(h, y) -  \mathcal{L}_{{S}\sim P}(h, y)  \\
    & \tt =         \mathcal{L}_{P^*}(h, y) -  \mathcal{L}_{P}(h, y) + \\
     & \tt \mathcal{L}_{P}(h, y) - \mathcal{L}_{{S}\sim P}(h, y) \\
     & \tt =         \mathcal{L}_{P^*}(h, y) -  \mathcal{L}_{P}(h, y) +IDE_P(h, y) 
    \end{split}
    \end{equation}
    
    where the last equation is to due to Def.\ref{def:ide} of the in-distribution  generalization error. 

    Let us denote the optimal loss-minimizing hypotheses over $\tt \mathcal{H}$ for $\tt P$ and $\tt P^*$ as $\tt h_{P}^{OPT}$ and $\tt h_{P^*}^{OPT}$.
  \begin{equation}
     \tt h_{P}^{OPT}= \arg \min_{h \in \mathcal H} \mathcal{L}_{P}(h, y)  \qquad
     \tt h_{P*}^{OPT}= \arg \min_{h \in \mathcal H} \mathcal{L}_{P*}(h, y)
  \end{equation}

    Using the triangle inequality of the loss function, we can write: 
    \begin{equation}
     \label{eq:two}
     \tt  \mathcal{L}_{P^*}(h, y) \leq  \mathcal{L}_{P^*}(h, h_{P}^{OPT}) +   \mathcal{L}_{P^*}(h_{P}^{OPT}, y)
    \end{equation}
    
    And, 
    \begin{equation}
    \label{eq:three}
    \begin{split}
      \tt  \mathcal{L}_P(h, y)  & \tt \geq  \mathcal{L}_P(h, h_P^{OPT}) - \mathcal{L}_P(h_P^{OPT}, y)\\
     \tt   \Rightarrow -\mathcal{L}_P(h, y) & \tt \leq  -\mathcal{L}_P(h, h_P^{OPT}) + \mathcal{L}_P(h_P^{OPT}, y)
     \end{split}
    \end{equation}

    Thus, combining Eqns.~\ref{eq:one}, ~\ref{eq:two} and ~\ref{eq:three}, we obtain, 
    \begin{equation}
        \begin{split}
            \tt ODE_{P,P^*}&\tt (h, y) \\
                                       & \tt \leq IDE_P(h, y) +  \mathcal{L}_{P^*}(h, h_P^{OPT}) + \\
     & \tt \mathcal{L}_{P^*}(h_{P}^{OPT}, y) -
     \mathcal{L}_P(h, h_P^{OPT}) + \mathcal{L}_P(h_P^{OPT}, y)\\
    & \tt = IDE_P(h, y) +  (\mathcal{L}_{P^*}(h, h_P^{OPT}) -\mathcal{L}_P(h, h_P^{OPT}) )+ \\
    & \tt \mathcal{L}_{P^*}(h_{P}^{OPT}, y) + \mathcal{L}_P(h_P^{OPT}, f)\\
    & \tt \leq IDE_P(h, y) + disc_{L, \mathcal{H}}(P, P^*)  +\\
    & \tt \mathcal{L}_{P^*}(h_{P}^{OPT}, y) + \mathcal{L}_P(h_P^{OPT}, y) \label{eqn:thm1-loss-eqn}
       \end{split}
    \end{equation}

    where the last inequality is due to the definition of discrepancy distance (Definition~\ref{def:discrepancy}). 
    
    Below we show that Eqn.~\ref{eqn:thm1-loss-eqn} divides the out-of-distribution generalization error of a model $\tt h$  in four parts. As defined in the Theorem statement,  $\tt \mathcal{H_C}$ refers to the class of models that uses all causal features ($\tt X_C$), parents of $\tt Y$ over the structural causal graph; and $\tt \mathcal{H_A}$ refers to the class of associational models that may use all or a subset of all available features. 
    \begin{enumerate} 
        \item $\tt IDE_{P}(h, y)$ denotes the in-distribution error of $\tt h$. This can be bounded by typical generalization bounds, such as the uniform error bound that depends only on the  VC dimension and sample size of $\tt S$~\citep{shalevshwartz2014book}. 
        Using a uniform error bound based on the VC dimension,  we obtain, with probability at least $1-\delta$, 
        \begin{equation} \label{eqn:ide-bound}
            \begin{split}
            \tt IDE \leq \sqrt{8\frac{VCdim(\mathcal{H}) ( \ln(2 |{S}|) + 1 ) + \ln(4/\delta) }{|{S}|}} \\
            \tt = IDE\texttt{-}Bound(\mathcal{H}, {S})
        \end{split}
\end{equation} 
Since $\tt \mathcal{H}_C \subseteq \mathcal{H}_A$, VC-dimension of causal models is not greater than that of associational models. Thus, 
\begin{equation} \label{eqn:ide-vcdim}
    \begin{split}
    \tt    VCDim(\mathcal{H}_C) \leq VCDim(\mathcal{H}_A) \Rightarrow IDE\texttt{-}Bound(\mathcal{H}_C, \mathcal{S}) \\
    \tt \leq IDE\texttt{-}Bound(\mathcal{H}_A, \mathcal{S})
\end{split}
\end{equation}
        \item $\tt disc_{L, \mathcal{H}}(P, P^*)$ denotes the distance between the two distributions.  Given two distributions, the discrepancy distance does not depend on $\tt h$, but only on the model class $\mathcal{H}$. From Definition~\ref{def:discrepancy}, discrepancy distance is the maximum quantity over all pairs of models in a model class. 
            Since $\tt \mathcal{H}_C \subseteq \mathcal{H}_A$, we obtain that:
            \begin{equation} \label{eqn:discrepancy-bound}
               \tt  disc_{L, \mathcal{H_C}}(P, P^*) \leq disc_{L, \mathcal{H_A}}(P, P^*) 
            \end{equation}
        \item   $\tt \mathcal{L}_P(h_P^{OPT}, y)$
            measures the error of the loss-minimizing model on $\tt P$, when evaluated on $\tt P$. While $\tt h_P^{OPT}$ is optimal, there can still be error due to the true labeling function $\tt f$ being outside the model class $\tt \mathcal{H}$, or irreducible error due to probabilistic generation of Y.
        \item $\tt \mathcal{L}_{P^*}(h_{P}^{OPT}, y)$ 
            measures the error of the loss-minimizing model on $\tt P$, when evaluated on $\tt P^*$. In addition to the reasons cited above, this error can be due to differences in both $\tt \Pr(X)$ and $\tt \Pr(Y|X)$ between $\tt P$ and $\tt P^*$: change in the marginal distribution of inputs $\tt X$, and/or change in the conditional distribution of $\tt Y$ given $\tt X$. 
    \end{enumerate}

\paragraph{II. SAME LOSS-MINIMIZING CAUSAL MODEL OVER $\tt P$ AND $\tt P^*$} ~\\
Below we show that for a given distribution $\tt P$ and another distribution $\tt P^*$ such that $\tt P(Y|X_C)=P^*(Y|X_C)$, the loss minimizing  model is the same for causal models ($\tt h_{c,P}^{OPT}=h_{c,P^*}^{OPT}$), but not necessarily for associational models. 

    \paragraph{Causal Model.}
    Given a structural causal network, let us construct a model using all parents of $\tt X_C$ of $\tt Y$.
By property of the structural causal network, $\tt X_C$ includes all parents of $\tt Y$ and therefore 
there are no backdoor paths.  Using Rule 2 of do-calculus from ~\citet{pearl2009causality}:
\begin{equation}
\tt \Pr(Y|do(X_c=x_c))=P(Y|X_C=x_c) = P^*(Y|X_C=x_c)
\end{equation}
where the last equality is assumed since data from $\tt P^*$ also shares the same causal graph.  
Defining $\tt h_{c,P}^{OPT} = \arg \min\limits_{h_c \in \mathcal H_C} \mathcal{L}_{P}(h_c, y)$ and $\tt h_{c,P^*}^{OPT}= \arg \min\limits_{h_c \in \mathcal{H_C}} \mathcal{L}_{P^*}(h_c, y)$,  
 we can write, 
\begin{equation}
\begin{split}
    \tt h_{c,P}^{OPT} & \tt =\arg \min_{h \in \mathcal H_C} \mathcal{L}_{P}(h, y)  \\
    & \tt = \arg \min_{h \in \mathcal H_C} \mathbb{E}_{P(x_c,y)}L(h(x_c), y)=f_{P(Y|X_C)}
    \end{split}
\end{equation}
since $\tt f = \arg \min_h L_x(h(x_c),y)$ for all $\tt x_c$ and thus does not depend on $\tt \Pr(X_C)$, and $\tt f\in \mathcal{H}_C$. Similarly, for $\tt h_{c,P^*}^{OPT}$, we can write:
\begin{equation}
\begin{split}
    \tt h_{c,P^*}^{OPT} & \tt =\arg \min_{h \in \mathcal H_C} \mathcal{L}_{P^*}(h, y)  \\
    & \tt = \arg \min_{h \in \mathcal H_C} \mathbb{E}_{P^*(x_c,y)}L(h(x_c), y)=f_{P^*(Y|X_C)}
        \end{split}
\end{equation}

Since $\tt P(Y|X_C) = P^*(Y|X_C)$, we obtain,
\begin{equation}
\label{eq:equal_h}
\tt f_{P(Y|X_C)}= f_{P^*(Y|X_C)} \Rightarrow  h_{c,P}^{OPT}= h_{c,P^*}^{OPT} 
 \end{equation}

 \paragraph{Associational Model.}In contrast, an associational model may use a subset $\tt X_A \subseteq X$ that may not include all parents of $\tt Y$, or may include parents but also include other extraneous variables. 
Following the derivation for causal models, let us define $\tt h_{a,P}^{OPT} = \arg \min\limits_{h_a \in \mathcal H_A} \mathcal{L}_{P}(h_a, y)$ and $\tt h_{a,P^*}^{OPT}= \arg \min\limits_{h_a \in \mathcal{H_A}} \mathcal{L}_{P^*}(h_a, y)$,  
 we can write, 
\begin{equation}
\begin{split}
    \tt h_{a,P}^{OPT} & \tt =\arg \min_{h \in \mathcal H_A} \mathcal{L}_{P}(h, y) \\
    & \tt = \arg \min_{h \in \mathcal H_A} \mathbb{E}_{P(x_a,y)}L(h(x_a), y)=f_{P(X_A, Y)}
    \end{split}
\end{equation}
where we define $\tt f_A$ as, $\tt f_A = \arg \min_h L_x(h(x_a),y)$ for any $\tt x_a$. Similarly, for $\tt h_{a,P^*}^{OPT}$, we can write:
\begin{equation}
\begin{split}
    \tt h_{a,P^*}^{OPT} & \tt =\arg \min_{h \in \mathcal H_A} \mathcal{L}_{P^*}(h, y) \\
    & \tt = \arg \min_{h \in \mathcal H_A} \mathbb{E}_{P^*(x_a,y)}L(h(x_a), y)=f_{P^*(X_A, Y)}
    \end{split}
\end{equation} 

Now, in general,  
$$\tt {P(X_A, Y)} \neq {P^*(X_A, Y)} \Rightarrow f_{P(X_A, Y)} \neq f_{P^*(X_A, Y)}$$ 
Even if the optimal associational model $\tt f_A \in \mathcal{H}_A$ (as we assumed for causal models), and thus $ \tt f_{P(X_A, Y)} = f_{P( Y|X_A)}$ and $\tt f_{P^*(X_A, Y)} = f_{P^*(Y|X_A)}$, they are not the same    since $\tt P(Y|X_A) \neq P^*(Y|X_A)$. Therefore we obtain,
\begin{equation}
\label{eq:notequal_h}
\tt f_{P(Y|X_A)} \neq f_{P^*(Y|X_A)} \Rightarrow  h_{a,P}^{OPT} \neq h_{a,P^*}^{OPT} 
 \end{equation}

 
 That said, since $\tt X_C \subset X$, it is possible that $\tt X_A=X_C$ for some $\tt X$ and $\tt \mathcal{H}$,  and thus the loss-minimizing associational model includes only the causal features of $\tt Y$. Then $\tt h_{a,P}^{OPT} = h_{a,P^*}^{OPT}$. 
 In general, though, $\tt h_{a,P}^{OPT} \neq h_{a,P^*}^{OPT}$. 


\paragraph{IIIa. CLAIM 1} ~\\
As a warmup, consider the case when $\tt Y$ is generated deterministically. That is, the optimal model $\tt f$ has zero error. Then, both the loss-minimizing causal model and loss-minimizing associational model have zero error when evaluated on the same distribution that they were trained on. Thus,    $\tt \mathcal{L}_{P}(h_{c,P}^{OPT}, y)=\mathcal{L}_{P^*}(h_{c,P^*}^{OPT}, y)= 0$. Similarly,  $\tt \mathcal{L}_{P}(h_{a,P}^{OPT},y)=0$. (Note that here we consider only those cases where $\tt f_{P(Y|X)} \in \mathcal H_A$ and $\tt f_{P^*(Y|X)} \in \mathcal H_A$ for a fair comparison; otherwise, the error bound for $\tt h_a \in \mathcal H_A$ is trivially larger than that for $\tt h_c \in \mathcal H_C$).

Further, for a causal model, using Equation~\ref{eq:equal_h}, we obtain:
\begin{equation} \label{eq:equal-loss-on-pstar}
    \tt \mathcal{L}_{P^*}(h_{c,P}^{OPT}, y) =\mathcal{L}_{P^*}(h_{c,P^*}^{OPT}, y) = 0
\end{equation}
However, the same does not hold for associational models: $\tt \mathcal{L}_{P^*}(h_{a,P}^{OPT}, y)$ need not be zero.

We now present the loss bounds. Using Equations~\ref{eq:equal_h} and \ref{eq:equal-loss-on-pstar},  we write Equation~\ref{eqn:thm1-loss-eqn} for a causal model as:
\begin{equation}
\label{eq:final_1}
\begin{split}
        \tt ODE_{P,P^*}(h_c, y)   \tt = \mathcal{L}_{P^*}(h_c, y) -  \mathcal{L}_{{S}\sim P}(h_c, y) \\
                                  \tt \leq disc_{L, \mathcal{H}_C}(P, P^*) + IDE_P(h_c, y) \\
\end{split}
\end{equation}

For an associational model, we obtain,
\begin{equation}
\label{eq:final_2}
\begin{split}
        \tt ODE_{P,P^*}(h_a, y)  & \tt = \mathcal{L}_{P^*}(h_a, y) -  \mathcal{L}_{{S}\sim P}(h_a, y)\\
                                 & \tt \leq disc_{L, \mathcal{H}_A}(P, P^*) + IDE_P(h_a, y)   \\
                                 & \tt + \mathcal{L}_{P^*} (h_{a,P}^{OPT}, y) \\
 \end{split}
\end{equation}

Using Eqn.~\ref{eqn:ide-bound} that bounds IDE with probability $1-\delta$, and  Eqns.~\ref{eqn:ide-vcdim} and \ref{eqn:discrepancy-bound} that compare IDE-Bound and discrepancy distance between  causal and associational model classes, we  can rewrite Eqn.~\ref{eq:final_1}. With probability at least $1-\delta$: 
\begin{align} 
    \tt ODE_{P,P^*}(h_c, y )  
                                       & \tt \leq disc_{L, \mathcal{H_C}}(P, P^*) + IDE\texttt{-}Bound_P(\mathcal{H}_C,S; \delta) \nonumber \\
                                       & \tt = ODE\texttt{-}Bound_{P, P^*}(h_c, y; \delta) \nonumber \\
                                       & \tt \leq disc_{L, \mathcal{H_A}}(P, P^*) + IDE\texttt{-}Bound_P(\mathcal{H}_A,  S; \delta)  
                                      \label{eq:final_11}
\end{align}

Similarly, for the associational model, 
\begin{equation}
\label{eq:final_22}
\begin{split}
        \tt ODE_{P,P^*}(h_a, y)  & \tt \leq disc_{L, \mathcal{H}_A}(P, P^*) + IDE\texttt{-}Bound_P(\mathcal{H}_A, {S}; \delta)  \\
                                 & \tt + \mathcal{L}_{P^*} (h_{a,P}^{OPT}, y) \\
                                      & \tt = ODE\texttt{-}Bound_{P, P^*}(h_a, y; \delta)
 \end{split}
\end{equation}

Therefore, comparing Eqn. ~\ref{eq:final_11} and ~\ref{eq:final_22},    we claim for any $\tt P$ and $\tt P^*$, with probability $(1-\delta)^2$,
\begin{equation}
\tt ODE\texttt{-}Bound_{P,P^*}(h_c, y; \delta) \leq  ODE\texttt{-}Bound_{P,P^*}(h_a, y; \delta)
\end{equation}

\paragraph{IIIb. CLAIM 2} ~\\
We now consider the general case when Y is generated probabilistically. Thus, even though $\tt f \in \mathcal{H}_C$ and $\tt h_{c,P}^{OPT}= h_{c,P^*}^{OPT} = f$, $\tt \mathcal{L}_{P}(h_{c,P}^{OPT},y) \neq 0$ and $\tt \mathcal{L}_{P^*}(h_{c,P^*}^{OPT}, y) \neq 0$.

Using the IDE bound from Eqn.~\ref{eqn:ide-bound},  we write Eqn.~\ref{eqn:thm1-loss-eqn} as,
\begin{align}
    \tt ODE_{P,P^*}(h_c, y ) 
                                       & \tt \leq disc_{L, \mathcal{H}_C}(P, P^*) + IDE_P(h_c, y) \nonumber \\
                                       & \tt + \mathcal{L}_{P^*}(h_{c,P}^{OPT}, y) +  \mathcal{L}_{P}(h_{c,P}^{OPT},y) \nonumber \\
                                       & \tt \leq disc_{L, \mathcal{H_C}}(P, P^*) + IDE\texttt{-}Bound_P(\mathcal{H}_C,  {S}; \delta) \nonumber \\
                                       & \tt + \mathcal{L}_{P^*}(h_{c,P^*}^{OPT}, y) +  \mathcal{L}_{P}(h_{c,P}^{OPT},y) \label{eqn:claim2-equal-h}\\
                                      & \tt = ODE\texttt{-}Bound_{P, P^*}(h_c, y; \delta) \nonumber \\
                                       & \tt \leq disc_{L, \mathcal{H_A}}(P, P^*) + IDE\texttt{-}Bound_P(\mathcal{H}_A,  {S}; \delta) \nonumber \\
                                       & \tt + \mathcal{L}_{P^*}(h_{c,P^*}^{OPT}, y) +  \mathcal{L}_{P}(h_{c,P}^{OPT},y) \label{eqn:ode-prob-causal}
\end{align}
where Eqn.~\ref{eqn:claim2-equal-h} uses $\tt h_{c,P}^{OPT}=h_{c,P^*}^{OPT}$ and Eqn.~\ref{eqn:ode-prob-causal} uses inequalities comparing IDE and discrepancy distance from Eqns.~\ref{eqn:ide-vcdim} and \ref{eqn:discrepancy-bound}.

Similarly, for associational model, 
\begin{align}
    \tt ODE\texttt{-}&\tt Bound_{P,P^*}(h_a, y )  
                                        \tt = disc_{L, \mathcal{H_A}}(P, P^*) \nonumber\\
                     &\tt + IDE\texttt{-}Bound_P(\mathcal{H}_A,  {S}; \delta)  
                                        \tt + \mathcal{L}_{P^*}(h_{a,P}^{OPT}, y) +
                                        \mathcal{L}_P(h_{a,P}^{OPT}, y) \label{eqn:ode-prob-assoc} 
\end{align}

Now, we compare the last two terms of Equations~\ref{eqn:ode-prob-causal} and~\ref{eqn:ode-prob-assoc}. Since $ \tt \mathcal{H_C} \subseteq \mathcal{H_A}$, loss of the loss-minimizing associational model can be lower than the loss of the causal model trained on the same distribution. Thus, $\tt \mathcal{L}_P(h_{a,P}^{OPT}, y) \leq  \mathcal{L}_P(h_{c,P}^{OPT}, y)$. 

However, since $\tt h_{a,P}^{OPT} \neq h_{a,P^*}^{OPT}$, loss of the loss-minimizing associational model trained on $\tt P$ can  be higher on $\tt P^*$ than the loss of optimal causal model trained on $\tt P^*$ and evaluated on $\tt P^*$. Formally,  let $\tt \gamma_1 \geq 0 $ be the loss reduction over $\tt P$ due to use of associational model optimized on $\tt P$, compared to the loss-minimizing causal model. Similarly, let $\tt \gamma_2$ be the increase in loss over $\tt P^*$ due to using the associational model optimized over $\tt P$, compared to the loss-minimizing causal model.
\begin{align}
    \tt \gamma_1 =& \tt  \mathcal{L}_P(h_{c,P}^{OPT}, y) - \mathcal{L}_P(h_{a,P}^{OPT}, y) \\
    \tt \gamma_2 =& \tt  \mathcal{L}_{P^*}(h_{a,P}^{OPT}, y) - \mathcal{L}_{P^*}(h_{c,P}^{OPT}, y) 
\end{align}
Then, Eqn.~\ref{eqn:ode-prob-assoc} transforms to, 
\begin{align}
    \tt ODE_{P,P^*}(h_a, y )  
                                       & \tt \leq disc_{L, \mathcal{H_A}}(P, P^*) + IDE\texttt{-}Bound_P(\mathcal{H}_A,  \mathcal{S}; \delta) \nonumber \\ 
                                       & \tt + \mathcal{L}_{P^*}(h_{c,P^*}^{OPT}, y) +
                                       \mathcal{L}_P(h_{c,P}^{OPT}, y)  + \gamma_2 - \gamma_1
\end{align}
Hence, as long as $\tt \gamma_2 \geq \gamma_1$, we obtain, 
\begin{equation}
    \tt ODE\texttt{-}Bound_{P,P^*}(h_c, y; \delta) \leq ODE\texttt{-}Bound_{P,P^*}(h_a, y; \delta )
\end{equation} 
Below we show that such a $\tt P^*$ always exists, and further, the worst-case $\tt \max_{P^*} ODE\textnormal{-}Bound_{P, P^*}(h, y; \delta)$ is always lower for a causal model than an associational model.  

\paragraph{There exists $\tt P^*$ such that $\gamma_2 \geq \gamma_1$.}
The proof is by construction. As an example, consider L1 loss and a distribution $\tt P$ such that the optimal causal model $f$ for an input data point $\tt x^{(i)}$ can be written as,
\begin{align} 
    \tt y^{(i)} &\tt = f_P(x_C^{(i)}) + \xi_i = f_{P^*}(x_C^{(i)}) + \xi_i \label{eqn:proof-example-orig} 
\end{align}
where $\tt f(x_C)=h_{c,P}^{OPT}=h_{c,P^*}^{OPT}$ refers to the optimal causal model and is the same for $\tt P$ and $\tt P^*$ (using Eqn.~\ref{eq:equal_h}). Let $\tt f_P(x_A)=h_{a,P}^{OPT}$ be the optimal associational model over $\tt P$. We can rewrite $\tt h_{a,P}^{OPT}$ as an arbitrary change from $\tt h_{c,P}^{OPT}$, using $\tt \lambda_{x_A}^{(i)}$ as a parameter that can be different for each data point $\tt x^{(i)}$. That is,
\begin{equation}
    \tt h_{a,P}^{OPT}(x^{(i)}) = h_{c,P}^{OPT}(x_C^{(i)}) + \lambda_{x_A}^{(i)} \label{eqn:proof-example-assoc}
\end{equation}
Based on Eqns.~\ref{eqn:proof-example-orig} and \ref{eqn:proof-example-assoc}, $\gamma_1$ can be written as,
\begin{equation}
    \begin{split}
        \tt \mathcal{L}_P(h_{c,P}^{OPT}, y)& \tt = \mathbb{E}_P[|\xi|] \\
        \tt \mathcal{L}_P(h_{a,P}^{OPT}, y) &\tt = \mathbb{E}_P[|\xi - \lambda_{x_A}|] \\
        \tt \Rightarrow \gamma_1 & \tt = \mathbb{E}_P[|\lambda_{x_A}|] 
    \end{split}
\end{equation}
Then, we can construct a $\tt P^*(X, Y)$ such that (i) the relationship ($\tt \Pr(Y|X_A)$) between $\tt x_A$ and $\tt y$ is reversed, and (ii) $\tt \Pr(X)$ is chosen such that $\tt \mathbb{E}_{P^*}[\lambda_{x_A}] \geq \mathbb{E}_P[\lambda_{x_A}]$ (e.g., by assigning higher probability weights to data points $\tt i$ where $\tt |\lambda_{x_A}^{(i)}|$ is high). That is, consider a $\tt P^*$ such that  we can write $\tt h_{a,P^*}^{OPT}$ as,
\begin{equation}
    \tt h_{a,P^*}^{OPT}(x^{(i)}) = h_{c,P^*}^{OPT}(x_C^{(i)}) - \lambda_{x_A}^{(i)}
\end{equation}
On such $\tt P^*$, the loss-minimizing causal model remains the same. However, the loss of the associational model $\tt h_{a,P}^{OPT}$ on such $\tt P^*$ increases and can be written as: 
\begin{equation}
    \begin{split}
        \tt \mathcal{L}_{P^*}(h_{c,P}^{OPT}, y)& \tt = \mathbb{E}_{P^*}[|\xi|] \\ 
    \tt \mathcal{L}_{P^*}(h_{a,P}^{OPT}, y) & \tt = \mathbb{E}_{P^*}[|\xi + \lambda_{x_A}|] \\ 
        \tt \Rightarrow \gamma_2 & \tt = \mathbb{E}_{P^*}[|\lambda_{x_A}|] 
    \end{split}
\end{equation}
From condition (ii) above, $\tt \mathbb{E}_{P^*}[\lambda_{x_A}] \geq \mathbb{E}_P[\lambda_{x_A}]$, thus $\tt \gamma_2 \geq \gamma_1$. 

Note that we did not use any special property of the L1 Loss above. In general, we can write the loss-minimizing  function $\tt h_{a,P}^{OPT}$ as adding some arbitrary value $\tt \lambda_{x_A}^{(i)}$ to $\tt h_{c,P}^{OPT}(x_c^{(i)})$; and then construct a $\tt P^*$ such that the relationship $\tt \Pr(Y|X_A)$  is reversed on $\tt P^*$, and thus $\tt h_{a,P^*}^{OPT}$ subtracts the same value.  Further, the input data distribution $\tt P^*(X)$ can be chosen such that $\tt \gamma_2 \geq \gamma_1$. That is, for a loss $\tt L$,  we can choose $\tt \lambda$ such that $ \tt  \mathcal{L}_{P^*}(h_{a,P}^{OPT}, y; \lambda) - \mathcal{L}_{P^*}(h_{c,P^*}^{OPT}, y) \geq \mathcal{L}_P(h_{c,P}^{OPT}, y) - \mathcal{L}_P(h_{a,P}^{OPT}, y; \lambda) $.

Hence, there exists a $\tt P^*$ such that $\tt \gamma_2 \geq \gamma_1$, and thus,
\begin{equation} \label{eqn:ode-bound-prob}
\tt ODE\texttt{-}Bound_{P,P^*}(h_c, y; \delta) \leq  ODE\texttt{-}Bound_{P,P^*}(h_a, y; \delta)
\end{equation}

\paragraph{Worst case ODE-bound for causal model is lower.} 
Finally, we show that the for a fixed $\tt P$, the worst case $\tt ODE\textnormal{-}Bound$ also follows Eqn.~\ref{eqn:ode-bound-prob}. Looking at Eqns.~\ref{eqn:ode-prob-causal} and \ref{eqn:ode-prob-assoc}, $\tt ODE\textnormal{-}Bound$  will be highest for a $\tt P^*$ such that discrepancy between $\tt P$ and $\tt P^*$ is highest and $\tt \mathcal{L}_{P^*}(h_{P}^{OPT}, y)$ is highest. Below we show that discrepancy $\tt disc_L(P, P^*)$ increases as   $\tt \mathcal{L}_{P^*}(h_{P}^{OPT}, y)$ increases. 
\footnotesize
\begin{align}
\tt    \mathcal{L}_{P^*}(h_{P}^{OPT}, y) &= \tt \mathcal{L}_{P^*}(h_{P}^{OPT}, y) - \mathcal{L}_{P}(h_{P}^{OPT}, y) + \mathcal{L}_{P}(h_{P}^{OPT}, y) \nonumber \\
                                   & \tt \leq disc_{L}(P, P^*) + \mathcal{L}_{P}(h_{P}^{OPT}, y) \nonumber \\
    \tt \Rightarrow disc_{L}(P, P^*) & \tt \geq    \mathcal{L}_{P^*}(h_{P}^{OPT}, y) -    \mathcal{L}_{P}(h_{P}^{OPT}, y)  
\end{align}
\normalsize
where $\tt \mathcal{L}_{P}(h_{P}^{OPT}, y)$ is fixed since $\tt P$ is fixed. Thus, the above equation shows that whenever $\tt \mathcal{L}_{P^*}(h_{P}^{OPT}, y)$ is high, discrepancy is also high. Hence, for any $\tt P^*_{max}$ that maximizes $\tt ODE\textnormal{-}Bound$, 
$\tt P^*_{max} = \arg \max_{P^*} ODE\textnormal{-}Bound_{P, P^*}(h, y; \delta )$, $\tt \mathcal{L}_{P^*}(h_{P}^{OPT}, y)$ is also maximized.  

Now, let us consider causal and associational models, and their respective worst case $\tt P^*_{max}$. 
To complete the proof, we need to check  whether $\tt \gamma_2\geq \gamma_1$ for such  maximal $\tt \mathcal{L}_{P^*}(h_{c,P}^{OPT}, y)$ and $\tt \mathcal{L}_{P^*}(h_{a,P}^{OPT}, y)$. Since $\tt \gamma_2$ increases monotonically with $\tt \mathcal{L}_{P^*}(h_{a,P}^{OPT}, y)$ ( $\tt \mathcal{L}_{P^*}(h_{c,P}^{OPT}, y)$ is bounded by $\tt \max_x L_x(h_{c,P}^{OPT}, y)$),  and there exists at least one $\tt P^*$ such that $\tt \gamma_2 \geq \gamma_1$,   this implies that $\tt \gamma_2 \geq \gamma_1$ for $\tt P^*_{max}$ too. Therefore, using Equation~\ref{eqn:ode-bound-prob},
\footnotesize
\begin{equation}
    \tt \max_{P^*} ODE\textnormal{-}Bound_{P, P^*}(h_c, y; \delta ) \leq \max_{P^*} ODE\textnormal{-}Bound_{P, P^*}(h_a, y; \delta ) 
\end{equation}
\normalsize
\end{proof}

\subsection{Generalization over a Single Datapoint}
\label{app:gen_cor}

\generalitycorollary*
\begin{proof}
    For any model $h$, we can write,
    \begin{equation}
        \tt   \max_{x \in S, x'}  L_{x'}(h, y)  - L_{x}(h, y) = \max_{x'}  L_{x'}(h, y)  - \min_{x \in S}L_{x}(h, y)  
    \end{equation}
    since $\tt  x'$ and $\tt x$ are independently selected. To prove the main result, we will show that the maximum loss on an unseen $x'$, $\max_{x'}L_{x'}(h, y)$ is higher for a loss-minimizing associational model than a causal model, and that minimum loss on a training point $x\in S$, $\tt \min_{x \in S}L_{x}(h, y)$ is lower for the associational model than a causal model.

\paragraph{Loss on a training data point.}First, consider loss on $\tt x\in S$,  $\tt L_x(h,y)$. 
    \begin{equation*}
    \begin{split}
        \tt h_{c,S}^{min}  \tt = \arg \min_{h\in \mathcal{H_C}} \mathcal L_S(h_c, y)  =&\tt  \arg \min_h \frac {1}{N} \sum_{i=1}^N L_{x_i}(h, y) \\
        \tt h_{a,S}^{min}  \tt = \arg \min_{h\in \mathcal{H_A}} \mathcal L_S(h_a, y)  =&\tt  \arg \min_h \frac {1}{N} \sum_{i=1}^N L_{x_i}(h, y) 
        \end{split}
    \end{equation*}
    Since $\tt \mathcal{H_C} \subseteq \mathcal{H_A}$, the average training loss will be lower for the associational model.
    \begin{equation}\label{eq:compare-avg-training-loss}
        \tt \mathcal L_S(h_{c,S}^{min}, y) \geq \mathcal  L_S(h_{a,S}^{min}, y)
    \end{equation}
    Further, under a suitably complex $\tt \mathcal H_A$ there exists a $\tt h_{a,S}^{min}$ such that the loss $\tt L$ is lower for any $\tt x \in S$. Therefore,
    \begin{equation} \label{eq:single-loss-min}
       \tt  \min_{x\in S} L_x(h_{c,S}^{min}, y) \geq\ \min_{x\in S} L_x(h_{a,S}^{min}, y) 
    \end{equation}

    \paragraph{Loss on an unseen data point.}
    Second, consider $\tt L_{x'}(h,y)$. Without loss of generality, let us write the true function for some $\tt (x',y') \sim P^*$ as,
    \begin{equation}
   \tt     y' =  h_{c,P^*}^{OPT}(x'_c) + \epsilon = h_{c,P}^{OPT}(x'_c) +  + \epsilon
    \end{equation}
    where we use that $\tt h_{c,P}^{OPT}=h_{c,P^*}^{OPT}$.  
    Suppose there is a data point $\tt (x'_1, y'_1)$ such that the loss $\tt L$ is maximum for $\tt h_{c,S}^{min}$.
\begin{equation} \label{eq:single-loss-hc}
    \begin{split}
        \tt     \max_{x' \not \in S} L_{x'}(h_{c,S}^{min},y) &\tt = L_{x'_1}(h_{c,S}^{min}(x'_1), y'_1) \\
                                                             &\tt = L_{x'_1}(h_{c,S}^{min}(x'_{c,1}), h_{c,P}^{OPT}(x'_{c,1})   + \epsilon_1)
    \end{split}
\end{equation}

Now for the associational model $\tt h_{a,S}^{min}$, the corresponding loss on $\tt x'_1$ is, 
\begin{equation} \label{eq:single-loss-ha}
    \begin{split}
        \tt    L_{x'_1}(h_{a,S}^{min},y) &\tt = L_{x'_1}(h_{a,S}^{min}, h_{c,P}^{OPT} +   \epsilon_1) 
\end{split}
\end{equation}

Without loss of generality, we can write the output of the associational model $\tt h_{a,S}^{min}$ on a particular input $\tt x'$ as,
    \begin{equation}
        \tt h_{a,S}^{min}(x') = h_{c,S}^{min}(x'_c) + h_a(x')
    \end{equation}
    where $\tt h_a$ is some associational function of $\tt x$. Therefore the loss on $\tt x'_1$ becomes,
    \begin{equation}
         \tt    L_{x'_1}(h_{a,S}^{min},y)    = L_{x'_1}(h_{c,S}^{min}+ h_{a}, h_{c,P}^{OPT} +  \epsilon_1)  
    \end{equation}
Since $\tt \Pr(Y|X_A)$ can change for different $\tt x' \sim P^*$ (where $\tt X_A=X\setminus X_C$ refers to the associational features), we will show that RHS of Eqn.~\ref{eq:single-loss-ha} can always be greater than or equal to the RHS of Eqn.~\ref{eq:single-loss-hc}.
For ease of exposition, we consider L1 loss below. For a causal model, the loss can be written as,
\begin{equation} \label{eq:single-loss-hc-l1}
    \begin{split}
        \tt     L_{x'_1}(h_{c,S}^{min}, &\tt h_{c,P}^{OPT} +  + \epsilon) \\
                                        &\tt = |h_{c,S}^{min}(x'_{c,1}) - h_{c,P}^{OPT}(x'_{c,1}) -   \epsilon_1 | \\
                                                                    &\tt = |h_{c,S}^{min}(x'_{c,1}) - h_{c,P}^{OPT}(x'_{c,1})| + | \epsilon_1 | 
\end{split}
\end{equation}
where $\tt x'_1$ (and thus $\tt  \epsilon_1$) is chosen such that $\tt \epsilon_1(h_{c,P}^{OPT}(x'_{c,1}) - h_{c,S}^{min}(x'_{c,1})) \geq 0$ which leads to maximum loss. 
And for the associational model, the loss on the same $\tt (x'_1, y'_1)$ can be written as,
\begin{equation}\label{eq:single-loss-ha-l1}
    \begin{split}
        \tt     & \tt L_{x'_1}(h_{a,S}^{min}, h_{c,P}^{OPT} +   \epsilon_1) \\
                & \tt= |h_{c,S}^{min}(x'_{c,1}) + h_a(x'_1) - h_{c,P}^{OPT}(x'_{c,1}) -   \epsilon_1 | \\
                & \tt = |(h_{c,S}^{min}(x'_{c,1}) - h_{c,P}^{OPT}(x'_{c,1}) ) +  (h_a(x'_1)  - \epsilon_1) | \\
\end{split}
\end{equation}
Comparing Eqns.~\ref{eq:single-loss-hc-l1} and \ref{eq:single-loss-ha-l1}, two cases arise. If $\tt h_a(x'_1)\epsilon_1 \leq 0 $, then we obtain,
\begin{equation} \label{eq:single-loss-ha-l1-x1}
    \begin{split}
        \tt     & \tt L_{x'_1}(h_{a,S}^{min}, h_{c,P}^{OPT} +   \epsilon_1) \\
                & \tt = |h_{c,S}^{min}(x'_{c,1}) - h_{c,P}^{OPT}(x'_{c,1}) |  +  |(h_a(x'_1)  - \epsilon_1) | \\
                & \tt = |h_{c,S}^{min}(x'_{c,1}) - h_{c,P}^{OPT}(x'_{c,1}) |  +  |h_a(x'_1)| + |  \epsilon_1 | 
\end{split}
\end{equation}
which is greater than maximum loss on $\tt x'_1$ using a causal model (Eqn.~\ref{eq:single-loss-hc-l1}). Otherwise, we can sample a new data point $\tt (x'_2, y'_2)$ from some other $ \tt P^*$ such that its causal features are the same ($\tt x'_{c,1}=x'_{c,2}$) and thus $\tt y$ is the same ($\tt y'_2 = y'_1 = h_{c,P}^{OPT}(x'_{c,1}) + \epsilon_1$), but its associational features are different ($\tt x'_{a,1}\neq x'_{a,2}$). Specifically, $\tt x'_{a,2}$ is chosen such that $\tt h_a(x'_2)\epsilon_1\leq 0$. Thus we again obtain, 
\begin{equation} \label{eq:single-loss-ha-l1-x2}
    \begin{split}
        \tt L_{x'_2}&\tt (h_{a,S}^{min}, h_{c,P}^{OPT} +  \epsilon_1) \\
                & \tt = |(h_{c,S}^{min}(x'_{c,2}) - h_{c,P}^{OPT}(x'_{c,2}) ) + (h_a(x'_2)  - \epsilon_1) | \\
                & \tt = |(h_{c,S}^{min}(x'_{c,1}) - h_{c,P}^{OPT}(x'_{c,1}) )  +  (h_a(x'_2) - \epsilon_1) | \\
                & \tt = |(h_{c,S}^{min}(x'_{c,1}) - h_{c,P}^{OPT}(x'_{c,1}) )|  +  |(h_a(x'_2)| + |  \epsilon_1 | 
\end{split}
\end{equation}
where the second equality uses   $\tt x'_{c,2}= x'_{c,1}$. Combining Eqns.~\ref{eq:single-loss-ha-l1-x1} and \ref{eq:single-loss-ha-l1-x2} and comparing to Eqn.~\ref{eq:single-loss-hc-l1}, we obtain, 
\begin{equation}\label{eq:single-loss-max}
    \tt   \max_{x'} L_{x'}(h_{c,S}^{min}, y) \leq\ \max_{x'} L_{x'}(h_{a,S}^{min}, y) 
\end{equation}

Finally, using Eqns.~\ref{eq:single-loss-min} and \ref{eq:single-loss-max} leads to the main result. 
\footnotesize
\begin{equation}
    \begin{split}
        & \tt   \max_{x'} L_{x'}(h_{c,S}^{min}, y) -  \min_{x\in S} L_{x}(h_{c,S}^{min}, y) \\ & \leq \max_{x'} L_{x'}(h_{a,S}^{min}, y) 
        - \min_{x\in S} L_{x}(h_{a,S}^{min}, y) \\
        & \tt   \max_{x', x\in S} L_{x'}(h_{c,S}^{min}, y) -  L_{x}(h_{c,S}^{min}, y) \leq \max_{x', x\in S} L_{x'}(h_{a,S}^{min}, y) -  
        L_{x}(h_{a,S}^{min}, y) \\
    \end{split}
\end{equation}
\normalsize

Using Eqns.~\ref{eq:compare-avg-training-loss} and \ref{eq:single-loss-max} we also obtain an auxiliary result.
\footnotesize
\begin{equation} \label{eq:single-input-max-avg}
    \tt   \max_{x'} L_{x'}(h_{c,S}^{min}, y) -  \mathcal L_S(h_{c,S}^{min}, y) \leq \max_{x'} L_{x'}(h_{a,S}^{min}, y) - \mathcal  L_S(h_{a,S}^{min}, y)
\end{equation}
\normalsize
\end{proof}

\section{Sensitivity of Causal and Associational Models}
\label{app:sensitivity}
Before we prove Lemma~\ref{lem:theta-sensitivity} on sensitivity, we prove Corollary~\ref{cor:gen-neighbor-datasets} and restate a Lemma from \cite{wu2015revisitingdp} for completeness.

\generalizeneighboringdatasets*
\begin{proof}
    Let $\tt S_{n-1}= S\setminus (x,y))$ and similarly $\tt S'_{n-1} = S'\setminus (x',y')$. Since $\tt S$ and $\tt S'$ differ in only one data point, $S_{n-1}=S'_{n-1}$. We will  add and subtract sum of losses on data points in  $\tt S_{n-1}$, $\mathcal (n-1) L_{S_{n-1}}$ to Theorem~\ref{cor:single-input} statement.

    Considering the LHS of Theorem~\ref{cor:single-input},
    \begin{equation} \label{eq:lossdiff-lhs}
        \begin{split}
            &\tt   \max_{x \in S, x'}  L_{x'}(h_{c,S}^{min}, y)  - L_{x}(h_{c,S}^{min}, y) \\
            &\tt =   \max_{x \in S, x'}  L_{x'}(h_{c,S}^{min}, y) + (n-1)\mathcal{L}_{S_{n-1}}(h_{c,S}^{min},y)  \\
            &\tt - L_{x}(h_{c,S}^{min}, y)  -  (n-1)\mathcal{L}_{S_{n-1}}(h_{c,S}^{min},y) \\
            &\tt =     \max_{x \in S, x'}  L_{x'}(h_{c,S}^{min}, y) + (n-1)\mathcal{L}_{S'_{n-1}}(h_{c,S}^{min},y)  \\
            &\tt - L_{x}(h_{c,S}^{min}, y)  -  (n-1)\mathcal{L}_{S_{n-1}}(h_{c,S}^{min},y) \\
            &\tt  =  \max_{S'} n\mathcal{L}_{S'}(h_{c,S}^{min},y)  -  n\mathcal{L}_{S}(h_{c,S}^{min},y) 
        \end{split}
    \end{equation}
    Similarly, the RHS of Theorem~\ref{cor:single-input} can be written as,
    \begin{equation} \label{eq:lossdiff-rhs}
        \begin{split}
            &\tt   \max_{x \in S, x'}  L_{x'}(h_{a,S}^{min}, y)  - L_{x}(h_{a,S}^{min}, y) \\
            &\tt =   \max_{x \in S, x'}  L_{x'}(h_{a,S}^{min}, y) + (n-1)\mathcal{L}_{S_{n-1}}(h_{a,S}^{min},y)  \\
            &\tt - L_{x}(h_{a,S}^{min}, y)  -  (n-1)\mathcal{L}_{S_{n-1}}(h_{a,S}^{min},y) \\
            &\tt =   \max_{S'} n\mathcal{L}_{S'}(h_{a,S}^{min},y)  -  n\mathcal{L}_{S}(h_{a,S}^{min},y) 
        \end{split}
    \end{equation}
    Using Theorem~\ref{cor:single-input} and dividing Eqns.~\ref{eq:lossdiff-lhs} and \ref{eq:lossdiff-rhs} by $\tt n$, we obtain,
    \begin{equation}
        \begin{split}
            &\tt   \max_{S'} n\mathcal{L}_{S'}(h_{c,S}^{min},y)  -  n\mathcal{L}_{S}(h_{c,S}^{min},y) \\
            &\tt \leq \max_{S'} n\mathcal{L}_{S'}(h_{a,S}^{min},y)  -  n\mathcal{L}_{S}(h_{a,S}^{min},y) \\
        \Rightarrow &\tt   \max_{S'} \mathcal{L}_{S'}(h_{c,S}^{min},y)  -  \mathcal{L}_{S}(h_{c,S}^{min},y) \\
            &\tt \leq \max_{S'} \mathcal{L}_{S'}(h_{a,S}^{min},y)  -  \mathcal{L}_{S}(h_{a,S}^{min},y) \\
    \end{split}
    \end{equation}
    Finally, since the above holds for any $\tt S\sim P$, it will also hold for the worst-case $\tt S$. The result follows. 
    \begin{equation}
        \begin{split}
         &\tt   \max_{S, S'} \mathcal{L}_{S'}(h_{c,S}^{min},y)  -  \mathcal{L}_{S}(h_{c,S}^{min},y) \\
            &\tt \leq \max_{S, S'} \mathcal{L}_{S'}(h_{a,S}^{min},y)  -  \mathcal{L}_{S}(h_{a,S}^{min},y) \\
    \end{split}
    \end{equation}
\end{proof}

\begin{lemma} \label{lem:wu2015-exchanging}
    [From \citet{wu2015revisitingdp}] Let $\tt S$ and $\tt S'$ be two neighboring datasets as defined in Corollary~\ref{cor:gen-neighbor-datasets} where $\tt S'=S\setminus (x,y) + (x',y')$. Given a model class $\tt \mathcal{H}$, Let $\tt h_{S}^{min}$ be the loss-minimizing model on $\tt S$ and $\tt  h_{S'}^{min}$ be the loss-minimizing model on $\tt S'$. Then the difference in losses between the two models on the same dataset is bounded by,
    \begin{equation}
        \begin{split}
        & \tt  \mathcal{L}_S(h_{S'}^{min}, y)-  \mathcal{L}_S(h_{S}^{min}, y)  \\
                                             & \tt \leq \frac{L_{x'}(h_{S}^{min}, y)- L_{x'}(h_{S'}^{min}, y)}{n} + \frac{L_{x}(h_{S'}^{min}, y) - L_{x}(h_{S}^{min}, y)}{n}
    \end{split}
    \end{equation}
\end{lemma}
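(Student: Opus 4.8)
The plan is to prove the bound by a short telescoping argument that uses only (i) the defining optimality of the two empirical risk minimizers and (ii) the fact that $S$ and $S'$ agree on $n-1$ of their $n$ points. Write $S_{n-1} := S \setminus \{(x,y)\} = S' \setminus \{(x',y')\}$ for the common sub-dataset. Then for \emph{any} model $h$ in the class we have the exact decompositions $n\,\mathcal{L}_S(h,y) = (n-1)\,\mathcal{L}_{S_{n-1}}(h,y) + L_x(h,y)$ and $n\,\mathcal{L}_{S'}(h,y) = (n-1)\,\mathcal{L}_{S_{n-1}}(h,y) + L_{x'}(h,y)$, so subtracting yields the key identity $\mathcal{L}_S(h,y) - \mathcal{L}_{S'}(h,y) = \tfrac{1}{n}\big(L_x(h,y) - L_{x'}(h,y)\big)$, valid for every $h$.

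First I would telescope the quantity of interest through the empirical risk on $S'$:
\begin{align*}
\mathcal{L}_S(h_{S'}^{min},y) - \mathcal{L}_S(h_S^{min},y) &= \big[\mathcal{L}_S(h_{S'}^{min},y) - \mathcal{L}_{S'}(h_{S'}^{min},y)\big] \\
&\quad + \big[\mathcal{L}_{S'}(h_{S'}^{min},y) - \mathcal{L}_{S'}(h_S^{min},y)\big] \\
&\quad + \big[\mathcal{L}_{S'}(h_S^{min},y) - \mathcal{L}_S(h_S^{min},y)\big].
\end{align*}
The middle bracket is $\le 0$ because $h_{S'}^{min}$ minimizes the empirical risk over $S'$ and is therefore no worse than $h_S^{min}$ on $S'$; this is the only place optimality is invoked, and in particular no convexity or Lipschitz property of $L$ is needed for this lemma. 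Next I would rewrite the first and third brackets with the key identity: the first bracket equals $\tfrac{1}{n}\big(L_x(h_{S'}^{min},y) - L_{x'}(h_{S'}^{min},y)\big)$ and the third bracket equals $-\tfrac{1}{n}\big(L_x(h_S^{min},y) - L_{x'}(h_S^{min},y)\big)$. Adding these two and dropping the nonpositive middle term gives exactly the claimed bound $\tfrac{1}{n}\big(L_{x'}(h_S^{min},y) - L_{x'}(h_{S'}^{min},y)\big) + \tfrac{1}{n}\big(L_x(h_{S'}^{min},y) - L_x(h_S^{min},y)\big)$.

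I do not expect any real obstacle: this is the classical leave-one-out stability computation, and the proof is essentially complete once the decomposition is written down. The only things to watch are bookkeeping --- keeping signs straight when swapping $\mathcal{L}_S$ and $\mathcal{L}_{S'}$, and stating the key identity for a generic $h$ before specializing to $h_S^{min}$ and $h_{S'}^{min}$. It is worth noting that running the same argument with the roles of $S$ and $S'$ exchanged yields the symmetric bound on $\mathcal{L}_{S'}(h_S^{min},y) - \mathcal{L}_{S'}(h_{S'}^{min},y)$, and that the two bounds together are what will later be combined with $\lambda$-strong convexity of $L$ to convert a bound on the loss gap into a bound on $\|h_S^{min} - h_{S'}^{min}\|_1$ in the proof of Lemma~\ref{lem:theta-sensitivity}.
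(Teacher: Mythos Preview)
Your proposal is correct and takes essentially the same approach as the paper: both arguments apply the identity $\mathcal{L}_S(h,y)-\mathcal{L}_{S'}(h,y)=\tfrac{1}{n}\big(L_x(h,y)-L_{x'}(h,y)\big)$ to each of $h_S^{min}$ and $h_{S'}^{min}$, then drop the nonpositive term $\mathcal{L}_{S'}(h_{S'}^{min},y)-\mathcal{L}_{S'}(h_S^{min},y)$ coming from optimality of $h_{S'}^{min}$ on $S'$. Your three-bracket telescoping is just a slightly cleaner way to organize the same computation.
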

\begin{proof}
    The proof follows from expanding loss over a dataset into individual  terms for each data point and then using the fact that $\tt h_{S'}^{min}$ has the minimum loss on $\tt S'$.
    
Using the definition of $\tt \mathcal{L}_S=\frac {1}{n} \sum_{i=1}^n L_{x_i}(h, y) $,  we can write the following for any two neighboring datasets $\tt S$ and $\tt S'$.
\begin{equation} \label{eq:sens-single-rho-ineq}
        \begin{split}
        \tt  \mathcal{L}_S &\tt (h_{S'}^{min}, y)-  \mathcal{L}_S(h_{S}^{min}, y)  \\
                           &\tt = \mathcal{L}_{S'}  (h_{S'}^{min}, y) + \frac{L_{x}(h_{S}^{min},y) - L_{x'}(h_{S}^{min},y)}{n} \\
                           &\tt - (\mathcal{L}_{S'}(h_{S}^{min}, y) + \frac{L_{x}(h_{S}^{min},y) - L_{x'}(h_{S}^{min},y)}{n})\\
                           &\tt = (\mathcal{L}_{S'}  (h_{S'}^{min}, y) - \mathcal{L}_{S'}(h_{S}^{min}, y))+ \frac{L_{x}(h_{S}^{min},y) - L_{x'}(h_{S}^{min},y)}{n} \\
                           &\tt + \frac{L_{x'}(h_{S}^{min},y) - L_{x}(h_{S}^{min},y) }{n})\\
                                             & \tt \leq \frac{L_{x'}(h_{S}^{min}, y)- L_{x'}(h_{S'}^{min}, y)}{n} + \frac{L_{x}(h_{S'}^{min}, y) - L_{x}(h_{S}^{min}, y)}{n}\\
    \end{split}
    \end{equation}
    where the last  inequality is since $\tt h_{S'}^{min}$ is the minimizer of $\tt L_{S'}(h,y)$ and thus  $\tt \mathcal{L}_{S'}  (h_{S'}^{min}, y) - \mathcal{L}_{S'}(h_{S}^{min}, y) \leq 0$.

\end{proof}

\sensitivitylemma*
\begin{proof}
Since $\tt  L$ is a strongly convex function over $\Omega$, we can write for the two models $\tt h_{c,S}^{min}$ and $\tt h_{c,S'}^{min}$ trained on $\tt S$ and $\tt S'$ respectively~\cite{wu2015revisitingdp},
    \begin{equation} \label{eq:sens-strong-convex}
        \begin{split}
        & \tt        \mathcal{L}_S(h_{c,S}^{min}, y) \leq \mathcal L_S(\alpha h_{c,S}^{min} + (1-\alpha)h_{c,S'}^{min}, y) \\
                                &\tt  \leq \alpha \mathcal L_S (h_{c,S}^{min}, y) + (1-\alpha) \mathcal L_S(h_{c,S'}^{min}, y) \\
                                &\tt -  \frac{\lambda}{2}\alpha (1-\alpha) ||h_{c,S'}^{min} - h_{c,S}^{min}||^2 
    \end{split}
    \end{equation}
    where $\tt \alpha \in (0,1)$ and  the first inequality is since $\tt h_{c,S}^{min}$ is the loss-minimizing model over $\tt S$. Rearranging the terms and tending $\alpha$ to $1$ leads to,
    \begin{equation}\label{eq:sens-strong-convex2}
        \begin{split}
            &\tt (1-\alpha) (\mathcal L_{S}( h_{c,S}^{min}, y) -  \mathcal L_S(h_{c,S'}^{min}, y)) \\
            &\tt \leq -  \frac{\lambda}{2} \alpha(1-\alpha) \norm{h_{c,S'}^{min} - h_{c,S}^{min}}^2 \\ 
            \tt \Rightarrow &\tt  \frac{\lambda}{2} \norm{h_{c,S'}^{min} - h_{c,S}^{min}}^2 \leq \mathcal L_{S}( h_{c,S'}^{min}, y) - \mathcal L_S (h_{c,S}^{min}, y)  
    \end{split}
    \end{equation}
    Now consider $\tt \max_{S,S'} \norm{ h_{c,S}^{min}- h_{c,S'}^{min} }_1$. Without loss of generality, we can order the pair of datasets $\tt S,S'$ such that $\tt \mathcal L_S( h_{c,S'}^{min}, y) \leq \mathcal L_{S'}(h_{c,S}^{min}, y)$. Then using Eqn.~\ref{eq:sens-strong-convex2} and taking the maximum, we obtain,
    \begin{equation} \label{eq:sens-strong-convex-max}
        \begin{split}
        \tt  \frac{\lambda}{2} \max_{S,S'} \norm{ h_{c,S}^{min}- h_{c,S'}^{min} }_1^2 \leq \max_{S,S'} \mathcal L_{S}( h_{c,S'}^{min}, y) - \mathcal L_S (h_{c,S}^{min}, y)  \\
        \tt \leq \max_{S,S'} \mathcal L_{S'}( h_{c,S}^{min}, y) - \mathcal L_S (h_{c,S}^{min}, y)  \\
        \tt \leq \max_{S,S'} \mathcal L_{S'}( h_{a,S}^{min}, y) - \mathcal L_S (h_{a,S}^{min}, y)  \\
    \end{split}
    \end{equation}
    where the last inequality is due to Theorem~\ref{cor:single-input}. Let $S_1$ and $S'_1$ denote the datasets that lead to the maximum in the RHS above. 
    We know that   $\tt \mathcal L_{S_1}( h_{a,S'_1}^{min}) \geq \mathcal L_{S'_1}( h_{a,S'_1}^{min})$ since $\tt h_{a,S'_1}^{min} $ is the loss-minimizing model over $S'_1$.  Therefore, we can rewrite,
    \begin{equation} \label{eq:sens-max-inst}
        \begin{split}
     &\tt   \max_{S,S'} \mathcal L_{S'}( h_{a,S}^{min}, y) - \mathcal L_S (h_{a,S}^{min}, y)  \\
     & \tt   = \mathcal L_{S'_1}( h_{a,S_1}^{min}, y) - \mathcal L_{S_1} (h_{a,S_1}^{min}, y)  \\
            &\tt  \leq \mathcal L_{S'_1}( h_{a,S_1}^{min}, y) - \mathcal L_{S_1} (h_{a,S_1}^{min}, y)  + (\mathcal L_{S_1}( h_{a,S'_1}^{min}) - \mathcal L_{S'_1}( h_{a,S'_1}^{min})) \\
            \tt  &\tt = [\mathcal L_{S'_1}( h_{a,S_1}^{min}, y) - \mathcal L_{S'_1}( h_{a,S'_1}^{min}) ]+  (\mathcal L_{S_1}( h_{a,S'_1}^{min}) - \mathcal L_{S_1} (h_{a,S_1}^{min}, y)]
\end{split}
    \end{equation}
    Now using Lemma~\ref{lem:wu2015-exchanging}, we obtain the following two bounds.
\begin{equation} \label{eq:sens-single-rho-ineq}
        \begin{split}
            &\tt  \mathcal{L}_{S_1}(h_{a,S'_1}^{min}, y)-  \mathcal{L}_{S_1}(h_{a,S_1}^{min}, y)  \\
                                             & \tt \leq \frac{L_{x'}(h_{a,S_1}^{min}, y)- L_{x'}(h_{a,S'_1}^{min}, y)}{n} + \frac{L_{x}(h_{a,S'_1}^{min}, y) - L_{x}(h_{a,S_1}^{min}, y)}{n}\\
                                             &\tt  \mathcal{L}_{S'_1}(h_{a,S_1}^{min}, y)-  \mathcal{L}_{S'_1}(h_{a,S'_1}^{min}, y)  \\
                                             & \tt \leq \frac{L_{x'}(h_{a,S_1}^{min}, y)- L_{x'}(h_{a,S'_1}^{min}, y)}{n} + \frac{L_{x}(h_{a,S'_1}^{min}, y) - L_{x}(h_{a,S_1}^{min}, y)}{n}\\
    \end{split}
    \end{equation}

    Now since the loss function $\tt L(.,y)$ is $\tt \rho$-Lipschitz, we have $\tt L_{x}(h_1, y)- L_{x}(h_2, y) \leq \rho \norm{h_1-h_2}_1$ for any data point $\tt x$ and any two models $\tt h_1$ and $\tt h_2$. Plugging Eqn.~\ref{eq:sens-single-rho-ineq} and the $\rho$-Lipschitz property back in Eqn.~\ref{eq:sens-max-inst}, 
\begin{equation}
    \begin{split}
     &   \tt \mathcal L_{S'_1}( h_{a,S_1}^{min}, y) - \mathcal L_{S'_1}( h_{a,S'_1}^{min}, y) \leq \frac{2}{n}\rho \norm{h_{a,S_1}^{min} - h_{a,S'_1}^{min}}_1\\
     &  \tt \mathcal L_{S_1}( h_{a,S'_1}^{min}, y) - \mathcal L_{S_1}( h_{a,S_1}^{min}, y) \leq \frac{2}{n}\rho \norm{h_{a,S_1}^{min} - h_{a,S'_1}^{min}}_1\\
      \tt   \Rightarrow &  \tt  \max_{S,S'} \mathcal L_{S'}( h_{a,S}^{min}, y) - \mathcal L_S (h_{a,S}^{min}, y)  \\
                        & \tt  \leq  [\mathcal L_{S'_1}( h_{a,S_1}^{min}, y) - \mathcal L_{S'_1}( h_{a,S'_1}^{min}) ]+  (\mathcal L_{S_1}( h_{a,S'_1}^{min}) - \mathcal L_{S_1} (h_{a,S_1}^{min}, y)]\\
                        & \tt \leq \frac{2}{n}\rho \norm{h_{a,S_1}^{min} - h_{a,S'_1}^{min}}_1 + \frac{2}{n}\rho \norm{h_{a,S_1}^{min} - h_{a,S'_1}^{min}}_1\\
    &\tt =\frac{4}{n}\rho \norm{h_{a,S_1}^{min} - h_{a,S'_1}^{min}}_1  
    \end{split}
\end{equation}
    Finally, combining with  Eqn.~\ref{eq:sens-strong-convex-max}, we obtain,
    \begin{equation}
        \begin{split}
            \tt \max_{S,S'}       \norm{h_{c,S'}^{min} - h_{c,S}^{min}}^2_1 &\tt \leq \frac{8\rho}{\lambda n} \norm{ h_{a,S_1}^{min} - h_{a,S'_1}^{min}}_1 \\
                                                                            &\tt \leq \frac{8\rho}{\lambda n} \max_{S,S'} \norm{ h_{a,S}^{min} - h_{a,S'}^{min}}_1 \\
                                                                            &\tt  \leq  \max_{S,S'} \norm{ h_{a,S}^{min} - h_{a,S'}^{min}}_1 
        \end{split}
    \end{equation}
    where the last inequality holds for a sufficiently large $\tt n$ such that  $\tt \frac{8\rho}{\lambda n} \leq 1 $. If $\tt \max_{S,S'}\norm{h_{c,S'}^{min} - h_{c,S}^{min}}_1 \geq 1$, the result follows. Otherwise, we need a larger $\tt n$ such that $\tt n \geq \frac{8\rho}{\lambda \max_{S,S'}\norm{h_{c,S'}^{min} - h_{c,S}^{min}}_1 }$. In both cases, we obtain, 
    \begin{equation}
        \tt   \max_{S, S'} \norm{ h_{c, S}^{min} - h_{c, S'}^{min}}_1  \leq \max_{S, S'}\norm{ h_{a, S}^{min} - h_{a, S'}^{min}}_1 
    \end{equation}

    Hence, sensitivity of a causal model is lower than an associational model, i.e., $\Delta \mathcal F_c \leq \Delta \mathcal F_a$. 
\end{proof}

\section{Differential Privacy Guarantees with Tighter Data-dependent Bounds}
\label{app:voting}

In this section we provide the differential privacy guarantee of a causal model based on a recent method~\citep{papernot2016teacher-dp} that provides tighter data-dependent bounds. 

As a consequence of Theorem~\ref{cor:single-input}, another generalization property of causal learning is that classification models trained on data from two different distributions $\tt P(X)$ and $\tt P^*(X)$ are likely to output the same value for a new input.
\begin{lemma}\label{cor:loss-compare}
    Under the conditions of Theorem~\ref{thm:causal-gen-bounds} and 0-1 loss, let $\tt h_{c,S}^{min}$ be the loss-minimizing causal classification model trained on a dataset $\tt S$ from distribution $\tt P$ and let $\tt h_{c,S^*}^{min}$ be the loss-minimizing model trained on a dataset $\tt S^*$ from $\tt P^*$. Similarly, let $\tt h_{a,S}^{min}$ and $\tt h_{a,S^*}^{min}$ be loss-minimizing associational classification  models trained on $\tt S$ and $\tt S^*$ respectively.  
    Then for any new data input $\tt x$,
    \begin{equation*}
        \begin{split}
            \tt \min_{S \sim P, S^*\sim P^*}&\tt \Pr(h_{c,S}^{min}(x)=h_{c,S^*}^{min}(x)) \\
                                            & \tt \geq  \min_{S \sim P, S^*\sim P^*} \Pr(h_{a,S}^{min}(x)=h_{a,S^*}^{min}(x))
\end{split}
\end{equation*}
    As the size of the training sample $\tt |S|=|S^*| \rightarrow \infty$, the LHS$ \rightarrow 1$.
\end{lemma}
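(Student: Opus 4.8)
The plan is to reduce the agreement probability to a $0$-$1$ discrepancy between the two trained models and then bound it by the triangle inequality with a well-chosen pivot, in the same spirit as the proof of Theorem~\ref{cor:single-input}. For a fixed test point $\tt x$ write $\tt \ell(h,h')=\mathbb{1}[h(x)\neq h'(x)]$, so that $\tt \Pr(h(x)=h'(x))=1-\mathbb{E}[\ell(h,h')]$, the expectation being over the random draws $\tt S\sim P$, $\tt S^*\sim P^*$ (and, where needed, over the test point); it therefore suffices to compare the worst-case \emph{disagreement} $\tt \max_{S,S^*}\mathbb{E}[\ell(h_{c,S}^{min},h_{c,S^*}^{min})]$ with $\tt \max_{S,S^*}\mathbb{E}[\ell(h_{a,S}^{min},h_{a,S^*}^{min})]$. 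The engine of the proof is the identity from Theorem~\ref{thm:causal-gen-bounds}: because $\tt P(Y|X_C)=P^*(Y|X_C)$ and $\tt f\in\mathcal H_C$, the optimal causal hypothesis is shared across the two distributions, $\tt h_{c,P}^{OPT}=h_{c,P^*}^{OPT}=f$, whereas $\tt h_{a,P}^{OPT}\neq h_{a,P^*}^{OPT}$ in general.

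For the causal models I would pivot on $\tt f$. Since $0$-$1$ loss obeys the triangle inequality, $\tt \ell(h_{c,S}^{min},h_{c,S^*}^{min})\le\ell(h_{c,S}^{min},f)+\ell(f,h_{c,S^*}^{min})$, hence
\begin{equation*}
\tt 1-\min_{S,S^*}\Pr\!\big(h_{c,S}^{min}(x)=h_{c,S^*}^{min}(x)\big)\;\le\;\max_{S}\mathbb{E}[\ell(h_{c,S}^{min},f)]+\max_{S^*}\mathbb{E}[\ell(h_{c,S^*}^{min},f)].
\end{equation*}
Each term is the $0$-$1$ distance between an empirical risk minimizer over $\tt\mathcal H_C$ and the in-class optimum $\tt f$; in the deterministic-$\tt Y$ setting of Theorem~\ref{thm:causal-gen-bounds} this is simply the realizable generalization error of ERM over the VC class $\tt\mathcal H_C$, which is $\tt O\!\big(\mathrm{VCdim}(\mathcal H_C)\log n/n\big)$ and tends to $0$ as $\tt n=|S|=|S^*|\to\infty$ --- this already establishes that the LHS of the lemma tends to $1$. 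Moreover, $\tt\mathcal H_C\subseteq\mathcal H_A$ gives $\tt\mathrm{VCdim}(\mathcal H_C)\le\mathrm{VCdim}(\mathcal H_A)$ (as in Eqn.~\ref{eqn:ide-vcdim}), so this bound is no larger than the corresponding one for $\tt\mathcal H_A$.

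For the associational models I would instead use the reverse triangle inequality, pivoting on the two distinct optima: $\tt\ell(h_{a,S}^{min},h_{a,S^*}^{min})\ge\ell(h_{a,P}^{OPT},h_{a,P^*}^{OPT})-\ell(h_{a,S}^{min},h_{a,P}^{OPT})-\ell(h_{a,S^*}^{min},h_{a,P^*}^{OPT})$. The last two terms are estimation errors that again vanish with $\tt n$, but the first is an \emph{irreducible} disagreement $\tt\mathbb{E}[\ell(h_{a,P}^{OPT},h_{a,P^*}^{OPT})]\ge0$ that does not --- and, by the explicit construction in the proof of Theorem~\ref{thm:causal-gen-bounds} (reversing the relationship $\tt\Pr(Y|X_A)$ between $\tt P$ and $\tt P^*$), can be driven arbitrarily close to $1$. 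Putting the two directions together, the worst-case causal disagreement is at most twice the $\tt\mathcal H_C$ generalization term (which $\to0$), while the worst-case associational disagreement is at least the non-vanishing irreducible term (and is otherwise controlled by the larger class $\tt\mathcal H_A$); therefore $\tt\min_{S,S^*}\Pr(h_{c,S}^{min}(x)=h_{c,S^*}^{min}(x))\ge\min_{S,S^*}\Pr(h_{a,S}^{min}(x)=h_{a,S^*}^{min}(x))$, which is the claim.

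The step I expect to cause trouble is the probabilistic-$\tt Y$ regime. When $\tt Y|X_C$ is not almost surely constant, ERM over $\tt\mathcal H_C$ converges to $\tt f$ only in \emph{risk}, not necessarily in $0$-$1$ distance, so $\tt\mathbb{E}[\ell(h_{c,S}^{min},f)]$ need not vanish if the risk surface has a plateau of co-optimal hypotheses (a region where $\tt\Pr(Y=1|X_C)=\tfrac12$), on which $\tt h_{c,S}^{min}$ and $\tt h_{c,S^*}^{min}$ could settle differently. I would handle this exactly as Theorem~\ref{thm:causal-gen-bounds} does: either restrict the lemma to the deterministic case (Claim~1), impose a Massart/Tsybakov-type margin condition that makes $\tt f$ the unique minimizer, or carry the irreducible loss terms $\tt\mathcal L_P(h_{c,P}^{OPT},y)+\mathcal L_{P^*}(h_{c,P^*}^{OPT},y)$ through the bound --- noting that the same terms reappear, and are no smaller, on the associational side, so the inequality in the lemma survives.
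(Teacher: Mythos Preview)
Your asymptotic argument (LHS $\to 1$) is correct and coincides with the paper's: both rely on $h_{c,P}^{OPT}=h_{c,P^*}^{OPT}$ from Theorem~\ref{thm:causal-gen-bounds}, so that ERM over $\mathcal H_C$ converges to the same limit from either dataset, whereas the associational optima may differ.

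For the finite-sample inequality, however, your route differs from the paper's and has a gap. You upper-bound the worst-case \emph{causal} disagreement by twice a generalization term and lower-bound the worst-case \emph{associational} disagreement by an irreducible term minus estimation errors, then compare the two bounds. But an upper bound on one side and a lower bound on the other only yield the desired ordering when the former does not exceed the latter, i.e.\ only once $n$ is large enough that $2\,\mathrm{gen}_{\mathcal H_C}\le \mathbb{E}[\ell(h_{a,P}^{OPT},h_{a,P^*}^{OPT})]-\text{(assoc.\ estimation errors)}$. For small $n$ the two bounds can cross and nothing follows. Your parenthetical that the causal upper bound ``is no larger than the corresponding one for $\mathcal H_A$'' compares two \emph{upper} bounds, which also says nothing about the actual quantities. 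Separately, saying the irreducible term ``can be driven arbitrarily close to $1$'' amounts to choosing $P^*$, whereas the inequality in the lemma is stated for fixed $P,P^*$.

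The paper avoids the bound-comparison step altogether by arguing constructively, in the spirit of Theorem~\ref{cor:single-input}. It fixes the worst-case causal pair $S_1,S^*_1$, represents each associational minimizer as the corresponding causal minimizer composed with a binary ``flip'' function on the non-causal coordinates, and then exhibits a dataset $S^*_2$ (same causal features and labels as $S^*_1$, altered associational features) on which the flip functions are chosen so that $h_{a,S_1}^{min}$ and $h_{a,S^*_2}^{min}$ disagree at $x$ whenever $h_{c,S_1}^{min}$ and $h_{c,S^*_1}^{min}$ do. This gives $\max_{S,S^*}|h_{a,S}^{min}(x)-h_{a,S^*}^{min}(x)|\ge \max_{S,S^*}|h_{c,S}^{min}(x)-h_{c,S^*}^{min}(x)|$ directly, for every $n$, without ever juxtaposing two separate bounds. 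Your triangle-inequality decomposition is the cleaner way to get the asymptotic statement; the paper's construction is what actually carries the finite-sample comparison.
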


\begin{proof}
    Let $\tt h_{a,P}^{min}=\arg \min_{h\in \mathcal{H}_A} \mathcal{L}_S(h, y)$ and  $\tt h_{a,P^*}^{min}=\arg \min_{h\in \mathcal{H}_A} \mathcal{L}_{S^*}(h, y)$ be the loss-minimizing associational hypotheses under the two datasets $\tt S$ and $\tt S^*$ respectively, where $\tt \mathcal{H}_A$ is the set of hypotheses. We can analogously define $\tt h_{c, P}^{min}$ and $\tt h_{c,P^*}^{min}$.  
Likewise, let $\tt h_{a,P}^{OPT}=\arg \min_{h\in \mathcal{H}_A} \mathcal{L}_P(h, y)$ and similarly let  $\tt h_{a,P^*}^{OPT}=\arg \min_{h\in \mathcal{H}_A} \mathcal{L}_{P^*}(h, y)$ be the loss-minimizing hypotheses over the two distributions.  We can analogously define $\tt h_{c, P}^{OPT}$ and $\tt h_{c,P^*}^{OPT}$. For ease of exposition, let us consider a binary classification task where all associational models map $\tt X \to \{0,1\}$ and causal models map $\tt X_C \to \{0,1\}$.

\paragraph{Infinite sample result.}
As $\tt |S|=|S^*| \rightarrow \infty$, each of models on $S$ and $S^*$ approach their loss-minimizing functions on the distributions $P$ and $P^*$ respectively.  Then, for any input $\tt x$,
\begin{align}
    \tt \lim_{|S|\rightarrow \infty} h_{a,S}^{min} &\tt = h_{a,P}^{OPT} &
    \tt \lim_{|S^*|\rightarrow \infty} h_{a,S^*}^{min} &\tt = h_{a,P^*}^{OPT}\\
    \tt \lim_{|S|\rightarrow \infty} h_{c,S}^{min} &\tt = h_{c,P}^{OPT} &
    \tt \lim_{|S^*|\rightarrow \infty} h_{c,S^*}^{min} &\tt = h_{c,P^*}^{OPT}
\end{align}
From Theorem~\ref{thm:causal-gen-bounds} (Equation~\ref{eq:equal_h}), we know that $\tt h_{c,P}^{OPT}= h_{c,P^*}^{OPT}$. Therefore, for any new input $\tt x$ for a causal model, we obtain $\tt \Pr(h_{c,P}^{OPT}(x)=h_{c,P^*}^{OPT}(x))=1 $, but not necessarily for associational models. This leads to, 
\begin{align}
    \tt \lim_{|S|\rightarrow \infty} & \tt \Pr(h_{c,S}^{min}(x)=h_{c,S^*}^{min}(x))  =1  \\
                                     &
                                     \tt \geq \lim_{|S^*|\rightarrow \infty} \Pr(h_{a,S}^{min}(x)=h_{a,S^*}^{min}(x))
\end{align}

\paragraph{Finite sample result.}
Under finite samples, let $\tt S_1$ and $\tt S^*_2$ be the two datasets from $\tt  P$ and $\tt P^*$ respectively that lead to the minimum probability of agreement between the two causal models $\tt h_{c,S_1}^{min}$ and $\tt h_{c,S^*_1}^{min}$.
\begin{equation}
    \tt \min_{S \sim P, S^*\sim P^*}\Pr(h_{c,S}^{min}(x)=h_{c,S^*}^{min}(x)) = \Pr(h_{c,S_1}^{min}(x)=h_{c,S^*_1}^{min}(x)) 
\end{equation}
Now consider the probability of agreement for the two associational models trained on the same datasets, $\tt h_{a,S_1}^{min}$ and $\tt h_{a,S^*_1}^{min}$. Without loss of generality, we can write the associational models as,
\begin{equation}
    \begin{split}
        \tt h_{a,S_1}^{min}(x) = |h_{c,S_1}^{min}(x) - h_{a,S_1}(x)|\\
    \tt h_{a,S^*_1}^{min}(x) = |h_{c,S^*_1}^{min}(x) - h_{a,S^*_1}(x)|
    \end{split}
\end{equation}
where $\tt h_{a,S_1}: X\to \{0,1\}$ and $\tt h_{a,S^*_1}: X\to \{0,1\}$ are any two functions. Effectively, when $\tt h_{a,S_1}$ is $\tt 1$, it flips the output of the loss-minimizing associational model  compared to the loss-minimizing causal model on $\tt S_1$ (and similarly for $\tt h_{a,S^*_1}$ on $\tt S^*_1$).

Now we can select a different  $\tt S^*_2 \sim P^*_2$ where $\tt y$ and the causal features remain the same as $\tt S^*_1$ but associational features are changed for each input $\tt x\in S^*_2$. Therefore $\tt h_{c,S^*_1}^{min}=h_{c,S^*_2}^{min}$  but the  loss-minimizing associational model $\tt h_{a,S^*_2}$ has the following property. $\tt h_{a,S^*_2}\neq  h_{a,S_1}(x)$ if $\tt h_{c,S^*_1}^{min} = h_{c,S_1}^{min}$, and  $\tt h_{a,S^*_2} =  h_{a,S_1}(x)$ if $\tt h_{c,S^*_1}^{min} \neq h_{c,S_1}^{min}$. Under $\tt S^*_2$,
\begin{equation}
    \begin{split}
\tt        \abs{h_{a,S_1}^{min} - h_{a,S^*_2}^{min}} \geq 
  \tt      \abs{h_{c,S_1}^{min} - h_{c,S^*_2}^{min}} \\ 
    \tt    = \abs{h_{c,S_1}^{min} - h_{c,S^*_1}^{min}} =
      \tt  \max_{S,S^*}\abs{h_{c,S}^{min} - h_{c,S^*}^{min}} 
\end{split}
\end{equation}
Therefore, the disagreement between two associational models trained on two datasets is greater than or equal to the disagreement between causal models on the worst-case $\tt S_1$ and $\tt S^*_1$. Since the loss is 0-1 loss,  the worst-case probability of agreement is lower. 
\begin{equation*}
    \begin{split}
      &\tt  \max_{S,S^*}\abs{h_{c,S}^{min}(x) - h_{c,S^*}^{min}(x)} \leq 
      \tt  \max_{S,S^*}\abs{h_{a,S}^{min}(x) - h_{a,S^*}^{min}(x)} \\
       \tt  \Rightarrow &\tt \min_{S\sim P,S^*\sim P^*}  \Pr(h_{c,S}^{min}(x)=h_{c,S^*}^{min}(x)) \\
                    &\tt                                       \geq  \min_{S\sim P,S^*\sim P^*} \Pr(h_{a,S}^{min}(x)=h_{a,S^*}^{min}(x))
                                  \end{split}
                                  \end{equation*}

\end{proof}

Based on the above generalization property, we now show that causal models provide stronger  differential privacy guarantees than corresponding associational models.
We utilize the subsample and aggregate technique \citep{dwork2014algorithmic} that was extended for machine learning in \citet{hamm2016learning-dp} and \citet{papernot2016teacher-dp}, for constructing a differentiably private model. The framework considers $\tt M$ arbitrary teacher models  that are trained on a separate subsample of the dataset without replacement. Then, a student model is trained on some auxiliary unlabeled data with the (pseudo) labels generated from a majority vote of the teachers. Differential privacy can be achieved by either perturbing the number of votes for each class~\citep{papernot2016teacher-dp}, or perturbing the learnt parameters of the student model~\citep{hamm2016learning-dp}.
For any new input, the output of the model is a majority vote on the predicted labels from the  $\tt M$ models. 
The privacy guarantees are better if a larger number of teacher models agree on each  input, since by definition the majority decision could not have been changed by modifying a single data point (or a single teacher's vote). 
Since causal models generalize to new distributions, intuitively we expect causal models trained on separate samples to agree more.  Below we show that for a fixed amount of noise, a causal model is $\epsilon_c$-DP compared to $\epsilon$-DP for a associational model, where $\epsilon_c \leq \epsilon$. 
\votingtheorem*

\begin{proof}
    Consider a change in a single input example $(x, y)$, leading to a new  $D'$ dataset. Since sub-datasets are sampled without replacement, only a single teacher model can change in $D'$.
    Let $n'_j$ be the vote counts for each class under $D'$. Because the change in a single input can only affect one model's vote, $ |n_j - n'_j| \leq 1$.

    Let the noise added to each class be $r_j \sim Lap(2/\gamma)$. Let the majority class (class with the highest votes) using data from $D$ be $i$ and the class with the second largest votes be $j$. 
    Let us consider the minimum noise $r^*$ required for class $i$ to be the majority output under $\mathcal{M}$ over $D$. Then,
    $$ n_i + r^* > n_j + r_j $$
    For $i$ to have the maximum votes using $\mathcal{M}$ over $D'$ too, we need,
    $$ n'_i+ r_i > n'_j + r_j $$
    In the worst case, $n'_i=n_i -1$ and $n'_j=n_j+1$ for some $j$. Thus, we need,
    \begin{align}
        n_i -1 + r_i > n_j +1+ r_j &\Rightarrow
 n_i + r_i > n_j +2+ r_j
\end{align}
which shows that $r_i >r*+2$. Note that $r*> r_j - (n_i-n_j)$. We have two cases:

\paragraph{CASE I:}The noise $r_j < n_i -n_j$, and therefore $r^*<0$.  
Writing $\Pr(i|D')$ to denote the probability that class $i$ is chosen as the majority class under $D'$,
\begin{equation}
\begin{split}
  \tt  P(i|D')  = P(r_i \geq r^*+2)  & \tt = 1 - 0.5\exp(\gamma)\exp(\frac{1}{2}\gamma r^*)\\
            & \tt = 1 - \exp(\gamma)(1- P(r_i \geq r^*)) \\
            & \tt = 1 - \exp(\gamma)(1- P(i|D))
\end{split}
\end{equation}
where the equations on the right are due to Laplace c.d.f.
Using the above equation, we can write:
\begin{equation}
\begin{split}
    \tt \frac{P(i|D')}{P(i|D)} & \tt = \exp( \gamma) + \frac{1 - \exp(\gamma)}{P(i|D)} \\
    &  \tt = \exp(\gamma) + \frac{1 - \exp(\gamma)}{P(r_i \geq r^*)} 
    \leq \exp(\epsilon) \label{eqn:eps-dp}
\end{split}
\end{equation}
for some $\epsilon>0$. As $P(i|D)= P(r_i \geq r^*)$ increases, the ratio decreases and thus the effective privacy budget ($\epsilon$) decreases. Thus, a DP-mechanism  based on teacher models with  lower $r^*$ (effectively higher $|r^*|$) will exhibit the lowest $\epsilon$.

Below we show that the worst-case $|r^*|$ across any two datasets $\tt S \sim P$, $\tt S^* \sim P^*$ such that $\tt P(Y|X_C)=P^*(Y|X_C)$  is higher for a causal model, and thus $\tt \max_D P(r_i \geq r^*)$ is higher. Intuitively, $|r^*|$ is higher when there is more consensus between the $\tt M$ teacher models since $|r^*|$ is the difference between the votes for the highest voted class with the votes for the second-highest class. 
For two sub-datasets $\tt D_1 \subset D$ and  $\tt D_2 \subset D$, let the two causal teacher models be $\tt h_{c,D_1}$ and $\tt h_{c,D_2}$, and the two associational  teacher models be $\tt h_{a,D_1}$ and $\tt h_{a,D_2}$. From Lemma~\ref{cor:loss-compare},  for any new $\tt x$,  there is more consensus among causal models.  
\begin{align*}
&    \tt \min_{D_1, D_2}\Pr(h_{c,D_1}(x)=h_{c,D_2}(x)) \geq \min_{D_1, D_2}\Pr(h_{a,D_1}(x)=h_{a,D_2}(x))\\
    & \tt \Rightarrow  \min_D \min_{D_1, D_2}\Pr(h_{c,D_1}(x)=h_{c,D_2}(x)) \\
    &\tt \geq \min_D \min_{D_1, D_2}\Pr(h_{a,D_1}(x)=h_{a,D_2}(x))
\end{align*}
Hence worst-case $r^*_c \leq r^*$. From Equation~\ref{eqn:eps-dp}, $\epsilon_c \leq \epsilon$. 

\paragraph{CASE II: }The noise $r_j >= n_i -n_j$, and therefore $r^*>=0$. Following the steps above, we obtain: 
\begin{equation}
\begin{split}
   \tt  P(i|D') = P(r_i \geq r^*+2) & \tt = 0.5\exp(- \gamma)\exp(- \frac{1}{2}\gamma r^*)\\
            & \tt = \exp(- \gamma)(P(r_i \geq r^*)) \\ & \tt= \exp(- \gamma)(P(i|D))
\end{split}
\end{equation}
Thus, the ratio does not depend on $r^*$.   
\begin{align}
    \frac{P(i|D')}{P(i|D)} &= \exp(- \gamma)
\end{align}
Under CASE II when the noise is higher to the differences in votes between the highest and second highest voted class, causal models provide the same privacy budget as associational models.  

%
Thus, overall, $\epsilon_c \leq \epsilon$. 
\end{proof}

\section{Maximum Advantage of a Differentially Private algorithm}

\maxadvdp*
\begin{proof}
Consider a neighboring dataset $S'$ to $\tt S\sim P$ such that $\tt S'$ replaces data point  $x\in S$ with a different point $x'$.  Let $F_S$ and $F_S'$ be differentially private mechanisms trained on $S$ and $S'$ respectively. Following Theorem 1 proof from \cite{yeom2018privacy}, the membership advantage of an adversary $\mathcal{A}$ on a differentially private algorithm $\hat{\mathcal{F}}$ can be written as:
    \begin{equation}
        \begin{split}
            \tt Adv& \tt (\mathcal{A}, \hat{\mathcal{F}}, n, P, P^*)  =\Pr(\mathcal{A}=1|b=1) -\Pr(\mathcal{A}=1|b=0 )  \\
             & \tt = \Pr(\mathcal A(x, F_S) = 1 |x\in S) - \Pr(\mathcal A(x, F_{S'}) = 1|x \in S)
             \end{split}
         \end{equation}
         where $\mathcal A(x, F_S)$ denotes a membership adversary for an algorithm $F_S$ trained on a dataset $S$, and $\tt \mathcal A(x, F_{S'})$ denotes an adversary attacking algorithm $F_{S'}$ trained on $S'$. Without loss of generality for the case where there are an infinite number of  models $h$, assume that models are sampled from a discrete set of K models: $\{ h_1, h_2, ..., h_K\}$. Then using the law of total probability over the models yielded by the algorithms $F_S$ and $F_{S'}$, 
        \begin{equation}
            \begin{split}
        \tt Adv& \tt (\mathcal{A}, \hat{\mathcal{F}}, n, P, P^*)  
        \tt = \sum_{j=1}^{K} \Pr(\mathcal A(x, h_j)=1)\Pr(F_S=h_j)  \\
               &\tt - \sum_{j=1}^{K}  \Pr(\mathcal A(x,h_j)=1)\Pr(F_{S'}=h_j) \\
               &\tt = \sum_{j=1}^{K} \Pr(\mathcal A(x, h_j)=1) [ \Pr(F_S=h_j)  - \Pr(F_{S'}=h_j) ]
        \end{split}
    \end{equation}
    
    where $\Pr(\mathcal A(x,h_j))$ can be interpreted as the weights in a sum. 
    Thus, the above is a weighted sum and will be maximum when positive values for $ \Pr(F_S=h_j)  - \Pr(F_{S'}=h_j)$ have the highest weight and negative values for $   \Pr(F_S=h_j)  - \Pr(F_{S'}=h_j)$ have  zero weight. It follows that to obtain the maximum advantage,  the adversary will  choose $\Pr(\mathcal A(x, h_j)=1)=1$ if $ \Pr(F_S=h_j)  - \Pr(F_{S'}=h_j) >0$, and $0$ otherwise.  In other words, the adversary predicts membership in train set for an input $\tt x$ whenever probability of the given model $h_j$ being generated from $\tt F_S$ is higher than it being generated from $F_{S'}$. 

Let $H_+ \subset H$ be the set of models for which $\Pr(F_S=h_j)  - \Pr(F_{S'}=h_j)>0$. Similarly,  let $H_- = H \setminus H_+$ be the set of models for which being generated from $F_{S'}$ is more probable: $\Pr(F_{S'}=h_j)  - \Pr(F_{S}=h_j)\geq 0$.   The worst-case adversary selects datasets $\tt S \sim P,S'$ such that the sum  $ \sum_{h_j \in H_+}\Pr(F_S=h_j)  - \Pr(F_{S'}=h_j) $ is the highest. Therefore, for a given distribution $P$ and a differentially private algorithm $F_S$ learnt on $S\sim P$, we can write the maximum membership advantage as,
    \begin{equation} \label{eqn:max-adv-dp}
        \begin{split}
            \tt \max_{\mathcal A, P^*} &\tt Adv(\mathcal A, F_S, n, P, P^*)\\
                     &\tt = max_{S,S'} \sum_{h_j \in H_+} [ P(F_S=h_j)  - P(F_S'=h_j) ]   \\
             &\tt = max_{S,S'} P(F_S \in H_+) - P(F_S' \in H_+)\\
             & \tt     = max_{S,S'}  P(F_S \in H_+) - (1- P(F_S' \in H_-))     \\
             &\tt =  max_{S,S'} 2\Pr(F_S \in H_+) -1 
\end{split}
\end{equation}
where the last equality is since the Laplace noise is  added to $F_{S}$ and $F_{S'}$ is from identical distributions and thus $\Pr(F_S \in H_+)= \Pr(F_{S'} \in H_-) $. 
Equation~\ref{eqn:max-adv-dp} provides the maximum membership advantage for any $\epsilon$-DP mechanism $F_S$ with Laplace noise.

We next show that Eqn.~\ref{eqn:max-adv-dp} for a  causal mechanism $F_c$ is not greater than that for an associational mechanism $F_a$. 
Let $\Pr(F_S)$ be a Laplace distribution with mean at $h_{S,min}$ and $\Pr(F_{S'})$ be a Laplace distribution with mean $h_{S',min}$ with identical scale/noise parameter. We would like to find the  boundary model $h^{\dagger}$ of the set $H_+$ where $ P(F_S=h_j)  = P(F_S'=h_j)$, since $\Pr(F_S \in H_+)$ is the probability under the Laplace distribution cut off at a point $h^\dagger$. Due to identical noise for $F_S$ and $F_S'$ and the symmetry of the Laplace distribution, the boundary $h^\dagger$ corresponds to the midpoint of $h_{S,min}$ and $h_{S',min}$: $\tt 0.5(h_{S,min}+h_{S',min})$. Alternatively, the L1 distance of the boundary $h^{\dagger}$ from the means of the Laplace distributions  can be written as (for a worst $S,S'$),
\begin{equation}
\tt     \norm{h^{\dagger} - h_{S,min}}_1 = \frac{\norm{h_{S',min}-h_{S,min}}_1}{2} = \frac{\Delta  F_S }{2}
\end{equation}
where $\Delta F_S$ is the sensitivity of $F_S$ and the last equality is due to the choice of worst-case $S$ and $S'$. 

    From Lemma~\ref{lem:theta-sensitivity}, we know that sensitivity of a causal learning function is lower than that of an associational learning model. 
    \begin{equation}
       \tt  \Delta \hat{\mathcal F}_{c,S} \leq \Delta \hat{\mathcal F}_{a,S}
    \end{equation}
    Thus, L1 distance of $h_c^\dagger$ from the mean $h_{c,S}^{min}$ is lower for a causal learning function, and thus its PDF $\Pr(F_S=h_j)$ is higher. Now the set H+ is a one-sided boundary on the values of $h$ and includes the mean of the Laplace distribution. Given symmetry of the  Laplace distribution, probability of $F_S$ lying in $H_+$, $\tt \Pr(F_S \in H_+)$ should be lower whenever the PDF at the one-sided boundary is higher. Therefore, $P(F_S in H+)$ is lower for a causal mechanism than the associational learning mechanism. 
    \begin{equation}
       \tt  \Delta \hat{\mathcal F}_{c,S} \leq \Delta \hat{\mathcal F}_{a,S}
       \Rightarrow
       \Pr(\hat{\mathcal F}_{c,S} \in H_+) \leq \Pr(\hat{\mathcal F}_{a,S} \in H_+)
    \end{equation}
    Finally, using the above equation in Eqn.~\ref{eqn:max-adv-dp} shows that the maximum membership advantage of a causal model is lower. 
    \begin{equation}
        \tt \max_{\mathcal A, P^*}  Adv(\mathcal A, \hat{\mathcal F}_{c,S}, n, P, P^*) \leq 
        \tt \max_{\mathcal A, P^*} Adv(\mathcal A, \hat{\mathcal F}_{a,S}, n, P, P^*)
    \end{equation}
\end{proof}

\section{Infinite sample robustness to membership inference attacks} \label{appsec:infty-attack}

\inftyattack*
\begin{proof}
    $\tt h_{c,S}^{min}$ can be obtained by empirical risk minimization.
\begin{equation}
    \tt    h_{c,S}^{min} = \arg \min_{h\in \mathcal{H}_C}  \mathcal L_{S\sim P}(h, y)  = \arg \min_{h\in \mathcal{H}_C}  \frac {1}{n} \sum_{i=1}^n L_{x_i}(h,y) 
\end{equation}
As $\tt |S| = n \rightarrow \infty$, $\tt h_{c,S}^{min} \rightarrow h_{c,P}^{OPT}$. Suppose now that there exists another $\tt S'$ of the same size such that $\tt S'\sim P^*$. Then  as $\tt |S'|\rightarrow \infty$, $\tt h_{c,S'}^{min} \rightarrow h_{c,P^*}^{OPT}$.

From Theorem~\ref{thm:causal-gen-bounds}, $\tt h_{c,P}^{OPT} = h_{c,P^*}^{OPT}$. Thus, 
\begin{equation} \label{eqn:infty-equal-h}
 \tt   \lim_{n \rightarrow \infty} h_{c,S}^{min} =\lim_{n \rightarrow \infty} h_{c,S'}^{min}
\end{equation}
Equation~\ref{eqn:infty-equal-h} implies that as $\tt n\rightarrow \infty$, the learnt $\tt h_{c,S}^{min}$ does not depend on the training set, as long as the training set is sampled from any distribution $\tt P^*$ such that $\tt P(Y|X_C)=P^*(Y|X_C)$.
That is, being the global minimizer over  distributions, $\tt h_{c,S}^{min}= h_{c,P}^{OPT}$ does not depend on its training set. Therefore, $\tt h_{c,S}^{min}(x)$ is independent of  whether $\tt x$ is in the training set. 
\begin{align}
    \begin{split}
\tt    \lim_{n \rightarrow \infty} Adv(\mathcal{A}, h_{c,S}^{min}) & \tt= \Pr(\mathcal{A}=1|b=1)- \Pr(\mathcal{A}=1|b=0)\\
                                                                & \tt= \mathbb{E}[\mathcal{A}|b=1] - \mathbb{E}[\mathcal{A}|b=0] \\
                                                                &\tt = E[\mathcal{A}(h_{c,S}^{min})|b=1] - \mathbb{E}[\mathcal{A}(h_{c,S}^{min})|b=0]\\
                                                                &\tt = \mathbb{E}[\mathcal{A}(h_{c,S}^{min})] - \mathbb{E}[\mathcal{A}(h_{c,S}^{min})]
                          =0
\end{split}
\end{align}
where the second last equality follows since any function of $\tt h_{c,S}^{min}$ is independent of the training dataset.
\end{proof}

\section{Robustness to Attribute Inference Attacks}
\label{model-inversion}

\attribute*

\begin{proof}
    The proof follows trivially from definition of a causal model. $\tt h_c$ includes only causal features during training.  
    Thus, $\tt h(x)$ is independent of all features not in $\tt X_{c}$. 
    \begin{align}
      \tt  Adv(\mathcal{A}, h) & \tt = Pr(\mathcal{A}=1|x_s=1) - Pr(\mathcal{A}=1|x_s=0) \nonumber \\
                            & \tt = Pr(\mathcal{A}(h)=1|x_s=1) - Pr(\mathcal{A}(h)=1|x_s=0) \nonumber \\
                            &\tt = Pr(\mathcal{A}(h)=1) - Pr(\mathcal{A}(h)=1)=0 \nonumber
    \end{align}
\end{proof}

\section{Dataset Distribution}
\label{app:exp}
The target model is trained using the synthetic training and test data generated using the bnlearn library. We first divide the total dataset into training and test dataset in a 60:40 ratio. Further, the output of the trained model for each of the training and test dataset is again divided into 50:50 ratio. The training set for the attacker model consists of confidence values of the target model for the training as well as the test dataset. The relation is explained in Figure~\ref{fig:data_dist}.
Note that the attacker model is trained on the confidence output of the target models. 
\begin{figure}[h]
\centering
        \includegraphics[scale=0.35]{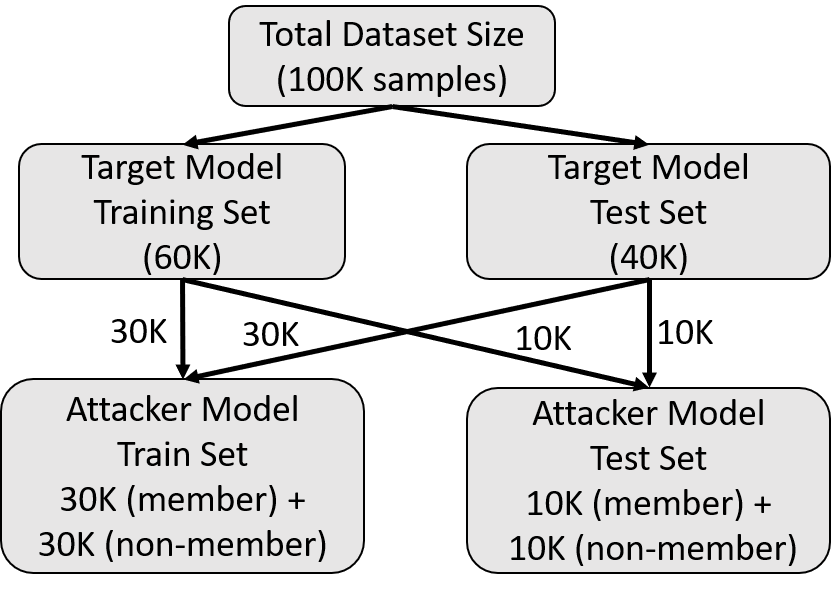}
        \caption{Dataset division for training target and attacker models.}
\label{fig:data_dist}
  \end{figure}

\end{document}